\numberwithin{equation}{section}
\theoremstyle{plain}
\newtheorem{theorem}{Theorem}[section]
\newtheorem{lemma}[theorem]{Lemma}
\newtheorem{proposition}[theorem]{Proposition}
\newtheorem{corollary}[theorem]{Corollary}
\theoremstyle{remark}
\newtheorem{properties}{Property}[section]
\newtheorem{remark}{Remark}[section]
\newcommand{\Reals}{\mathbb R}
\newcommand{\RealsP}{\Reals_+}
\newcommand{\bP}{\mathbb{P}}
\newcommand{\Prob}[1]{\mathbb{P} \left( #1 \right)}
\newcommand{\E}{\mathbb{E}}
\newcommand{\Var}{\text{Var}}
\newcommand{\poly}{{\sf poly}}
\newcommand{\Ind}{\mathbb{I}}
\newcommand{\Cov}{\text{\normalfont Cov}}
\DeclareMathOperator*{\argmin}{arg\,min}
\newcommand{\sgn}{\text{\normalfont sign}}
\newcommand{\dist}{\text{\normalfont {\sf dist}}}
\newcommand{\bias}{\text{\normalfont {\sf bias}}}
\newcommand{\meas}{\text{\normalfont {\sf meas}}}
\newcommand{\cA}{\mathcal{A}}
\newcommand{\cB}{\mathcal{B}}
\newcommand{\cF}{\mathcal{F}}
\newcommand{\cG}{\mathcal{G}}
\newcommand{\cH}{\mathcal{H}}
\newcommand{\cQ}{\mathcal{Q}}
\newcommand{\cS}{\mathcal{S}}
\newcommand{\cU}{\mathcal{U}}
\newcommand{\cV}{\mathcal{V}}
\newcommand{\cX}{\mathcal{X}}
\newcommand{\tN}{\tilde{N}}
\newcommand{\tW}{\tilde{W}}
\newcommand{\tX}{\tilde{X}}
\newcommand{\MSE}{\text{\normalfont MSE}}
\newcommand{\extra}{\kappa}
\newcommand{\radius}{t} 
\newcommand{\Ap}{A^\prime}
\newcommand{\Mp}{{M_{A}}}
\newcommand{\Mpp}{{M_{B}}} 
\newcommand{\Mppp}{{M_2}} 
\newcommand{\cEp}{{\Omega_1}}
\newcommand{\cEpp}{{\Omega_1}} 
\newcommand{\cEppp}{{\Omega_2}} 
\newcommand{\pert}{\varepsilon}
\newcommand{\bpert}{\boldsymbol{\pert}}
\begin{document}
	\title{Robust Max Entrywise Error Bounds for Tensor Estimation from Sparse Observations via Similarity Based Collaborative Filtering}
	\author{Devavrat Shah  \footnote{Massachusetts Institute of Technology, Cambridge, MA;
			\texttt{devavrat@mit.edu}}
			\,\, Christina Lee Yu \footnote{Cornell University, Ithaca NY;
			\texttt{cleeyu@cornell.edu}}}
	\date{}
	\maketitle

	\begin{abstract}
		Consider the task of estimating a 3-order $n \times n \times n$ tensor from noisy observations of randomly chosen entries in the sparse regime. We introduce a similarity based collaborative filtering algorithm for estimating a tensor from sparse observations and argue that it achieves sample complexity that nearly matches the conjectured computationally efficient lower bound on the sample complexity for the setting of low-rank tensors. Our algorithm uses the matrix obtained from the flattened tensor to compute similarity, and estimates the tensor entries using a nearest neighbor estimator. We prove that the algorithm recovers a finite rank tensor with maximum entry-wise error (MEE) and mean-squared-error (MSE) decaying to $0$ as long as each entry is observed independently with probability $p = \Omega(n^{-3/2 + \extra})$ for any arbitrarily small $\extra > 0$. More generally, we establish robustness of the estimator, showing that when arbitrary noise bounded by $\bpert \geq 0$ is added to each observation, the estimation error with respect to MEE and MSE degrades by ${\sf poly}(\bpert)$. Consequently, even if the tensor may not have finite rank but can be approximated within $\bpert \geq 0$ by a finite rank tensor, then the estimation error converges to ${\sf poly}(\bpert)$. Our analysis sheds insight into the conjectured sample complexity lower bound, showing that it matches the connectivity threshold of the graph used by our algorithm for estimating similarity between coordinates.
	\end{abstract}

\section{Introduction}

Tensor estimation involves the task of predicting underlying structure in a high-dimensional tensor structured dataset given only a sparse subset of observations. 
We call this ``tensor estimation'' rather than the conventional ``tensor completion'' as the goal is not only to fill missing entries but also to estimate entries whose noisy observations are available.
Whereas matrices represent data associated to two modes, rows and columns, tensors represent data associated to general $d$ modes. For example, a datapoint collected from a user-product interaction an e-commerce platform may be associated to a user, product, and date/time, which could be represented in a 3-order tensor where the three modes would correspond to users, products, and date/time. Image data is also naturally represented in a 3-order tensor format, with two modes representing the location of the pixel, and the third mode representing the RGB color components. Video data furthermore introduces a fourth mode indexing the time. 
Dynamic network data can also be represented in a tensor with one mode indexing the time and the other two modes indexing the nodes in the network.

There are many applications in which the dataset inherently has a lot of noise or is very sparsely observed. For example, e-commerce data is typically very sparse as the typical number of products a user interacts with is very small relative to the entire product catalog; furthermore the timepoints at which the user interacts with the platform may be sparse. When the dataset can be represented as a matrix, equivalent to a 2-order tensor, there has been a significant amount of research in designing practical algorithms and studying statistical limits for matrix estimation, a critical step in data pre-processing. Under conditions on uniform sampling and incoherence, the minimum sample complexity for estimation has been tightly characterized and achieved by simple algorithms. It is a natural and relevant question then to consider whether the techniques developed can extend to higher order tensors as well. 

The previous literature has primarily focused on attaining consistency with respect to the mean squared error (MSE). Unfortunately as this is aggregated over the error in the full tensor, it does not translate to consistent bounds on entrywise error, as the error on a single entry could be very large despite the MSE being small due to averaging over many entries. However, entrywise bounds are important in practice as the results of tensor estimation are often used subsequently for decisions that involve comparisons between the estimates of individual entries.

In this work we focus on attaining consistent max entrywise error bounds by extending similarity based collaborative filtering algorithms to tensor estimation. Similarity based collaborative filtering is widely used in industry due to its simplicity, interpretability, and amenability to distributed and parallelized implementations. In the analysis of our proposed algorithm we show that it achieves a sample complexity that nearly matches a conjectured lower bound for computationally efficient algorithms. Perhaps most notably, our theoretical guarantees provide high probability bounds on the maximum entrywise error of the estimate, which is significantly stronger than the typical mean squared error style bounds found in the literature for other algorithms. We also provide error bounds under arbitrary bounded noise, which has implications towards approximately low rank settings. 


\subsection{Related Literature}

Algorithms for analyzing sparse low rank matrices (equivalent to 2-order tensors) where the observations are sampled uniformly randomly have been well-studied. The algorithms consist of spectral decomposition or matrix factorization \cite{KeshavanMontanariOh10a,KeshavanMontanariOh10b,Chatterjee15}, nuclear norm minimization \cite{CandesRecht09,CandesPlan10,CandesTao10,Recht11,NegahbanWainwright11,mazumder2010spectral}, gradient descent \cite{KeshavanMontanariOh10a,KeshavanMontanariOh10b,ChenWainwright15,sun2016guaranteed,ge2016matrix}, alternating minimization \cite{JainNetrapalliSanghavi13,hardt2014understanding}, and nearest neighbor style collaborative filtering \cite{goldberg92,song2016blind,li2020blind,BorgsChayesLeeShah17,borgs2017iterative}. These algorithms have been shown to be provably consistent as long as the number of observations is $\Omega(r n \,\poly (\log n))$ for the noiseless
setting where $r$ is the rank and $n$ is the number of rows and columns \cite{KeshavanMontanariOh10a,CandesRecht09}; similar results have been attained under additive Gaussian noise \cite{KeshavanMontanariOh10b,CandesPlan10} and generic bounded noise \cite{Chatterjee15, BorgsChayesLeeShah17}. Lower bounds show that $\Omega(rn)$ samples are necessary for consistent estimation, and $\Omega(rn \log(n))$ samples are necessary for exact recovery \cite{CandesPlan10,CandesTao10}, implying that the proposed algorithms are nearly sample efficient order-wise up to the information theoretic lower bounds. 

There are results extending matrix estimation algorithms to higher order tensor estimation, assuming the tensor is low rank and that observations are sampled uniformly at random. The earliest approaches simply flatten or unfold the tensor to a matrix and subsequently apply matrix estimation algorithms \cite{liu2013tensor,gandy2011tensor,tomioka2010estimation,tomioka2011statistical}. A $d$-order tensor where each dimension is length $n$ would be unfolded to a $n^{\lfloor d/2 \rfloor} \times n^{\lceil d/2 \rceil}$ matrix, resulting in a sample complexity of $O(n^{\lceil d/2 \rceil} \,\poly(\log n) )$, significantly larger than the natural statistical lower bound that is linear with $n$ due to the model being parameterized by linear in $n$ latent variables. When $d$ is odd, for example $d = 3$ the sample complexity for this naive approach scales as $O(n^2 \,\poly(\log n))$. 

Subsequent works have improved upon this sample complexity, requiring only $\Omega(n^{3/2} \,\poly(\log n) )$ observed entries for a 3-order tensor \cite{jain2014provable, bhojanapalli2015new,yuan2016tensor,BarakMoitra16, PotechinSteurer17,MontanariSun18,xia2017polynomial, xia2017statistically}. \cite{jain2014provable, bhojanapalli2015new} analyzes the alternating minimization algorithm for exact recovery of the tensor given noiseless observations and finite rank $r = \Theta(1)$. 
\cite{BarakMoitra16, PotechinSteurer17} use the sum of squares (SOS) method, and \cite{MontanariSun18} introduces a spectral method. Both of these latter algorithms can handle noisy observations and overcomplete tensors where the rank is larger than the dimension. \cite{xia2017polynomial, xia2017statistically} furthermore characterize the minimax optimal rate for the MSE and achieve it using spectral initialization followed by power iteration.
For a general $d$-order tensor these results translate into a sample complexity scaling as $O(n^{d/2})$, improving upon $O(n^{\lceil d/2 \rceil}$.
\cite{yuan2017incoherent} prove that tensor nuclear norm minimization can recover the underlying low-rank $d$-order tensor with $O(n^{3/2} \,\poly(\log n))$ samples  in the noiseless setting; however, the algorithm is not efficiently computable as computing tensor nuclear norm is NP-hard \cite{friedland2014nuclear}. 

\cite{BarakMoitra16} conjecture that any polynomial time estimator for a 3-order tensor must require $\Omega(n^{3/2})$ samples, based on a reduction between tensor estimation for a rank-1 tensor to the random 3-XOR distinguishability problem. They argue that if using the sum of squares hierarchy to construct relaxations for tensor rank, any result that achieves a consistent estimator with fewer than $n^{3/2}$ samples will violate a conjectured hardness of random 3-XOR distinguishability. Information theoretic bounds imply that one needs at least $\Omega(d r n)$ observations to recover a $d$-order rank $r$ tensor, consistent with the degrees of freedom or number of parameters in the model. Interestingly, this implies a conjectured gap between the computational and statistically achievable sample complexities, highlighting how tensor estimation is distinctly more difficult than matrix estimation.

The majority of the results in tensor estimation provide bounds on the mean squared error, which aggregates errors across entries. In contrast our results will also provide bounds on the maximum entrywise error. There has been recent interest on developing matrix estimation methods that provide max entrywise bounds using a leave one out analysis, cf. \cite{abbe2020entrywise}, \cite{chen2019spectral}, \cite{zhong2018near}, \cite{cai2019subspace}, \cite{ma2018implicit} and \cite{DingChen20}. Subseqeuntly \cite{cai2019nonconvex, cai2020uncertainty} extended the leave one out analysis to the tensor setting to obtain entrywise error bounds for a gradient descent algorithm with spectral initialization. The analysis and algorithm in our paper is significantly different than their work, as it results from showing high probability guarantees on the similarity computation, which is akin to a spectral algorithm, followed by a nearest neighbor analysis. Our results suggests that a combination of spectral analysis and nearest neighbor smoothing can achieve entrywise consistent estimates without further gradient descent refinements. As nearest neighbor methods are still widely used in industry, understanding their theoretical performance is of interest.

\subsection{Contribution} Our results answer the following unresolved questions in the literature. 
\begin{itemize}
	\item[1.] Is there a computationally efficient estimator that can provide a consistent estimation 
	of low-rank tensor with respect to maximum entry-wise error (MEE) with minimal sample complexity 
	of $\Omega(n^{\frac32})$ in the presence of noise?
	\item[2.] Is there an extension of matrix estimation collaborative filtering algorithm for the setting of 
	tensor estimation that can provide consistent estimation with such minimal sample complexity? 
	\item[3.] Can the estimator be robust to adversarial bounded noise in the observations? 
\end{itemize}

To begin with, we 
propose an algorithm for a symmetric $3$-order tensor estimation which generalizes a 
nearest neighbor collaborative filtering algorithm for sparse matrix estimation introduced in \cite{BorgsChayesLeeShah17}. 
Na\"ively applying the matrix estimation algorithm in \cite{BorgsChayesLeeShah17} to the $n \times n^2$ matrix obtained by unfolding the $3$-order tensor would require $\Omega(n^2)$ samples, far more than the desired sample complexity of $\Omega(n^{\frac32})$. However, we argue that such a matrix obtained
from the unfolded tensor can be used, after non-trivial modification, to compute the similarities between rows accurately using $\Omega(n^{\frac32 + \extra})$ samples for any positive $\kappa > 0$. After computing these 
similarities we can achieve consistent estimation via a nearest neighbor estimator by additionally using the tensor structure. 

Specifically, we establish that the mean squared error (MSE) in the estimation converges to $0$ as long as $\Omega(n^{3/2 + \extra})$ random samples are observed for any $\extra > 0$ for tensor with rank $r = \Theta(1)$. 
We further establish a stronger guarantee that the maximum entry-wise error (MEE) converge to $0$ with high probability with similar sample
complexity of $\Omega(n^{3/2 + \extra})$. Thus, this simple iterative collaborative filtering algorithm nearly achieves the conjectured computational sample complexity lower bound of  $\Omega(n^{3/2})$ for tensor estimation. While we present the results for symmetric tensors, our method and analysis can extend to asymmetric tensors, which we discuss in Section \ref{ssec:result.disc}. 

Beyond low-rank tensors, our results hold for tensors with potentially countably infinite rank as long as they can be well approximated by a 
low-rank tensor. Specifically, if the tensor can be approximated with $\bpert \geq 0$ with respect to max-norm by a rank $r = \Theta(1)$
tensor, then the MSE converges to $\poly(\bpert)$ and MEE converges to  $\poly(\bpert)$ with high probability 
as long as $\Omega(n^{3/2 + \extra})$ random samples are observed for any $\extra > 0$. This follows as a 
consequence of the robustness property of the algorithm that we establish: 
if  arbitrary noise bounded by $\bpert \geq 0$ is added to each observation, then the 
estimation error with respect to MEE and MSE degrades by ${\sf poly}(\bpert)$.



To establish our results, the key analytic tool is utilizing certain concentration properties of a bilinear form 
arising from the local neighborhood expansion of any given coordinate for an asymmetric matrix with 
dimensions $n \times n^2$. This generalizes the analysis of a similar property 
for symmetric matrices in the prior work of \cite{BorgsChayesLeeShah17}. Specifically, establishing 
the desired concentration requires handling dependencies arising in the local neighborhood expansion
of the $3$-order tensor that was absent in the matrix setting considered in \cite{BorgsChayesLeeShah17}.
Subsequently, we require a novel analytic method compared to the prior work. In particular we believe that the proof techniques in Lemma \ref{lemma:NMN_conc} may be useful to other settings in which one may desire a tighter concentration on sums of sparse random variables. As a consequence, we also establish performance guarantees for matrix estimation for asymmetric matrices having dimensions of different order, generalizing beyond of \cite{BorgsChayesLeeShah17}.

The algorithm and analysis also sheds insight on the conjectured lower bound for $3$-order tensor. 
In particular, the threshold of $n^{3/2}$ is precisely the density of observations needed for the 
connectivity in the associated graph that is utilized to calculate similarities. If the graph is disconnected, the 
similarities can not be computed, while if the graph is connected, we are able to show that similarity calculations 
yield an excellent estimator. Understanding this relationship further remains an interesting open research direction. 

A benefit of our algorithm is that it can be implemented in a parallelized manner where the similarities between 
pair of indices are computed in parallel. This lends itself to a distributed, scalable implementation. A naive bound 
on the computational complexity of our algorithm for 3-order tensor is at most $p n^6$. As discussed in Section \ref{ssec:result.disc}, 
with use of approximate nearest neighbors, these can be further improved and made truly implementable.
\footnote{A weaker abbreviated version of this result appeared in \cite{shah2019iterative} without any proofs or discussion. Since the preliminary results, the convergence rates of the error have improved, and we have new results showing a perturbation analysis under arbitrary bounded noise, which extends our results to the approximately low rank setting. We also present a modification of the algorithm that significantly improves the overall computational complexity of the algorithm.}
\section{Preliminaries}

Tensor estimation from sparse observations hinges on an assumption that the true model exhibits low dimensional structure despite the high dimensional representation. However, there is not a unique definition of rank in the tensor setting, as natural generalizations of matrix rank lead to different quantities when extended to higher order tensors. We will focus on two commonly used definitions of tensor rank, the CP rank and the Tucker or multilinear rank.

For a $d$-order tensor $F \in \Reals^{n^d}$, we can decompose $F$ into a sum of rank-1 tensors. For example if $d = 3$, then 
\[F = \sum_{k=1}^r u_{k} \otimes v_k \otimes w_k,\]
where $\{u_k, v_k, w_k\}_{k \in [r]}$ is a collection of length $n$ vectors. The CP-rank is the minimum number $r$ such that $F$ can be written as a sum of $r$ rank-1 tensors, which we refer to as a CP-decomposition. The CP-rank may in fact be larger than the dimension $n$, and furthermore the latent vectors need not be orthogonal as is the case in the matrix setting.

An alternate notion of tensor rank is defined according to the dimension of subspaces corresponding to each mode. Let $F_{(y)}$ denote the unfolded tensor along the $y$-th mode, which is a matrix of dimension $n \times n^{d-1}$. Let columns of $F_{(y)}$ be referred to as mode $y$ fibers of tensor $F$ as depicted in Figure \ref{fig:unfolding}.
\begin{figure*} 
\centering
\includegraphics[width=5in]{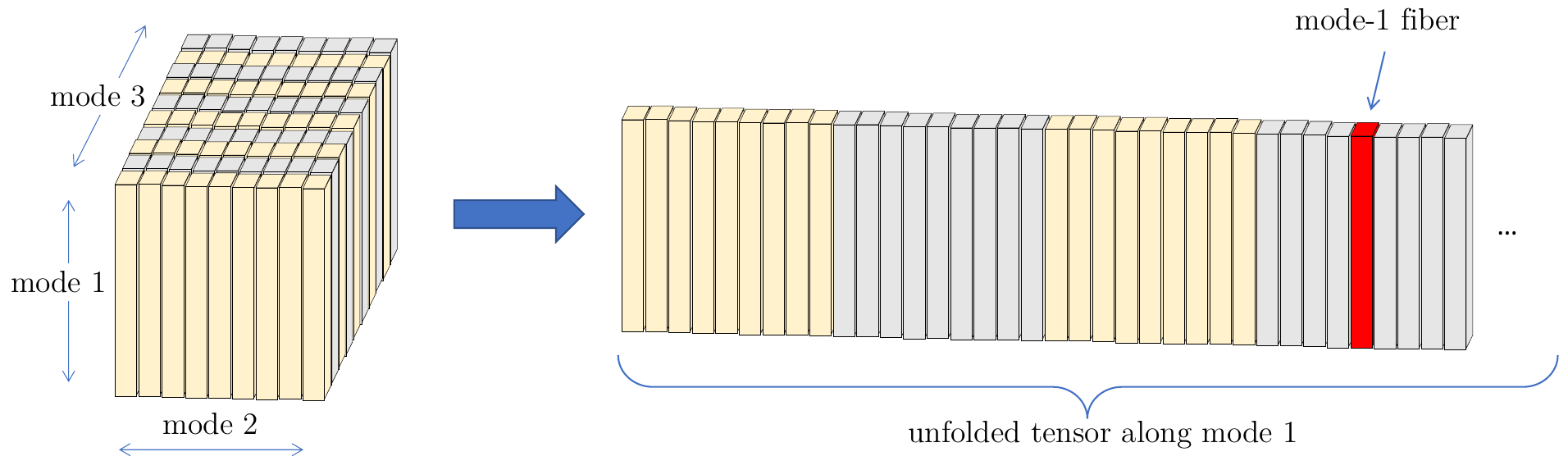}
\caption{Depicting an unfolding of a 3rd order tensor along mode 1. The columns of the resulting matrix are referred to as the mode-1 fibers of the tensor.} \label{fig:unfolding}
\end{figure*}
The Tucker rank, or multilinear rank, is a vector $(r_1, r_2, \dots r_d)$ such that for each mode $\ell \in [d]$, $r_{\ell}$ is the dimension of the column space of $F_{(y)}$. The Tucker rank is also the minimal values of $(r_1, r_2, \dots r_d)$ such that the tensor $F$ can be decomposed according to a multilinear multiplication of a core tensor $\Lambda \in \Reals^{r_1 \times r_2 \times \dots r_d}$ with latent factor matrices $Q_1 \dots Q_d$ for $Q_{\ell} \in \Reals^{n_{\ell} \times r_{\ell}}$, denoted as
\begin{align}
F = (Q_1 \otimes \cdots Q_d) \cdot (\Lambda) 
:= \sum_{{\bf k} \in [r_1]\times[r_2] \cdots \times[r_d]} \Lambda({\bf k}) Q_1(\cdot, k_1) \otimes Q_2(\cdot, k_2) \cdots \otimes Q_d(\cdot, k_d), \label{eq:tucker_decomp}
\end{align}
and depicted in Figure \ref{fig:decomposition}. The higher order SVD (HOSVD) specifies a unique Tucker decomposition in which the factor matrices $Q_1 \dots Q_d$ are orthonormal and correspond to the left singular vectors of the unfolded tensor along each mode \cite{de2000multilinear}. 
\begin{figure*}
\centering
\includegraphics[width=5in]{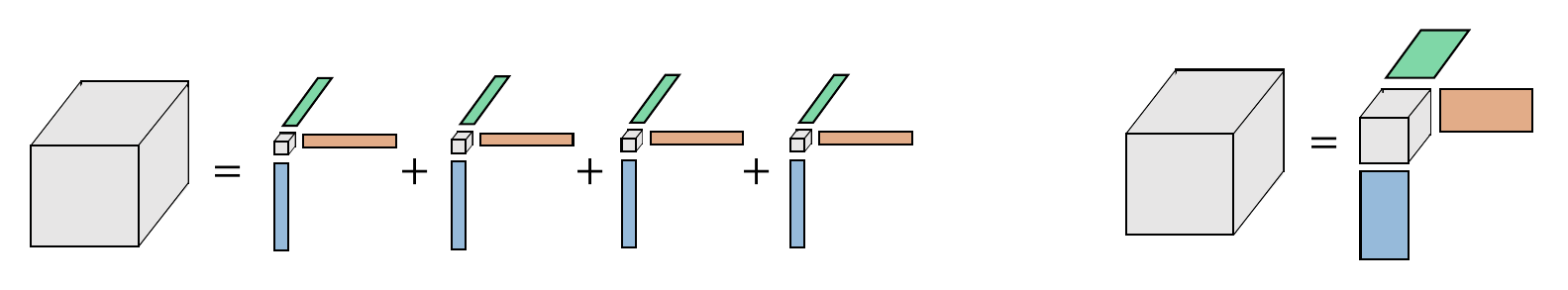}
\caption{(Left) The tensor CP-rank admits a decomposition corresponding to the sum of $r$ rank-1 tensors. \\(Right) The Tucker rank or multilinear rank $(r_1, r_2, \dots r_d)$ admits a decomposition corresponding to a multilinear multiplication of a core tensor of dimensions $(r_1, r_2, \dots r_d)$ with latent factor matrices associated to each mode.}
\label{fig:decomposition}
\end{figure*}

If the CP-rank is $r$, the Tucker-rank is bounded above by $(r,r,\dots r)$ by constructing a superdiagonal core tensor. If the Tucker rank is $(r_1, r_2, \dots r_d)$, the CP-rank is bounded by the number of nonzero entries in the core tensor, which is at most $r_1 r_2 \cdots r_d / (\max_{\ell} r_{\ell})$ \cite{de2000multilinear}. While the latent factors of the HOSVD are orthogonal, the latent factors corresponding to the minimal CP-decomposition may not be orthogonal.
%
For simplicity of presentation, we will consider a limited setting where there exists a decomposition of the tensor into the sum of orthogonal rank-1 tensors. This is equivalent to enforcing that the core tensor $\Lambda$ associated to the Tucker decomposition is superdiagonal, or equivalently enforcing that the latent factors in the minimal CP-decomposition are orthogonal. There does not always exist such an orthogonal CP-decomposition, however this class still includes all rank 1 tensors which encompasses the class of instances used to construct the hardness conjecture in \cite{BarakMoitra16}. Our results also extend beyond to general tensors as well, though the presentation is simpler in the orthogonal setting.

\section{Problem Statement and Model}

Consider an $n \times n \times n$ symmetric tensor $F$ generated as follows: For each $u \in [n]$, sample $\theta_u \sim U[0,1]$ independently. Let the true underlying tensor $F$ be described by a Lipschitz function $f$ evaluated over the latent variables, $F(u,v,w) = f(\theta_u, \theta_v, \theta_w)$ for $u, v, w \in [n]$. Without 
loss of generality, we shall assume that $\sup_{u, v, w \in [0,1]} |f(\theta_u, \theta_v, \theta_w)| \leq 1$. For example, if the coordinates of one mode of the tensor represent users or products in an e-commerce platform, one can view the latent variables associated to a coordinate as the unknown “type” of the user/product, which can be thought of as sampled i.i.d. from an underlying population distribution. The latent function $f$ would then describe the expected observed interaction between units of type $\theta_u, \theta_v$, and $\theta_w$.

Let $M$ denote the observed symmetric data tensor, and let $\Omega \subseteq [n]^3$ denote the set of observed indices. Due to the symmetry, it is sufficient to restrict the index set to triplets $(u,v,w)$ such that $u \leq v \leq w$, as the datapoint is identical for all other permutations of the same triplet. The datapoint at each of these distinct triplets $\{(u,v,w): u \leq v \leq w\}$ is observed independently with probability $p \in (0,1]$, where we assume the observation is corrupted by mean zero independent additive noise terms. For $(u,v,w) \in \Omega$,
\begin{align}\label{eq:meas.def}
M(u,v,w) & = F(u,v,w) + \epsilon_{uvw},
\end{align}
and for $(u,v,w) \notin \Omega$, $M(u,v,w) = \star$\footnote{The notation of $\star$ is used to denote the missing observation. When convenient, we shall replace $\star$ by $0$ for the purpose of computation. }. We shall assume that $|M(u,v,w)| \leq 1$ with probability $1$. We allow $\epsilon_{uvw}$ to have different distributions for different distinct triplets $(u,v,w)$ as long as it is uniformly bounded so as to satisfy the boundedness constraints on $|M(u,v,w)|$ and $|F(u,v,w)|$. When the observations $M(u,v,w)$ are binary, this model is equivalent to the $3$-uniform simple lipschitz hypergraphon in \cite{balasubramanian2021nonparametric}, which states a generative model for hypergraphs where the hyperedges consist of size 3 vertex sets. In this setting, $F(u,v,w) \in [0,1]$ would represent the probability of observing the hyperedge $(u,v,w)$, and $M(u,v,w) \in \{0,1\}$ would indicate the presence of the hyperedge $(u,v,w)$. The noise term $\epsilon_{uvw}$ is clearly bounded since the observations are binary. For any application in which the observations $M(u,v,w)$ are bounded, then $F(u,v,w) = \E[M(u,v,w)]$ would also be bounded, such that the boundedness on the noise $\epsilon_{uvw}$ would be reasonable. However, for applications in which $M(u,v,w)$ may not be bounded, or the almost sure bound is very large, we can extend our analysis to allow for sub-Gaussian noise $\epsilon_{uvw}$ rather than uniformly bounded noise, which is further discussed in section \ref{sec:proof}.

The goal is to recover the underlying tensor $F$ from the incomplete noisy observation $M$ so that the mean squared error (MSE) is small, where MSE for an estimate $\hat{F}$ is defined as
\begin{align} \label{eq:MSE_def}
\text{MSE}(\hat{F}) := \E\left[\tfrac{1}{n^3} \textstyle\sum_{(u,v,w) \in [n]^3} (\hat{F}(u,v,w) - F(u,v,w))^2\right].
\end{align}
We will also be interested in the maximum entry-wise error (MEE) defined as 
\begin{align} \label{eq:MEE_def}
\|F - \hat{F}\|_{\max} := \max_{(u,v,w) \in [n]^3} |\hat{F}(u,v,w) - F(u,v,w)|.
\end{align}

\subsection{Finite spectrum}
Consider the setting where the function $f$ has finite spectrum. That is, 
\[f(u, v, w) = \textstyle\sum_{k=1}^r \lambda_k q_k(\theta_u) q_k(\theta_v) q_k(\theta_w),\] 
where $r = \Theta(1)$ and $q_k(.)$ denotes the orthonormal $\ell_2$ eigenfunctions, satisfying
$\int_0^1 q_k(\theta)^2 d\theta = 1$
and
$\int_0^1 q_k(\theta) q_h(\theta) d\theta = 0 \text{ for } k \neq h$.
Assume that the eigenfunctions are bounded, i.e. $|q_k(\theta)| \leq B$ for all $k \in [r]$.

Let $\Lambda$ denote the diagonal $r \times r$ matrix where $\Lambda_{kk} = \lambda_k$.
 Let $Q$ denote the $r \times n$ matrix where $Q_{ka} = q_k(\theta_a)$. Let $\cQ$ denote the $r \times \binom{n}{2}$ matrix where $\cQ_{kb} = q_k(\theta_{b_1}) q_k(\theta_{b_2})$ for some $b \in \binom{n}{2}$ that represents the pair of vertices $(b_1,b_2)$ for $b_1 < b_2$. 
 The finite spectrum assumption for $f$ implies that the sampled tensor $F$ is such that,
\[F = \textstyle\sum_{k=1}^r \lambda_k (Q^T e_k) \otimes (Q^T e_k) \otimes (Q^T e_k).\]
That is, $F$ has CP-rank at most $r$. In above and in the remainder of the paper, $e_k$ 
denotes a vector with all $0$s but $k$th entry being $1$ of appropriate dimension (here it is $r$).

\subsection{Approximately finite spectrum} In general, $f$ may not have finite spectrum, e.g. a generic
analytic function $f$. For such a setting, we shall consider $f$ with approximately finite spectrum. Specifically, 
a function $f: [0,1]^3 \to \Reals$, it is said to have $\bpert$-approximate finite spectrum with rank $r$ for $\bpert \geq 0$ 
if there exists a symmetric function $f_r: [0,1]^3 \to \Reals$ such that 
\begin{align}\label{eq:eps.rank}
&\sup_{\theta_u, \theta_v, \theta_w \in [0,1]} | f(\theta_u, \theta_v, \theta_w) - f_r(\theta_u, \theta_v, \theta_w)| \leq \bpert \nonumber \\
&F_r(u,v,w) = f_r(\theta_u, \theta_v, \theta_w) = \textstyle\sum_{k=1}^r \lambda_k q_k(\theta_u) q_k(\theta_v) q_k(\theta_w),
\end{align}
where $r = \Theta(1)$ and $q_k(.)$ denotes the orthonormal $\ell_2$ eigenfunctions as before. That is, 
they satisfy $\int_0^1 q_k(\theta)^2 d\theta = 1$,  $\int_0^1 q_k(\theta) q_h(\theta) d\theta = 0 \text{ for } k \neq h$
and $|q_k(\theta)| \leq B$ for all $k \in [r]$.

The above describe property of $f$ implies that the sampled tensor $F$ is has $\bpert$-approximate rank $r$ such that $F_r =  \textstyle\sum_{k=1}^r \lambda_k (Q^T e_k) \otimes (Q^T e_k) \otimes (Q^T e_k)$ and
\begin{align*}
 \| F -F_r \|_{\max} & \leq \bpert.
 \end{align*}

\subsection{Extensions Beyond Orthogonal CP-rank}

The orthogonality conditions on our latent variable decomposition imply that the tensor $F$ can be written as a sum of $r$ rank-1 tensors, where the latent factors are approximately orthogonal. Alternately, this would suggest a Tucker decomposition of the tensor where the core tensor is superdiagonal. Not all tensors admit an orthogonal CP-rank decomposition, but this assumption has also been used in the literature as in \cite{jain2014provable}. This assumption of the existence of an orthogonal CP-rank decomposition can be relaxed as the main property that our algorithm and analysis use is the orthogonal decomposition of the unfolded tensors along each mode. Our algorithm and analysis will still extend to tensors with Tucker rank bounded by $(r,r,r)$. For a general $(r,r,r)$ Tucker rank tensor, we would instead carry out the analysis with respect to the latent orthogonal factors corresponding to the SVD of the unfolded tensor into a matrix, and the algorithm would use the same procedure to estimate similarities along each of the three modes separately. The presentation is stated for the orthogonal symmetric setting for simplicity.

\subsection{Comparision of Assumptions with Literature} 
In the decomposition of the model $f$ when it has finite spectrum, we assume that the functions $q_k$ are orthonormal. This induces a
decomposition of tensor $F$ in terms of $Q \in \Reals^{r \times n}$ with respect to the sampled latent features $\theta \sim U[0,1]$. 
The rank $r$ of the underlying decomposition is assumed to be $\Theta(1)$. We compare and contrast these with those assumed in the tensor estimation literature. 

Most literature on tensor estimation do not impose a distribution on the underlying latent variables, but instead assume deterministic `incoherence' style conditions on the latent singular vectors associated to the underlying tensor decomposition. This plays a similar role to our combined assumption of $q_k$ being orthonormal and the latent variables sampled from a uniform distribution so that the mass in the singular vector matrix is roughly uniformly spread. 
For example, the notion of incoherence used in \cite{BarakMoitra16} imposes that the entries of the latent factors are bounded by a constant when the norm of the latent factor vectors scales as $\Theta(\sqrt{n})$. As $\theta_u \sim U[0,1]$, it holds that the latent factor vector $(q_k(\theta_1), q_k(\theta_2), \dots q_k(\theta_n))$ will have norm scaling as $\sqrt{n}$. Due to the the boundedness assumption that $|q_k(\theta)| \leq B$, our model will satisfy incoherence as defined in \cite{BarakMoitra16}.
Some of the literature on tensor estimation allows for overcomplete tensors, i.e. $r > n$. While our finite spectrum 
setup requires $r = \Theta(1)$, the approximately finite spectrum can allow for potentially countably infinite spectrum but
with sharply decaying spectrum so that it has $\bpert$-approximate rank being $r = \Theta(1)$. 

In order to establish our result for the approximately finite spectrum setting, we  perform a perturbation analysis wherein each observed entry is perturbed arbitrarily bounded by $\bpert$ in magnitude: we shall establish that the resulting estimation error is changed by ${\sf poly}(\bpert)$, both with respect to the MSE and max-norm. That is, with respect to arbitrary bounded noise in the observations, we are able to characterize the error induced by our method, which is
of interest in its own right. 

We remark on the Lipschitz property of $f$: the Lipschitz assumption implies that the tensor is ``smooth'', and thus there are sets of rows and columns that are similar to one another. As our algorithm is based on a nearest neighbor style approach we need that for any coordinate $u$, there is a significant mass of other coordinates $a$ that are similar to $u$ with respect to the function behavior. Other regularity conditions beyond Lipschitz that would also guarantee sufficiently many ``nearest neighbors'' would lead to similar results for our algorithm. Lipschitzness also implies approximate low rankness as a Lipschitz function can be approximated by a piecewise constant function, where the number of pieces would then upper bound the rank.
\section{Algorithm}


The algorithm is a nearest neighbor style  in which the first phase is to estimate a distance function between coordinates, denoted $\dist(u,a)$ for all $(u,a) \in [n]^2$. Given the similarities, for some threshold $\eta$, the algorithm estimates by averaging datapoints from coordinates $(a,b,c)$ for which $\dist(u,a) \leq \eta$, $\dist(v,b) \leq \eta$, and $\dist(w,c) \leq \eta$. 

The entry $F(a,b,c)$ depends on a coordinate $a$ through its representation in the eigenspace, given by $Q e_a$. Therefore $f(a,b,c) \approx f(u,v,w)$ as long as $Q e_u \approx Q e_a$, $Q e_v \approx Q e_b$, and $Q e_w \approx Q e_c$. Ideally we would like our distance function $\dist(u,a)$ to approximate $\|Q e_u - Q e_a\|_2$, but these are hidden latent features that we do not have direct access to.

Let's start with a thought experiment supposing that the density of observations were $p = \omega(n^{-1})$ and the noise variance is $\sigma^2$ for all entries. For a pair of coordinates $u$ and $a$, the expected number of pairs $(b,c)$ such that both $(u,b,c)$ and $(a,b,c)$ are observed is on the order of $p^2 n^2 = \omega(1)$. For fixed $\theta_a, \theta_u$, and for randomly sampled $\theta_b, \theta_c$, the expected squared difference between the two corresponding datapoints reflects the distance between $Q e_a$ and $Q e_u$ along with the overall level of noise,
\begin{align*}
&\E[(M(a,b,c) - M(u,b,c))^2 ~|~ \theta_a, \theta_u] \\
&= \E[(F(a,b,c) - F(u,b,c))^2 ~|~ \theta_a, \theta_u] + \E[\epsilon_{abc}^2 + \epsilon_{ubc}^2] \\
&= \E[(\textstyle\sum_k \lambda_k (q_k(\theta_a) - q_k(\theta_u)) q_k(\theta_b) q_k(\theta_c))^2~|~ \theta_a, \theta_u] + 2\sigma^2 \\
&= \E[\textstyle\sum_k \lambda_k^2 (q_k(\theta_a) - q_k(\theta_u))^2 q_k(\theta_b)^2 q_k(\theta_c)^2~|~ \theta_a, \theta_u] + 2\sigma^2 \\
&= \textstyle\sum_k \lambda_k^2 (q_k(\theta_a) - q_k(\theta_u))^2 + \sigma^2 \\
&= \|\Lambda Q (e_a - e_u)\|_2^2 + 2\sigma^2,
\end{align*}
where we use the fact that $q_k(\cdot)$ are orthonormal. This suggests that approximating $\dist(u,a)$ with the average squared difference between datapoints corresponding to pairs $(b,c)$ for which both $(u,b,c)$ and $(a,b,c)$ are observed.

This method does not attain the $p = n^{-3/2}$ sample complexity, as the expected number of pairs $(b,c)$ for which $(a,b,c)$ and $(u,b,c)$ are both observed will go to zero for $p = o(n^{-1})$. 
This limitation arises due to the fact that when $p = o(n^{-1})$, the observations are extremely sparse. Consider the $n \times \binom{n}{2}$ ``flattened'' matrix of the tensor where row $u$ correspond to coordinates $u \in [n]$, and columns correspond to pairs of indices, e.g. $(b, c) \in [n] \times [n]$ with $b \leq c$. For any given row $u$, there are very few other rows that share observations along any column with the given row $u$, i.e. the number of `neighbors' of any row index is few. If we wanted to exploit the intuition of the above simple calculations, we have to somehow enrich the neighborhood. We do so by constructing a graph using the non-zero pattern of the matrix as an adjacency matrix. This mirrors the idea from \cite{BorgsChayesLeeShah17} for matrix estimation, which approximates distances by comparing expanded depth $2\radius+1$ local neighborhoods in the graph representing the sparsity pattern of the unfolded or flattened tensor. In particular, we will construct a statistic $\dist(u,a)$ such that with high probability it concentrates around $d(u,a)$ for 
\begin{align}
	d(\theta_u,\theta_a) & = \|\Lambda^{2\radius+1} Q (e_u - e_a)\|_2^2 = \sum_{k=1}^r \lambda_k^{4\radius+2} (q_k(\theta_u) - q_k(\theta_a))^2.
\end{align}
As $F(u,v,w) = f(\theta_u, \theta_v, \theta_w) = \sum_{k=1}^r \lambda_k q_k(\theta_u) q_k(\theta_v) q_k(\theta_w)$, we can show that if $d(\theta_u, \theta_a), d(\theta_v,\theta_b),$ and $d(\theta_w,\theta_c)$ are small, then $F(u,v,w)$ will be close to $F(a,b,c)$. The remaining challenge thus how to approximate $d(u,a)$. Consider a length $2\radius$ path in the bipartite graph from $u$ to $a$, denoted by $(u,e^1,x^1,e^2,x^2, \dots x^{t-1},e^t,a)$, where $x^1, \dots x^{t-1}$ are distinct coordinates in $[n] \setminus \{u,a\}$, and $e^1, \dots e^t$ consist of pairs in $[n]^2$ such that the coordinates represented in these pairs are distinct from each other as well as $\{u,a,x^1, \dots x^{t-1}\}$. Let us denote the pair $e^i = (e^i_1, e^i_2)$. Then the product of weights along this path in expectation conditioned on $\theta_u,\theta_a$ is equal to
\begin{align}
&\E\left[\left.M(u,e^1_1,e^1_2) \left(\prod_{i=1}^{t-1} M(e^i_1, e^i_2, x^i) M(x^i,e^{i+1}_1, e^{i+1}_2) \right) M(e^t_1, e^t_2,a) ~\right|~ \theta_u,\theta_a\right] \nonumber \\
&=\E\left[\left.f(\theta_u,\theta_{e^1_1}, \theta_{e^1_2}) \left(\prod_{i=1}^{t-1} f(\theta_{e^i_1}, \theta_{e^i_2},\theta_{x^i}) f(\theta_{x^i},\theta_{e^{i+1}_1}, \theta_{e^{i+1}_2}) \right) f(\theta_{e^t_1}, \theta_{e^t_2},\theta_a) ~\right|~ \theta_u,\theta_a\right] \nonumber \\
&=\sum_{k=1}^r \lambda_k^{2t} q_k(\theta_u) q_k(\theta_a) = e_u^T Q^T \Lambda^{2t} Q e_a. \label{eq:exp_prod_wts}
\end{align}
Therefore, the product of weights along the path connecting $u$ to $a$ is a good proxy of quantity $e_u^T Q^T \Lambda^{2t} Q e_a$, for paths which do not revisit coordinates. The algorithm first constructs the local neighborhood of depth $2t$ centered at each coordinate $u$, and then connects these neighborhoods to form paths of length $4t+2$ in total, where $t$ is chosen such that for every $u,a$ there are sufficiently many paths used to construct the statistic $\dist(u,a)$ to guarantee concentration.
The tensor setting requires an important modification of how one constructs the local breadth-first-search (BFS) trees due to the shared latent variables across different modes, as described in step 3 below. 


\subsection{Formal Description} \label{sec:algo_formal}

We provide a formal description of the algorithm below. The crux of the algorithm is to compute similarity between any pair of indices using the matrix
obtained by flattening the tensor, and then using a nearest neighbor estimator using these similarities between indices over the tensor structure. 
Details are as follows. 

\medskip
\noindent{\bf Step 1: Sample Splitting.} Let us assume for simplicity of the analysis that we obtain $2$ independent fresh observation sets of the data, $\cEp$ and $\cEppp$. Tensors $M_1$ and $\Mppp$ contain information from the subset of the data in $M$ associated to $\cEp$ and $\cEppp$ respectively. $M_1$ is used to compute pairwise similarities between coordinates, and $\Mppp$ is used to average over datapoints for the final estimate. Furthermore, we take the coordinates $[n]$ and split it into two sets, $[n] = \{1,2, \dots, n/2\} \cup \{n/2 + 1, n/2 + 2, \dots n\}$. Without loss of generality, let's assume that $n$ is even. Let $\cV_A$ denote the set of coordinate pairs within set 1 consisting of distinct coordinates, i.e. $\cV_A = \{(b,c)\in[n/2]^2 ~s.t.~ b < c\}$. Let $\cV_B$ denote the set of coordinate pairs within set 2 consisting of distinct coordinates, i.e. $\cV_B = \{(b,c)\in ([n] \setminus [n/2])^2 ~s.t.~ b < c\}$. The sizes of $|\cV_A|$ and $|\cV_B|$ are both equal to $\binom{n/2}{2}$. We define $\Mp$ to be the $n$-by-$\binom{n/2}{2}$ matrix taking values $\Mp(a,(b,c)) = M_1(a,b,c)$, where each row corresponds to an original coordinate of the tensor, and each column corresponds to a pair of coordinates $(b,c) \in \cV_A$ from the original tensor. We define $\Mpp$ to be the $n$-by-$\binom{n/2}{2}$ matrix taking values $\Mpp(a,(b,c)) = M_1(a,b,c)$, where each row corresponds to an original coordinate of the tensor, and each column corresponds to a pair of coordinates $(b,c) \in \cV_B$ from the original tensor. A row-column pair in the matrix corresponds to a triplet of coordinates in the original tensor. We will use matrices $\Mp$ and $\Mpp$ to compute similarities or distances between coordinates, and we use tensor $\Mppp$ to compute the final estimates via nearest neighbor averaging. The data in $\Mp$ is used to construct depth $2t$ local neighborhoods rooted at each coordinate $u$, and $\Mpp$ is used to connect the neighborhoods to form $2t+2$ length paths between any two coordinates $u$ and $a$, which are then used to estimate the similarity between $u$ and $a$. This approach is akin to the technique of ``sprinkling'' used in random graph analysis, in which we first analyze local neighborhoods formed with the edges in $\Mp$, and then ``sprinkle'' the edges in $\Mpp$ to connect these neighborhoods and argue that there are sufficiently many paths then that connect any two coordinates $u$ and $v$.

\medskip
\noindent{\bf Step 2: Construct Bipartite Graph from $\cEp, \Mp$.}
We define a bipartite graph corresponding to the flattened matrix $\Mp$.  Construct a graph with vertex set $[n] \cup \cV_A$. There is an edge between vertex $a \in [n]$ and vertex $(b,c) \in \cV_A$ if $(a,b,c) \in \cEp$, and the corresponding weight of the edge is $M_1(a,b,c)$. Recall that we assumed a symmetric model such that triplets that are permutations of one another will have the same data entry and thus the same edge weight in the associated graph. Figure \ref{fig:example}(a) provides a concrete example of a bipartite graph constructed from tensor observations.

\medskip
\noindent{\bf Step 3: Expanding the Neighborhood.}
Consider the graph constructed from $\cEp, \Mp$. For each vertex $u \in [n]$, we construct a breadth first  search (BFS) tree rooted at vertex $u$ such that the vertices for each depth of the BFS tree consists only of new and previously unvisited coordinates, i.e. if vertex $a \in [n]$ is first visited at depth 4 of the BFS tree, then no vertex corresponding to $(a,b)$ for any $b \in [n]$ can be visited in any subsequent depths greater than 4. Similarly, if $(a',b')$ is visited in the BFS tree at depth 3, then vertices that include either of these coordinates, i.e. $a'$, $b'$, $(a',c)$, or $(b',c)$ for any $c \in [n]$,  can not be visited in subsequent depths greater than 3. This restriction is only across different depths; we allow $(a,b)$ and $(a,c)$ to be visited at the same depth of the BFS tree.

There may be multiple valid BFS trees due to different ordering of visiting edges at the same depth. For example, if a vertex at depth $s$ has edges to two different vertices at depth $s-1$ (i.e. two potential parents), only one of the edges can be chosen to maintain the tree property, but either choice is equally valid. Let us assume that when there is more than one option, one of the valid edges are chosen uniformly at random. Figure \ref{fig:example}(c) shows valid BFS trees for a bipartite graph constructed from an example tensor. 

The graph is bipartite so that each subsequent layer of the BFS tree alternates between the vertex sets $[n]$ and $\cV_A$. Consider a valid BFS tree rooted at vertex $u \in [n]$ which respects the constraint that no coordinate is visited more than once. We will use $\cU_{u,s} \subseteq \cV_A$ to denote the set of vertices at depth $(2s-1)$ of the BFS tree, and we use $\cS_{u,s} \subseteq [n]$ to denote the set of vertices at depth $2s$ of the BFS tree. Let $\cB_{u,s} \subset [n] \cup \cV_A$ denote the set of vertices which are visited in the first $s$ layers of the BFS tree,
\[\cB_{u,s} = \cup_{h \in \lfloor s/2 \rfloor} \cS_{u,h} \cup_{l \in \lceil s/2 \rceil} \cU_{u,l}.\]
We will overload notation and sometimes use $\cB_{u,s}$ to denote the subset of coordinates in $[n]$ visited in the first $s$ layers of the BFS tree, including both visited single coordinate vertices or coordinates in vertices $\cV_A$, i.e.
\begin{align*}
\cB_{u,s} = \cup_{h \in \lfloor s/2 \rfloor} \cS_{u,h} 
\cup \left\{x \in [n] ~\text{s.t.}~ \exists (y,z) \in \cup_{l \in \lceil s/2 \rceil} \cU_{u,l} \text{ satisfying } x \in \{y,z\}\right\}.
\end{align*}
Let $\cG(\cB_{u,s})$ denote all the information corresponding to the subgraph restricted to the first $s$ layers of the BFS tree rooted at $u$. This includes the vertex set $\cB_{u,s}$, the latent variables $\{\theta_a\}_{a \in \cB_{u,s}}$ and the edge weights $\{M_1(a,b,c)\}_{a,(b,c) \in \cB_{u,s}}$.

We define neighborhood vectors which represent the different layers of the BFS tree. Let $N_{u,s} \in [0,1]^n$ be associated to set $\cS_{u,s}$, where the $a$-th coordinate is equal to the product of weights along the path from $u$ to $a$ in the BFS tree for $a \in \cS_{u,s}$. Similarly, let $W_{u,s} \in [0,1]^{\cV_A}$ be associated to set $\cU_{u,s}$, where the $(b,c)$-th coordinate is equal to the product of weights along the path from $u$ to $(b,c)$ in the BFS tree for $(b,c) \in \cU_{u,s}$. For $a \in [n]$, let $\pi_u(a)$ denote the parent of $a$ in the valid BFS tree rooted at vertex $u$. For $(b,c) \in \cV_A$, let $\pi_u(b,c)$ denote the parent of $(b,c)$ in the BFS tree rooted at vertex $u$. We can define the neighborhood vectors recursively,
\begin{align*}
N_{u,s}(a) &= \Mp(a, \pi_u(a)) W_{u,s}(\pi_u(a)) \Ind_{(a \in \cS_{u,s})} \\
W_{u,s}(b,c) &= \Mp(\pi_u(b,c), (b,c)) N_{u,s-1}(\pi_u(b,c)) \Ind_{((b,c) \in \cU_{u,s})}
\end{align*}
and $N_{u,0} = e_u$. Let $\tN_{u,s}$ denote the normalized vector $\tN_{u,s} = N_{u,s} / |\cS_{u,s}|$ and let $\tW_{u,s}$ denote the normalized vector $\tW_{u,s} = W_{u,s} / |\cU_{u,s}|$. 
Figure \ref{fig:example}(d) illustrates the neighborhood sets and vectors for a valid BFS tree. 
Recall from Eq. \eqref{eq:exp_prod_wts} that conditioned on $\theta_u$ and $\theta_a$, $\E[N_{u,s}(a) ~|~\theta_u, \theta_a] = \Prob{a \in \cS_{us}} e_u^T Q^T \Lambda^{2s} Q e_a$. Furthermore the event that $a \in \cS_{us}$ only depends on the presence of the edges as determined by the Bernoulli uniform sampling, and is thus independent from the latent variable $\theta_a$. We will show in a subsequent Lemma \ref{lemma:nhbrhd_vectors} that $e_k^T Q \tN_{u,\radius} \approx e_k^T \Lambda^{2\radius} Q e_{u}$, implying that the neighborhood vector $\tN_{u,\radius}$, which is constructed from products of weights over length $2s$ paths originating at $u$, is a statistic that is approximates $\Lambda^{2\radius} Q e_{u}$. Similar calculations show that $\E[W_{u,s}(b,c) ~|~\theta_u, \theta_b,\theta_c] = \Prob{(b,c) \in \cU_{us}} \sum_{k=1}^r \lambda_k^{2\radius-1} q_k(\theta_u) q_k(\theta_b) q_k(\theta_c)$.

\begin{figure*} 
  \centering
    \includegraphics[width=0.65\textwidth]{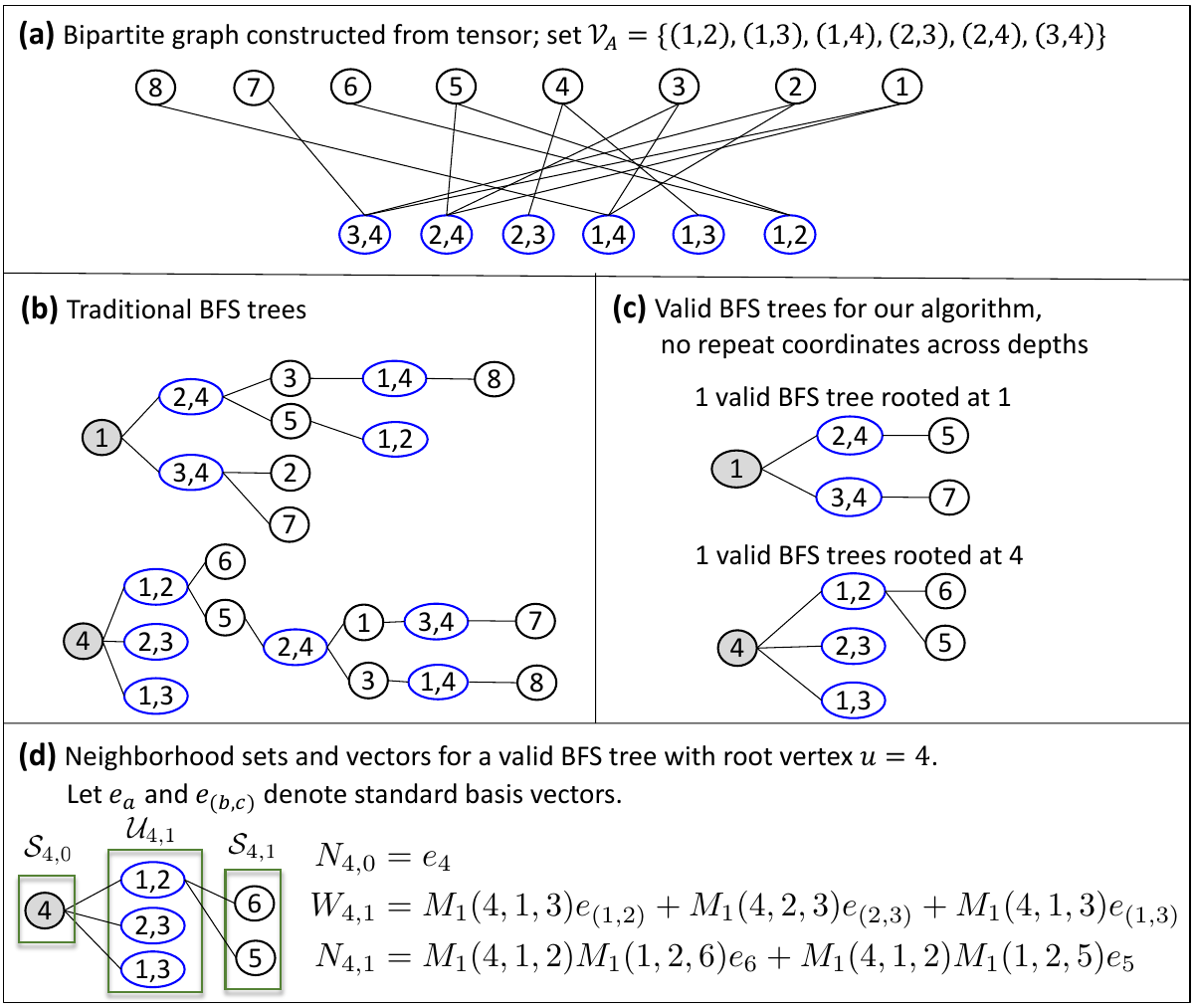}
  \caption[caption]{ Consider a symmetric 3-order tensor with $n=8$,  and the observation set
	$\Omega_1 = \{(1,2,4), (1,2,5), (1,2,6), (1,3,4), (1,4,8), (2,3,4), (2,4,5), $\\$(2,5,6), (3,4,7), (3,5,6)
	\}$. 
Figure {\bf (a)} depicts the bipartite graph constructed from this set of observations. Weights would be assigned to edges based on the value of the observed entry in the tensor $M_1$.
Figure {\bf (b)} depicts the traditional notion of the BFS tree rooted at vertices 1 and 4. Vertices at layer/depth $s$ correspond to vertices with shortest path distance of $s$ to the root vertex. 
Figure {\bf (c)} depicts valid BFS trees for our algorithm, which imposes an additional constraint that coordinates cannot be repeated across depths. For the BFS tree rooted at vertex 1, edges $((2,4),3)$ and $((3,4),2)$ are not valid, as coordinates 2 and 3 have both been visited in layer 2 by the vertices $(2,4)$ and $(3,4)$. For the BFS tree rooted at vertex 4, edge $(5,(2,4))$ is not valid as coordinate 2 has been visited in layer 2 by the vertex $(2,3)$ and coordinate 4 has veen visited in layer 1 by the root vertex 4.
Figure {\bf (d)} depicts the sets $\cS_{u,s}$ and $\cU_{u,s}$ along with the neighborhood vectors $N_{u,s}$ and $W_{u,s}$ for a specific valid BFS tree rooted at vertex $u = 4$.} \label{fig:example}
\end{figure*}

\medskip
\noindent{\bf Step 4: Computing the distances using $\Mpp$.} Let 
\begin{align}\label{eq:t}
\radius & = \Big\lceil \frac{\ln(n)}{2\ln(p^2 n^3)} \Big\rceil.
\end{align}

A heuristic for the distance would be 
\begin{align}\label{eq:dist_heur}
\dist(u,v) &\approx \frac{1}{|\cV_B| p^2} (\tN_{u,t} - \tN_{v,t}) \Mpp \Mpp^T (\tN_{u,t} - \tN_{v,t}) \\
&= \frac{1}{|\cV_B| p^2} \sum_{(\alpha,\beta) \in \cV_B} \sum_{a, b \in [n]^2} (\tN_{u,t}(a) - \tN_{v,t}(a)) \Mpp(a,\alpha,\beta)  \Mpp(b,\alpha,\beta) (\tN_{u,t}(b) - \tN_{v,t}(b)) \nonumber
\end{align}

For technical reasons that facilitate cleaner analysis, we use the following distance calculations. There are two deviations from the equation in \eqref{eq:dist_heur}. First we exclude $a = b$ from the summation. Second we exclude coordinates for $\alpha$ or $\beta$ that have been visited previously in $\cB_{u,2t}$ or $\cB_{v,2t}$. 
Define distance as
\begin{align}
\dist(u,v) &= (Z_{uu} + Z_{vv} - Z_{uv} - Z_{vu}), \label{eq:dist}\\ 
Z_{uv} &= \frac{1}{|\cV_B(u,v,t)| p^2 |\cS_{u,t}| |\cS_{v,t}|} \sum_{(\alpha, \beta) \in \cV_B(u,v,t)} T_{uv}(\alpha, \beta), \nonumber \\
\cV_B(u,v,t) &= \{(\alpha,\beta) \in \cV_B ~s.t.~ \alpha \notin \cB_{u,2t} \cup \cB_{v,2t}, \beta \notin \cB_{u,2t} \cup \cB_{v,2t}\}, \nonumber \\
T_{uv}(\alpha, \beta) &= \sum_{a \neq b \in [n]} N_{u,t}(a) N_{v,t}(b)  \Mpp(a, (\alpha, \beta)) \Mpp(b, (\alpha, \beta)). \label{eq:dist_T}
%
%
\end{align}

We will show in Lemma \ref{lemma:dist} that $\dist(u,v) \approx d(u,v) := \|\Lambda^{2\radius+1} Q (e_u - e_v)\|_2^2$. The estimate is constructed by averaging over the product of weights over paths from $u$ to $v$, where the term $N_{u,t}(a) N_{v,t}(b)  \Mpp(a, (\alpha, \beta)) \Mpp(b, (\alpha, \beta))$ in Eq. \eqref{eq:dist_T} is the product of weights over the path that goes from $u$ to $a$ to $(\alpha,\beta)$ to $b$ to $v$, as $N_{u,t}(a)$ represents the products of weights on the path from $u$ to $a$ and $N_{v,t}(b)$ represents the products of weights on the path from $b$ to $v$. The parameter $t$ is chosen such that there are sufficiently many paths that we are averaging over in order to reduce the noise. In particular, the choice of $t \geq \ln(n)/2\ln(p^2n^3)$ implies that $|\cS_{u,t}| \geq (p^2 n^3)^t = \Omega(n^{1/2})$, such that the number of paths the estimator averages over is approximately $n^2 p^2 |\cS_{u,t}| |\cS_{v,t}| = \Omega(p^2 n^3)$. This rough calculation highlights that $p$ must be $\omega(n^{-3/2})$ to guarantee that the number of paths used to compute $\dist(u,v)$ is increasing with $n$.

\medskip
\noindent{\bf Step 5: Averaging datapoints to produce final estimate.}
Let $\cEppp_{uvw}$ denote the set of indices $(a,b,c)$ such that $a \leq b \leq c$, $(a,b,c) \in \cEppp$, and the estimated distances {$\dist(u,a)$, $\dist(v,b)$, $\dist(w,c)$} are all less than some chosen threshold parameter $\eta$. The final estimate averages the datapoints corresponding to indices in $\cEppp_{uvw}$, 
\begin{align}\label{eq:estimate}
\hat{F}(u,v,w) & = \tfrac{1}{|\cEppp_{uvw}|} \textstyle\sum_{(a,b,c) \in \cEppp_{uvw}} \Mppp(a,b,c).
\end{align}

\subsection{Difference between tensor and matrix setting}

The modifications in the construction of the breadth-first-search (BFS) tree for the tensor setting relative to the matrix setting are critical to the analysis. If we simply considered the classical construction of a BFS tree in the associated bipartite graph (as the matrix setting uses), this would lead to higher variance and bias due to the correlations of vertices sharing common latent variables associated to the same underlying coordinates of the tensor. Alternatively, if one constructed a BFS tree by not allowing any coordinate of the tensor to be visited more than once, this would also lead to suboptimal results as it would throw away too many entries, limiting the computed statistic to only order $n$ data points. Our final algorithm, which allows for vertices with shared coordinates in the same depth of the BFS but not across different depths, is carefully chosen in order to break dependencies across different depths of the BFS tree, while still allowing for sufficient expansion in each depth. 

To extend the algorithm to $d$-order tensors for $d > 3$, we will compute similarities between $u,v$ via a similar computation as described in Steps 2-4 above, except it would be applied to the $n\times n^{d-1}$ matrix and associated bipartite graph corresponding to an unfolding of the tensor. Given the similarity estimates for pairwise coordinates, the final estimate would result from a standard nearest neighbor estimator over the high dimensional tensor. The primary part of the proof that would need to be modified is the analysis of the neighborhood vectors $N_{u,s}$ and $W_{us}$ in step 3, which may involve constraining the growth of the BFS trees such that they extend deep enough before exhausting the visited coordinates.

\section{Main Result} \label{sec:thm}

We provide an upper bound on the mean squared error (MSE) as well as the max entry-wise error (MEE) for the algorithm, showing that both the MSE and the MEE converge to zero as long as $p = n^{-3/2 + \extra}$ for some $\extra > 0$.  Our result implies that the simple variant of collaborative filtering algorithm based on estimating similarities produces a consistent estimator when 
the tensor latent function has finite spectrum or low rank. Further we show that it is robust to arbitrary, additive perturbation in that the estimation error increases gracefully in the amount of perturbation. To the best of our knowledge, such robustness to arbitrary bounded additive noise with respect to
max-norm estimation is first of its kind in the literature on tensor estimation. 

\subsection{Finite spectrum} We establish consistency of our estimator with respect to \text{MSE} and max-norm error of the algorithm
when the underlying $f$ has finite spectrum, i.e. rank $r$ model with $r = \Theta(1)$. 

\begin{theorem}\label{main:thm}
We assume that the function $f$ is rank $r$, $L$-Lipschitz and that $\theta \sim U[0,1]$.
Assume that $p=n^{-3/2 + \extra}$ for some $\extra \in (0,\frac12)$. Let $\radius$ be defined according to \eqref{eq:t}. 
For any arbitrarily small $\psi \in (0, \min(\extra,\frac38))$, choose the threshold
\begin{align*}
\eta &= \Theta\left(n^{- (\extra - \psi)}\right).
\end{align*}
The algorithm produces estimates so that, 
\begin{align*}
\text{MSE} &= O(n^{ - (\extra - \psi)}) = O\left(\frac{n^{\psi}}{(p^2 n^3)^{1/2}}\right),
\end{align*}
and
\begin{align*}
\|F - \hat{F}\|_{\max} & = O(n^{ - (\extra - \psi)/2}), 
\end{align*}
with probability $1 - O\left(n^4 \exp(-\Theta(n^{2\psi}))\right)$. 
\end{theorem}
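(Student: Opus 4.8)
The overall plan is to carry out the standard nearest-neighbor error decomposition, but the heart of the argument is a chain of concentration statements about the BFS-tree neighborhood vectors $\tN_{u,t}$ and the bilinear form $\dist(u,v)$. First I would establish the graph-theoretic backbone: when $p = n^{-3/2+\extra}$, the quantity $p^2 n^3 = n^{2\extra}$ is the expected degree in the flattened bipartite graph $([n], \cV_A)$ (each vertex $a\in[n]$ sees $\sim p\binom{n/2}{2}$ pairs, each pair sees $\sim pn$ singletons, so two hops multiply to $\sim p^2 n^3$). The choice $\radius = \lceil \ln n / (2\ln(p^2 n^3))\rceil$ is exactly the depth at which $(p^2n^3)^{t} \approx \sqrt n$, i.e.\ the BFS tree has $\Theta(\sqrt n)$ leaves at level $2t$ but has not yet saturated the vertex set. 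I would first show, via a branching-process / Chernoff argument on the layered BFS exploration, that with probability $1 - O(n\exp(-\Theta(n^{2\psi})))$ the sets $\cS_{u,t}$ all have size $\Theta((p^2n^3)^t) = \Theta(n^{1/2}\cdot n^{O(\psi)})$, that no coordinate is revisited (so the tree constraint costs only lower-order corrections), and that $|\cB_{u,2t}| = o(n)$, which guarantees $|\cV_B(u,v,t)| = (1-o(1))|\cV_B|$. This is the ``connectivity threshold'' remark in the introduction made quantitative.

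Next I would analyze the neighborhood vector as an approximate eigenvector computation. Writing $M_1$'s flattened version as a noisy sparse version of $Q^T\Lambda\cQ$ (rank $r$), one shows $\E[\Mp \mid \text{graph}] $ acts on a vector essentially like $p$ times the rank-$r$ operator, so $\tN_{u,t} \approx c_{u,t}\, Q^T \Lambda^t (\text{something})$ up to a scalar; more precisely the empirical average $\frac1{|\cS_{u,t}|}\sum_{a\in\cS_{u,t}} (\cdot)$ concentrates around an integral against the latent density. The key lemma — and this is the one I expect to be the main obstacle — is the bilinear-form concentration: conditioned on the two BFS trees,
\[
\frac{1}{|\cV_B(u,v,t)|p^2}\,(\tN_{u,t}-\tN_{v,t})\,\Mpp\Mpp^T\,(\tN_{u,t}-\tN_{v,t}) \;=\; \|\Lambda Q(e_u - e_v)\|_2^2 + (\text{noise floor}) + (\text{error}),
\]
with the error term $O(n^{\psi}/(p^2n^3)^{1/2}) = O(n^{-(\extra-\psi)})$ w.h.p. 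Here the difficulty is that $\Mpp$ is an asymmetric sparse matrix whose two dimensions scale at different orders ($n$ versus $n^2$), the vectors $\tN_{u,t}$ are themselves random and correlated with $\Mpp$ only through the independent sample split, and the diagonal removal ($a\neq b$), the vertex exclusions defining $\cV_B(u,v,t)$, and the truncation at $\phi^2$ all have to be controlled. I would handle this by: (i) conditioning on $\cG(\cB_{u,2t}),\cG(\cB_{v,2t})$ so that $\tN_{u,t},\tN_{v,t}$ are fixed and $\Mpp$ restricted to $\cV_B(u,v,t)$ is a fresh independent sparse matrix; (ii) writing $T_{uv}(\alpha,\beta)$ as a quadratic form in the independent column $\Mpp(\cdot,(\alpha,\beta))$ and applying a Hanson–Wright / Bernstein-type bound for sparse bounded entries to get per-column concentration at scale governed by $\|\tN_{u,t}\|_2\|\tN_{v,t}\|_2 \asymp |\cS_{u,t}|^{-1/2}|\cS_{v,t}|^{-1/2}\cdot(\text{mass})$; (iii) using $\phi = \lceil 2\ln n/\ln(1/pn)\rceil$ to show the truncation is lossless w.h.p.\ (a typical $T_{uv}$ has $O(\phi^2)$ nonzero terms because each column has $\sim pn \cdot \ln n/\ln(1/pn)$ observed rows), so $\tilde T_{uv} = T_{uv}$ on the whole good event; (iv) averaging the $(1-o(1))|\cV_B|$ near-independent columns and invoking Bernstein again, the variance contracting by $1/|\cV_B| = \Theta(n^{-2})$, which after dividing by $p^2$ and $|\cS_{u,t}||\cS_{v,t}|$ yields the stated $n^{-(\extra-\psi)}$ fluctuation.

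Finally I would assemble the estimator guarantee. Given $\dist(u,v) = \|\Lambda Q(e_u-e_v)\|_2^2 + 2\sigma^2\cdot(\text{const}) \pm O(\eta)$ uniformly over all $O(n^2)$ pairs (union bound over the $n^4$-type failure events, each of probability $\exp(-\Theta(n^{2\psi}))$), the threshold $\eta = \Theta(n^{-(\extra-\psi)})$ separates pairs whose true latent features are within $O(\sqrt\eta)$ in the $\|\Lambda Q\cdot\|_2$ metric. By the Lipschitz assumption and $\theta_u\sim U[0,1]$, each coordinate $u$ has $\Theta(\sqrt\eta \cdot n) = \Theta(n^{1-(\extra-\psi)/2})$ neighbors within distance $\eta$ (a standard anti-concentration/mass argument on the latent distribution, using that the $q_k$ are bounded and not everywhere degenerate), so $|\cEppp_{uvw}| = \Theta(p\,(\sqrt\eta\, n)^3) \to \infty$, making the average in \eqref{eq:estimate} well-defined. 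Then $|\hat F(u,v,w) - F(u,v,w)|$ splits into a bias term, bounded by $L$ times the latent-feature diameter of the averaging set, which is $O(\sqrt\eta) = O(n^{-(\extra-\psi)/2})$, and a variance/noise term of order $(|\cEppp_{uvw}|)^{-1/2}$, which is lower order; a Bernstein bound and union bound over the $n^3$ entries give the $\|F-\hat F\|_{\max} = O(n^{-(\extra-\psi)/2})$ claim with the stated probability, and squaring/averaging gives $\text{MSE} = O(n^{-(\extra-\psi)})$. The computational bound $O(pn^6)$ follows by noting each of the $O(n^2)$ distance computations touches $O(|\cV_B|\cdot pn \cdot |\cS_{u,t}|)$ edge-weight products and $|\cS_{u,t}| = O(\sqrt n)$, $|\cV_B| = O(n^2)$, $pn$ per column, giving $O(n^2 \cdot n^2 \cdot pn \cdot \sqrt n) = O(p n^{5.5})$, dominated by $O(pn^6)$; the estimation step is $O(pn^6)$ in the worst naive accounting. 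The main obstacle, as noted, is step (ii)–(iv): pushing the bilinear-form concentration through in the genuinely rectangular ($n \times n^2$) sparse regime where the prior symmetric-matrix analysis of \cite{BorgsChayesLeeShah17} does not directly apply.
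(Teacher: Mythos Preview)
Your overall architecture matches the paper's: BFS-tree growth lemma, concentration of the neighborhood vectors in the eigenspace, concentration of the bilinear form $Z_{uv}$ around the ideal inner product, and then the standard bias--variance accounting for the nearest-neighbor average. Two technical points, however, are off and one methodological difference is worth noting.

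First, the target of $\dist(u,v)$ is not $\|\Lambda Q(e_u-e_v)\|_2^2 + 2\sigma^2$. The exclusion $a\neq b$ in the definition of $T_{uv}(\alpha,\beta)$ kills the diagonal $\E[\epsilon^2]$ contribution, so there is no additive noise floor. More importantly, the neighborhood vector $\tN_{u,t}$ does not approximate $Qe_u$ componentwise; the paper shows (Lemma~\ref{lemma:nhbrhd_vectors}) that $e_k^T Q\tN_{u,t}\approx \lambda_k^{2t}\,q_k(\theta_u)$, so the ideal distance that $\dist(u,v)$ actually estimates is $\|\Lambda^{t+1}Q(e_u-e_v)\|_2^2$, with the eigenvalue weights raised to a power depending on the BFS depth. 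Since $t=\lceil 1/(4\extra)\rceil$ is constant this only changes constants (through $|\lambda_r|^{-t}$ in the bias bound), but it is the correct intermediate object and it is what drives the bias term $\bias(\eta)=3B^2|\lambda_r|^{-t}\sqrt{r\eta}$ in the paper's Property~\ref{ass:good_distances.1}.

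Second, the paper does not argue that the truncation is lossless with high probability; it uses the threshold $\phi^2$ to get bounded increments for a Bernstein bound on $\sum_{\alpha,\beta}\tilde T(\alpha,\beta)$ and separately shows that $\sum_{\alpha,\beta}|\E[\tilde T]-\E[T]|=o(1)$ by a direct binomial-tail calculation on $|Z_u(\alpha,\beta)\cup Z_v(\alpha,\beta)|$. Your ``$\tilde T=T$ on the good event'' route is plausible but is not what is done.

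The main methodological difference is in how the neighborhood-vector concentration is proved. You propose a Hanson--Wright/Bernstein bound on the quadratic form directly. The paper instead sets up a martingale $Y_{u,s}$ over the BFS layers, with $Y_{u,2s}=\lambda_k^{2t-2s}e_k^TQ\tN_{u,s}$ and $Y_{u,2s-1}=\lambda_k^{2t-2s+1}e_k^T\cQ\tW_{u,s}$, and controls each martingale difference $D_{u,s}$ via Hoeffding conditioned on the previous layer (Lemmas~\ref{lem:martingale_diff_1}--\ref{lem:martingale_diff_2}). This layer-by-layer martingale is precisely the ``concentration of bilinear form for asymmetric matrices with dimensions of different order'' advertised in the introduction as the key analytic contribution beyond \cite{BorgsChayesLeeShah17}; a direct Hanson--Wright attack on the full $2t$-fold product may well work, but you would have to handle the dependence across layers (the BFS tree constraint makes successive layers dependent through which vertices remain unvisited), which is exactly what the martingale filtration absorbs cleanly.
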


The error bounds in Theorem \ref{main:thm} imply that our estimator is consistent as long as $p = n^{-3/2 + \extra}$ for some $\extra > 0$, with a MSE that scales as $O(1/p n^{3/2})$. The threshold of $p = \Omega(n^{-3/2})$ is optimal, and furthermore this requirement is precisely the threshold at which the constructed bipartite graph in Step 2 of the algorithm is fully connected. Below the connectivity threshold, the graph will be disconnected into small components with insufficient information to recover the expected value of edges across disconnected components.

In comparison to the literature, \cite{xia2017statistically} prove that the minimax optimal MSE is $O(1/p n^2)$, which is achieved via spectral initialization followed by power iteration \cite{xia2017statistically} or gradient descent \cite{cai2019nonconvex} as long as $p = \Omega(n^{-3/2})$. While our result achieves a similar sample complexity threshold, our MSE rate is suboptimal by a factor of $\sqrt{n}$. A limitation of neighborhood smoothing is that we do not achieve exact recovery under the noiseless setting, and we do not achieve the minimax optimal rates. It is unclear whether the gap is due to a limitation in the analysis or the algorithm. A benefit of our analysis in contrast to the literature is that the neighborhood smoothing approach can more easily deal with approximate low rank settings as arise under smoothness, as presented in Theorem \ref{main:thm.pert}. While low rank is often a useful modeling concept for real world datasets, in reality most real-world applications are likely only approximately low rank rather than exactly low rank.

\subsection{Approximately finite spectrum} For approximate rank $r$ model, we establish a natural perturbation 
result for the algorithm. Specifically, if the underlying model has $\bpert$-approximate rank $r$, then we argue
that the result of Theorem \ref{main:thm} remain true, both with respect to \text{MSE} and max-norm error,  with
perturbation amount of ${\sf poly}(\bpert)$.

\begin{theorem}\label{main:thm.pert}
We assume that the function $f$ has $\bpert$-approximate rank $r$, $L$-Lipschitz and that $\theta \sim U[0,1]$.
Assume that $p=n^{-3/2 + \extra}$ for some $\extra \in (0,\frac12)$. Choosing $\radius$ according to \eqref{eq:t}, it follows that $\radius= \lceil \frac{1}{4\extra} \rceil$. 
For any arbitrarily small $\psi \in (0, \min(\extra,\frac38))$, choose the threshold
\begin{align*}
\eta &=\Theta\left(n^{-(\extra - \psi)} + \radius \bpert (1+\bpert)^{2\radius-1} +  \radius^2 \bpert^2 (1+\bpert)^{4\radius-2}\right).
\end{align*}
The algorithm produces estimates so that, 
\begin{align*}
\text{MSE} &= O(n^{ - (\extra - \psi)} +\radius \bpert (1+\bpert)^{2\radius-1} +  \radius^2 \bpert^2 (1+\bpert)^{4\radius-2}) \\
& = O\left(\frac{n^{\psi}}{(p^2 n^3)^{1/2}} + \radius \bpert (1+\bpert)^{2\radius-1} +  \radius^2 \bpert^2 (1+\bpert)^{4\radius-2} \right),
\end{align*}
and
\begin{align*}
\|F - \hat{F}\|_{\max} & = O(n^{ - (\extra - \psi)/2} + \radius \bpert (1+\bpert)^{2\radius-1} + \sqrt{\radius \bpert (1+\bpert)^{2\radius-1}}), 
\end{align*}
with probability $1 - O\left(n^4 \exp(-\Theta(n^{2\psi}))\right) - O(n^{-2})$. 
\end{theorem}

As the entries of $F$ are normalized such that $\|F\|_{\max} \leq 1$, the bound is meaningful when $\bpert < 1$, in which case the dominating term of the additional error due to the perturbation is linear in $\bpert$, as $\radius$ is a constant.
The proof of Theorem \ref{main:thm.pert} relies on the following observation: the distribution of the data under the setting where the latent function $f$ has $\bpert$-rank $r$ is equivalent to the distribution of data generated according to the rank $r$ approximation of $f$ and then adding a deterministic perturbation to each observation accounting for the difference between $f$ and its rank $r$ approximation $f_r$, which is entrywise bounded by $\bpert$. In particular, the proof of Theorem \ref{main:thm.pert} shows that under arbitary deterministic perturbation of a rank $r$ model where the perturbation is bounded by $\bpert$, the estimation error
is perturbed by at most ${\sf poly}(\bpert)$. As a byproduct, our result proves that our estimator is robust to arbitrary deterministic bounded noise in the observations. 

The approximation guarantee depends on the spectral decay. Since the analysis allows any arbitrary adversarial model for the $\bpert$ deviation from a low rank model, the resulting guarantee depending polynomially in $\bpert$ is qualitatively best one can hope for. By definition, the minimal error must be lower bounded by $\bpert$, but determining the best achievable error as a function of $\bpert$ is an important open question for future investigation. It is worth noting that no other prior work addresses such a robust error model.

\subsection{Reducing Computational Complexity}

The computational complexity can be estimated by analyzing steps 3-5 of the algorithm. Step 3 costs $O(pn^4)$, as there are $n$ BFS trees to construct, which each take at most $pn^3$ edge traversals as there are at most order $pn^3$ edges in the constructed graph. 
Step 4 costs $O(p^2 n^6)$ as there are order $n^2$ pairwise distances to compute, and each computed distance involves sums over terms indexed by $a,b,\alpha,\beta \in [n]^4$ where $(a,\alpha,\beta)$ and $(b,\alpha,\beta)$ are in the observation set. As the sparsity of the dataset is $p$, this results in order $p^2 n^4$ nonzero terms in the summation, each of which is the product of 4 quantities, taking $O(1)$ to compute.
Step 5 costs $O(pn^6)$ as there are $\Theta(n^3)$ triplets we need to estimate, and each involves averaging at most 
$O(pn^3)$ datapoints. In summary, the computation cost of the entire method, for $p = n^{-\frac32 + \extra}$ is $O(pn^4 + p^2 n^6 + pn^6)$ where the cost in Step 5 dominates. 

This computation cost can be improved drastically. For example, as explained in \cite{BorgsChayesLeeShah17}, by 
use of `representative' or 'anchor' vertices chosen as random, the 
algorithm can instead cluster the vertices with respect to these anchor vertices and learn a block constant estimate, significantly reducing the involved computation. If there are $y$ anchor vertices, then Step 4 reduces to only computing pairwise distances between $\binom{y}{2} + ny$ pairs of vertices, as non-anchor vertices are only compared to the small set of $y$ anchor vertices. Step 5 reduces to only estimating $\binom{y}{3}$ entries of the tensor corresponding to combinations of the anchor vertices, and then extrapolating the estimate to other vertices assigned to the same cluster. This would result in a computational cost of 
$O(pn^4 + (y^2 + ny) p^2 n^4 + y^3 pn^3)$. When $p =n^{-\frac32 + \extra}$, our proof indicates that by choosing $y = \Theta((p^2 n^3)^{1/4})=\Theta(n^{\extra/2})$, the corresponding block constant estimator would achieve the same rates on the MSE and MEE as presented in Theorem \ref{main:thm}, while requiring a reduced computational complexity of $O(n^{5/2 + \extra} + n^{2 + 5 \extra/2})$.

\begin{corollary}\label{cor:anchor_vertices}
	We assume that the function $f$ is rank $r$, $L$-Lipschitz and that $\theta \sim U[0,1]$.
	Assume that $p=n^{-3/2 + \extra}$ for some $\extra \in (0,\frac12)$. Let $\radius$ be defined as per \eqref{eq:t}. 
	For any arbitrarily small $\psi \in (0, \min(\extra,\frac38))$, choose the threshold
	\begin{align*}
		\eta &= \Theta\left(n^{- (\extra - \psi)}\right).
	\end{align*}
	The modified algorithm which subsamples $y = \Omega((p^2 n^3)^{1/4})=\Omega(n^{\extra/2})$ anchor vertices at random and uses them to cluster the vertices to learn a block constant estimate will achieve 
	\begin{align*}
		\text{MSE} &= O(n^{ - (\extra - \psi)}) = O\left(\frac{n^{\psi}}{(p^2 n^3)^{1/2}}\right),
	\end{align*}
	and
	\begin{align*}
		\|F - \hat{F}\|_{\max} & = O(n^{ - (\extra - \psi)/2}), 
	\end{align*}
	with probability $1 - O\left(n^4 \exp(-\Theta(n^{2\psi}))\right)$. 
\end{corollary}

\subsection{Discussion of Assumptions}\label{ssec:result.disc}

We assumed in our algorithm and analysis that we had two fresh samples of the dataset, $M_1$ and $M_2$. The dataset $M_1$ is used to estimate distances between coordinates, and the dataset $M_2$ is used to compute the final nearest neighbor estimates. Given only a single dataset, the same theoretical results can also be shown by simply splitting the samples uniformly into two sets, one used to estimate distances and one used to compute the nearest neighbor estimates. as we are considering the sparse regime with $p = n^{-3/2 + \extra}$ for $\extra \in (0,\frac12)$, the two subsets after sample splitting will be nearly independent, such that the analysis only needs to be slightly modified. This is formally handled in the paper on collaborative filtering for matrix estimation in \cite{BorgsChayesLeeShah17}. 

Our model and analysis assumes that the latent variables $\{\theta_u\}_{u \in [n]}$ are sampled uniformly on the unit interval, and that the function $f$ is Lipschitz with respect to $\theta$. This assumption can in fact be relaxed significantly, as it is only used in the final step of the proof in analyzing the nearest neighbor estimator. Proving that the distance estimates concentrate well does not require these assumptions, in particular it primarily uses the low rank assumption. Given that the distance estimate concentrates well, the analysis of the nearest neighbor estimator depends on the local measure, i.e. what fraction of other coordinates have similar function values so that the estimated distance is small. We used Lipschitzness and uniform distribution on the unit interval in order to lower bound the fraction of nearby coordinates, however many other properties would also lead to such a bound. The dependence of the noisy nearest neighbor estimator on the local measure is discussed in detail in \cite{song2016blind}. Similar extensions as presented in \cite{song2016blind} would apply for our analysis here, leading to consistency and convergence rate bounds for examples including when
\begin{itemize}
\item the latent space has only finitely many elements, or equivalently the distribution of $\theta$ has finite support;
\item the latent space is the unit hypercube in a finite dimensional space and the latent function is Lipschitz;
\item the latent space is a complete, separable metric space, i.e. Polish space, with bounded diameter and the latent function is Lipschitz.
\end{itemize}

Although our stated results assume a symmetric tensor, the results naturally extend to asymmetric $(n_1 \times n_2 \times n_3)$ tensors as long as $n_1, n_2,$ and $n_3$ are proportional to one another. Our analysis can be modified for the asymmetric setting, or one can reduce the asymmetric tensor to a $(n\times n \times n)$ symmetric tensor where $n = n_1 +n_2 + n_3$, and the coordinates of the new tensor consists of the union of the coordinates in all three dimensions of the asymmetric tensor. The results applied to this larger tensor would still hold with adjustments of the model allowing for piecewise Lipschitz functions.

In the proof sketch that follows below, we show that for the 3-order tensor, the sample complexity threshold of $p = \omega(n^{-3/2})$ directly equals the density of observations needed to guarantee the bipartite graph is connected with high probability. 
Although our stated results assume a 3-order tensor, our algorithm and analysis can be likely extended to general $d$-order tensors. The proof would instead require analysis of the $n\times n^{d-1}$ matrix and associated bipartite graph corresponding to the unfolding of the tensor. The bipartite graph would consist of vertex sets $[n]$ and $[n]^{d-1}$.
	
\begin{remark}
In order for the vertices in $[n]$ to be fully connected to each other, $p$ needs to be $\Omega(n^{d/2})$, which can be proved using the standard branching process analysis as is used to prove the connectivity threshold of an Erdos Renyi graph \cite{alon2016probabilistic}. Let $X_u$ denote the set of vertices $v \in [n]$ such that the distance between $u$ and $v$ in the bipartite graph is 2. It follows that $\Prob{v \in X_u} = 1 - (1-p^2)^{n^{d-1}}$, such that $\E[|X_u|] = (n-1) (1 - (1-p^2)^{n^{d-1}}) = \Theta(p^2 n^d)$. As a result, if $p = o(n^{d/2})$, it follows that $\Prob{X_u = \emptyset} = 1- \Prob{|X_v| \geq 1} \geq 1 - \E[|X_u|] = 1 - o(1)$, such that with probability tending to 1 the vertex $u$ will be isolated, i.e. not connected to any other vertex $v \in [n]$. To prove that the graph is connected for $p = \tilde{\Omega}(n^{d/2})$, one would relate the growth of a local neighborhood in the graph to an appropriate branching process, where the expected number of descendents alternates between $pn$ and $pn^{d-1}$. To formalize the argument, one would then argue that the branching process survives to infinity if $p^2 n^d$ is sufficiently large, e.g $\text{polylog}(n)$, and also argue that the branching process is a reasonable approximation for the neighborhood growth of the graph until a linear number of vertices are visited. The requirement for graph connectivity in our algorithm and analysis arises from the similarity computation $\dist(u,v)$, which involves products of weights over paths in the graph that connect $u$ and $v$. The fact that $n^{d/2}$ also corresponds to the connectivity threshold in the corresponding bipartite graph sheds light on the computational lower bound for tensor completion in \cite{BarakMoitra16}, giving another way to explain the $\Omega(n^{d/2})$ lower bound.
\end{remark} 




\section{Proof} \label{sec:proof}

In this section, we present the proof for Theorem \ref{main:thm}. The proof outline is similar to the matrix setting in \cite{BorgsChayesLeeShah17}, in that the core of the analysis is proving that the distance function as defined in \eqref{eq:dist} concentrates appropriately and captures an appropriate notion of distance that enables the classical ``nearest neighbor" algorithm to be effective. However, due to high-dependencies across latent factors associated with columns that share tensor coordinates, the concentration of the BFS neighborhood expansion in section \ref{sec:conc_quadratic_1} requires a new argument beyond the simple martingale argument in the matrix setting. This involves a careful application of the concentration of U-statistics. Furthermore, the concentration of the distance calculation in Eq \eqref{eq:dist_T} as analyzed in section \ref{sec:conc_quadratic_3} requires a new argument relating the computed statistic to a thresholded variant more amenable to analysis. This is due to both the dependencies in the latent factors along with the lopsidedness in the dimensions so that straightforward applications of standard concentration results are too weak and insufficient to drive the error to zero. 

While the proof is stated for bounded observations, i.e. bounded noise, the result can be extended to sub-Gaussian noise rather than uniformly bounded noise. This would involve showing that the norms of the neighborhood vectors $N_{us}$ and $W_{us}$ are well-controlled such that the application of Hoeffding's inequality used in Lemmas \ref{lem:martingale_diff_1} and \ref{lem:martingale_diff_2} still hold. Additionally the proof of \ref{lemma:NMN_conc} naively bounds the products of weights over a path in absolute value by 1; if the noise were not bounded but sub-Gaussian, one would have to additionally argue that the product of the weights would be sufficienty controlled with high probability.

The critical lemma that the proof hinges on shows that the computed similarities, i.e. $\dist(u,a)$, concentrates around the function $d(u,a) =  \|\Lambda^{2\radius+1} Q (e_u - e_a)\|_2^2$. This then implies that if $d(u,a)$, $d(v,b)$ and $d(w, c)$ are small, the function value $F(u,v,w)$ would be close to $F(a,b,c)$. Additionally, we use Lipschitzness of the latent function $f$ along with the assumption that $\theta_u$ are sampled independently from $U[0,1]$ to argue that for any $u$, there is a sufficiently large set of other coordinates $a$ such that $d(u,a)$ is small. If these properties hold, then a simple analysis of nearest neighbor averaging using $\dist(u,a)$ to determine the neighbors will result in a bias variance tradeoff that can be tuned to show our final results. As such, the complexity of the proof revolves around showing that the computed $\dist(u,a)$ concentrates around $d(u,a)$. This involves a delicate analysis which involves first arguing that the normalized neighborhood vectors $\tN_{u,\radius}$ satisfy $e_k^T Q \tN_{u,\radius} \approx e_k^T \Lambda^{2\radius} Q e_{u}$, which involves martingale concentration as well as concentration of appropriately defined U-statistics. Subsequently we need to argue that the statistic 
\begin{align*}
Z_{uv} &= \frac{1}{|\cV_B(u,v,t)| p^2 |\cS_{u,t}| |\cS_{v,t}|}  \sum_{(\alpha, \beta) \in \cV_B(u,v,t)} \sum_{a \neq b \in [n]} N_{u,t}(a) N_{v,t}(b)   \Mpp(a, (\alpha, \beta)) \Mpp(b, (\alpha, \beta)) \\
&\approx \tN_{u,t}^T Q^T \Lambda^2 Q \tN_{v,t}.
\end{align*}
A challenge in the analysis is that each term in the sum has very small probability of being nonzero such that the sum is sparse enough that the standard concentration inequalities are not tight enough. Thus we relate the sum to a thresholded variant and use a tighter approximation of the binomial cdf to obtain the desired bound.

\subsection{Analyzing Noisy Nearest Neighbors} 
We start by stating an important Lemma \ref{lemma:nearest_neighbor}, adapted from \cite{BorgsChayesLeeShah17} that characterizes the error of the noisy nearest neighbor algorithm. 
Recall that our algorithm estimates $F(u, v, w)$, i.e. $f(\theta_u, \theta_v, \theta_w)$, according to \eqref{eq:estimate}, which simply averages over data-points $\Mppp(a, b, c)$ corresponding to tuples $(a, b, c)$ for which $a$ is close to $u$, $b$ is close to $v$ and $c$ is close to $w$ according to the estimated distance function. The choice of parameter $\eta$ allows for tradeoff between bias and variance of the algorithm.

We first argue that the data-driven distance estimates $\dist$ will concentrate around an 
ideal data-independent distance $d(\theta_u,\theta_v)$ for $d: [0,1]^2 \to \mathbb{R}_+$. 
We subsequently argue that the nearest neighbor estimate produced by \eqref{eq:estimate} 
using $d(\theta_u,\theta_v)$ in place of $\dist(u,v)$ will yield a good estimate by properly 
choosing the threshold $\eta$ to tradeoff between bias and variance. The bias will depend 
on the local geometry of the function $f$ relative to the distances defined by $d$. The variance 
depends on the measure of the latent variables $\{\theta_u\}_{u \in [n]}$ relative to the distances 
defined by $d$, i.e. the number of observed tuples $(a, b, c) \in \cEppp$ such that 
$d(\theta_u,\theta_a) \leq \eta$, $d(\theta_v,\theta_{b}) \leq \eta$ and $d(\theta_w,\theta_{c}) \leq \eta$ 
needs to be sufficiently large. We formalize the above stated desired properties. 

\begin{properties}[Good Distance]\label{ass:good_distances.1}
We call an ideal distance function $d: [0,1]^2 \to \RealsP$ to be a $\bias$-good distance function for some $\bias: \RealsP \to \RealsP$ if for any given $\eta > 0$ it follows that 
$|f(\theta_a, \theta_b, \theta_c) - f(\theta_u, \theta_v, \theta_w)| \leq \bias(\eta)$
for all $(\theta_a,\theta_b,\theta_c, \theta_u,\theta_v, \theta_w) \in [0,1]^4$ such that 
$d(\theta_u,\theta_a) \leq \eta$, $d(\theta_v,\theta_b) \leq \eta$ and $d(\theta_w,\theta_c) \leq \eta$.
\end{properties}

\begin{properties}[Good Distance Estimation]\label{ass:good_distances.2}
For some $\Delta > 0$, we call distance $\hat{d}: [n]^2 \to \RealsP$ a $\Delta$-good estimate for ideal distance $d: [0,1]^2 \to \RealsP$, if $|d(\theta_u,\theta_a) - \hat{d}(u,a)| \leq \Delta$ for all $(u, a) \in [n]^2$.
\end{properties}

\begin{properties}[Sufficient Representation]\label{ass:good_distances.3}
The collection of coordinate latent variables $\{\theta_u\}_{u \in [n]}$ is called $\meas$-represented for some
$\meas: \RealsP \to \RealsP$ if for any $u \in [n]$ and $\eta' > 0$, 
$\frac{1}{n} \sum_{a \in [n]} \Ind_{(d(u, a) \leq \eta')} \geq \meas(\eta')$. 
\end{properties}
\begin{lemma} \label{lemma:nearest_neighbor}
Assume that property \ref{ass:good_distances.1} holds with probability $1$, property \ref{ass:good_distances.2} holds
for any given pair $u, a \in [n]$ with probability $1-\alpha_1$, and property  \ref{ass:good_distances.3} holds with 
probability $1-\alpha_2$ for some $\eta, \Delta,$ and $\eta' = \eta - \Delta$; in particular $d$ is a 
$\bias$-good distance function, $\hat{d} = \dist$ as estimated from $\Mp$ is a $\Delta$-good distance 
estimate for $d$, and $\{\theta_u\}_{u \in [n]}$ is $\meas$-represented. Then noisy nearest neighbor estimate 
$\hat{F}$ computed according to \eqref{eq:estimate} satisfies 
\begin{align*}
\MSE(\hat{F}) \leq &\bias^2(\eta + \Delta) + \frac{\sigma^2}{(1 - \delta) p \left(\meas(\eta-\Delta) n\right)^3 }  + \exp\left(-\frac{\delta^2 p \left(\meas(\eta-\Delta) n\right)^3}{2}\right) + 3 n \alpha_1 + \alpha_2,
\end{align*}
for any $\delta \in (0,1)$. Furthermore, for any $\delta' \in (0,1)$ and $(u, v, w) \in [n]^3$, 
\[ |\hat{F}(u, v, w) - f(\theta_u,\theta_v, \theta_w)| \leq \bias(\eta + \Delta) + \delta',\]
with probability at least 
\begin{align*}
& 1 -  \exp\left(-\tfrac{1}{2}\delta^2 p \left(\meas(\eta-\Delta) n\right)^3\right)  -  \exp\left(-\delta'^2 (1 - \delta) p \left(\meas(\eta-\Delta) n\right)^3\right)  - 3 n \alpha_1 - \alpha_2.
\end{align*}
\end{lemma}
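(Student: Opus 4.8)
The plan is to condition on the three good events holding simultaneously, and then analyze the averaging estimator \eqref{eq:estimate} as a standard noisy-nearest-neighbor estimator with a controlled bias and variance. First I would union-bound the failure probabilities. Property \ref{ass:good_distances.1} holds with probability $1$. For a fixed triple $(u,v,w)$, the estimate $\hat F(u,v,w)$ only uses the distance estimates $\dist(u,a)$, $\dist(v,b)$, $\dist(w,c)$; for the event $\cEppp_{uvw}$ to be the ``right'' neighborhood we need Property \ref{ass:good_distances.2} to hold for the $3n$ relevant pairs (the pairs $(u,a)$, $(v,b)$, $(w,c)$ ranging over $a,b,c \in [n]$), contributing $3n\alpha_1$, plus Property \ref{ass:good_distances.3}, contributing $\alpha_2$. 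On the intersection of these events, the triangle-type inequality $|d(\theta_u,\theta_a) - \dist(u,a)| \le \Delta$ gives the key sandwiching: every $(a,b,c)$ with $\dist(u,a),\dist(v,b),\dist(w,c) \le \eta$ satisfies $d(\theta_u,\theta_a),d(\theta_v,\theta_b),d(\theta_w,\theta_c) \le \eta+\Delta$, so by Property \ref{ass:good_distances.1} each averaged datapoint has $|F(a,b,c) - F(u,v,w)| \le \bias(\eta+\Delta)$; conversely, every $(a,b,c)$ with $d(\theta_u,\theta_a),d(\theta_v,\theta_b),d(\theta_w,\theta_c) \le \eta-\Delta$ is guaranteed to have $\dist$ below $\eta$, hence is included in $\cEppp_{uvw}$.

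Next I would control the number of averaged samples, $|\cEppp_{uvw}|$. By Property \ref{ass:good_distances.3}, the number of coordinates within ideal distance $\eta-\Delta$ of $u$ (resp.\ $v$, $w$) is at least $\meas(\eta-\Delta)\, n$, so there are at least $(\meas(\eta-\Delta) n)^3$ triples $(a,b,c)$ that are "good" in the $d$-sense, each observed in $\cEppp$ independently with probability $\Theta(p)$ (up to the symmetry bookkeeping of restricting to $a\le b\le c$, which I would absorb into constants). A Chernoff bound then gives $|\cEppp_{uvw}| \ge (1-\delta)\, p\, (\meas(\eta-\Delta) n)^3$ except with probability $\exp(-\tfrac12 \delta^2 p (\meas(\eta-\Delta) n)^3)$. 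Conditioned on this lower bound on the sample count, I would decompose $\hat F(u,v,w) - f(\theta_u,\theta_v,\theta_w) = \text{(bias term)} + \text{(noise term)}$, where the bias term is the average of $F(a,b,c) - F(u,v,w)$ over $(a,b,c) \in \cEppp_{uvw}$, bounded deterministically by $\bias(\eta+\Delta)$, and the noise term is $\frac{1}{|\cEppp_{uvw}|}\sum \epsilon_{abc}$, a mean-zero average of bounded variance-$\le\sigma^2$ terms. For the MSE bound I would use $\E[(\text{noise term})^2 \mid \cEppp_{uvw}] \le \sigma^2/|\cEppp_{uvw}|$, then combine with the sample-count lower bound and its failure event (where I can trivially bound the squared error by $O(1)$ since $|M|\le 1$), yielding the $\sigma^2 / ((1-\delta) p (\meas(\eta-\Delta) n)^3)$ term plus the $\exp(-\tfrac12\delta^2 p (\meas(\eta-\Delta) n)^3)$ term; the remaining $3n\alpha_1 + \alpha_2$ come from the union bound above. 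For the high-probability per-entry bound I would instead apply a Bernstein/Hoeffding concentration to the noise average conditioned on $|\cEppp_{uvw}| \ge (1-\delta) p (\meas(\eta-\Delta) n)^3$, getting $|\text{noise term}| \le \delta'$ except with probability $\exp(-\delta'^2 (1-\delta) p (\meas(\eta-\Delta) n)^3)$, and add the bias bound $\bias(\eta+\Delta)$.

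The main subtlety, and the step I would be most careful about, is the independence structure needed for the Chernoff bound on $|\cEppp_{uvw}|$ and for the concentration of the noise average: the set $\cEppp_{uvw}$ is itself random and depends on $\dist$, which is computed from the $\Mp$ dataset, while the averaged values $\Mppp(a,b,c)$ and the observation pattern $\cEppp$ come from the independent fresh dataset. Sample splitting is exactly what makes this clean — conditioned on $\cG(\cB)$ and everything derived from $\Mp$ (hence on the realized distance function $\dist$ and therefore on which coordinates qualify), the $\cEppp$ observations and their noise are still fresh and independent, so the Chernoff and Bernstein bounds apply to i.i.d.\ (or independent, bounded-variance) summands. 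I would state this conditioning explicitly and note that Property \ref{ass:good_distances.3} is a statement about the latent variables $\{\theta_u\}$, which are fixed throughout, so it interacts cleanly with the conditioning. The rest is bookkeeping: tracking the $\le a\le b\le c$ symmetry restriction and confirming it only changes constants, and verifying the $\eta' = \eta - \Delta$ choice is the right one to make the inclusion $\{d\text{-good}\} \subseteq \cEppp_{uvw}$ hold.
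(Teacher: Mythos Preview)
Your proposal is correct and follows essentially the same approach as the paper: condition on the good events (paying $3n\alpha_1 + \alpha_2$ by union bound), use the $\Delta$-sandwiching to get both the bias bound $\bias(\eta+\Delta)$ and the inclusion $\{d \le \eta-\Delta\}\subseteq\{\dist\le\eta\}$, apply Chernoff to lower-bound $|\cEppp_{uvw}|$, and then do the bias--variance decomposition (with Hoeffding for the per-entry bound), all justified by the sample-splitting independence between $\dist$ and $\Mppp$. The paper's proof is essentially identical in structure, and your explicit discussion of the conditioning and independence is if anything more careful than the paper's.
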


The proof of Lemma \ref{lemma:nearest_neighbor} is a modification from \cite{BorgsChayesLeeShah17} and is included in the Appendix.

\subsection{Proofs of Theorems \ref{main:thm} and  \ref{main:thm.pert}} 
\proof
We prove that as long as $p = n^{-3/2+\extra}$ for any $\extra \in (0,\frac12)$, with high probability, properties \ref{ass:good_distances.1}-\ref{ass:good_distances.3} hold for an appropriately chosen function $d$, and for distance estimates $\hat{d} = \dist$ computed according to \eqref{eq:dist} with $t$ defined in \eqref{eq:t}. 
We subsequently use Lemma \ref{lemma:nearest_neighbor} to conclude Theorem \ref{main:thm} and Theorem \ref{main:thm.pert}.  The proofs 
of Properties \ref{ass:good_distances.1} and \ref{ass:good_distances.3} are identical in Theorem \ref{main:thm} and Theorem \ref{main:thm.pert}, while
that of property \ref{ass:good_distances.2} differ. For Theorem \ref{main:thm}, we utilize Lemma  \ref{lemma:dist} while for 
Theorem \ref{main:thm.pert}, we utilize Lemma  \ref{lemma:dist.pert}. The proof of Theorem \ref{main:thm.pert} follows nearly the same argument, where $f$ will be replaced by the rank $r$ approximation $f_r$, c.f. \eqref{eq:eps.rank}. 

\medskip
\noindent{\em Good distance $d$ and Property \ref{ass:good_distances.1}.} 
We start by defining the ideal distance $d$ as follows. For all $(u, v) \in [n]^2$, let
\begin{align}
d(\theta_u,\theta_v) & = \|\Lambda^{2\radius+1} Q (e_u - e_v)\|_2^2 = \sum_{k=1}^r \lambda_k^{2(2\radius+1)} (q_k(\theta_u) - q_k(\theta_v))^2. \label{eq:ideal.dist} 
\end{align}
Recall that $\radius$ is defined in \eqref{eq:t}. Since $p = n^{-3/2 + \extra}$ and $\extra \in (0,\frac12)$, we have
that 
\begin{align}
\radius & = \Bigg\lceil\frac{\ln(n)}{2\ln(p^2 n^3)}\Bigg\rceil ~=~\Bigg\lceil \frac{1}{4\extra} \Bigg\rceil.
\end{align}

We want to show that there exists $\bias: \RealsP \to \RealsP$ so that $|(f(\theta_a, \theta_b, \theta_c) - f(\theta_u, \theta_v, \theta_w)) |  \leq \bias(\eta)$ for any $\eta > 0$ and $(u,a,v,b,w, c) \in [n]^3$ such that $d(\theta_u,\theta_a) \leq \eta$, $d(\theta_v,\theta_b) \leq \eta$ and $d(\theta_w,\theta_c) \leq \eta$. Consider 
\begin{align}
|f(\theta_u,\theta_v, \theta_w) - f(\theta_a, \theta_b, \theta_c)| 
& \leq |f(\theta_u,\theta_v, \theta_w) - f(\theta_a, \theta_v, \theta_w)| + |f(\theta_a,\theta_v, \theta_w) - f(\theta_a, \theta_b, \theta_w)| \nonumber \\
& \qquad  +  |f(\theta_a,\theta_b, \theta_w) - f(\theta_a, \theta_b, \theta_c)|. \label{eq:prf.1}
\end{align}
Now 
\begin{align}
|f(\theta_u,\theta_v, \theta_w) - f(\theta_a, \theta_v, \theta_w)| 
&= |\sum_k \lambda_k (q_k(\theta_u) - q_k(\theta_a)) q_k(\theta_v) q_k(\theta_w)|  \nonumber \\ 
& \stackrel{(a)}{\leq} B^2 | \sum_k \lambda_k (q_k(\theta_u) - q_k(\theta_a))|  \nonumber \\ 
& = B^2 \| \Lambda Q (e_u - e_a)\|_1 \nonumber \\
& \leq B^2 \sqrt{r} \| \Lambda Q (e_u - e_a)\|_2 \nonumber \\
& \leq B^2 \sqrt{r} |\lambda_r|^{-2\radius} \|\Lambda^{2\radius+1} Q (e_u - e_a)\|_2 \nonumber \\
& = B^2 \sqrt{r} |\lambda_r|^{-2\radius} \sqrt{d(\theta_u,\theta_a)}.
\end{align}
In above, (a) follows from the $\|q_k(\cdot)\|_\infty \leq B$ for all $k$. Repeating this 
argument to bound the other terms in \eqref{eq:prf.1}, we obtain that 
\begin{align}
|f(\theta_u,\theta_v, \theta_w) - f(\theta_a, \theta_b, \theta_c)|
&\leq 3 B^2 \sqrt{r} |\lambda_r|^{-2\radius} 
\max\big(\sqrt{d(\theta_u,\theta_a)}, \sqrt{d(\theta_v,\theta_b)}, \sqrt{d(\theta_w,\theta_c)}\big) \nonumber \\
& \leq 3 B^2 |\lambda_r|^{-2\radius}  \sqrt{r \eta}~\equiv~\bias(\eta).  \label{eq:ideal.bias}
\end{align}
In summary, property \ref{ass:good_distances.1} is satisfied for distance function $d$ defined 
according to \eqref{eq:ideal.dist} and $\bias(\eta) = 3 B^2 |\lambda_r|^{-2\radius} \sqrt{r \eta}$. 

\medskip
\noindent{\em Good distance estimate $\hat{d}$ and Property \ref{ass:good_distances.2}.} 
We state the following Lemma whose proof is delegated to Section \ref{sec:proofoflemma}.

\begin{lemma}\label{lemma:dist}
Given $f$ with rank $r$, assume that $p = n^{-3/2+\extra}$ for $\extra \in (0,\frac12)$. Let $\hat{d} = \dist$ as defined in \eqref{eq:dist}.  
Then for any $(u, a) \in [n]^2$, for any $\psi \in (0, \extra)$,
\begin{align*}
|d(\theta_u,\theta_a) - \hat{d}(u,a)|
&= O\left(r \lambda_{\max}^{4t}  n^{-(\extra - \psi)}\right), 
\end{align*}
with probability at least 
$1 - O\Big( \exp(-n^{2\psi}(1-o(1)))\Big)$.
\end{lemma}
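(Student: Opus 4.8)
The plan is to follow the local-neighborhood-expansion strategy of \cite{BorgsChayesLeeShah17}, now on the bipartite graph $[n]\cup\cV_A$ whose two sides scale with different polynomial orders ($n$ versus $\binom{n/2}{2}=\Theta(n^2)$), so as to verify Property~\ref{ass:good_distances.2} for the ideal distance $d$ of \eqref{eq:ideal.dist}. Fix a pair of coordinates $(u,a)\in[n]^2$ and condition throughout on the two BFS trees, i.e. on $\cG(\cB_{u,2t})\cup\cG(\cB_{a,2t})$; these are determined by $\cEp$ and $\Mp$, hence by the data on triples having two coordinates in $[n/2]$. Since the columns of $\Mpp$ are indexed by pairs in $\cV_B\subseteq([n]\setminus[n/2])^2$, i.e. by triples having two coordinates outside $[n/2]$ -- a set of triples disjoint from those defining $\Mp$ -- the entries of $\Mpp$ are, given the latent variables, independent of the conditioning; and since $\cV_B(u,a,t)$ omits every pair touching a coordinate in $\cB_{u,2t}\cup\cB_{a,2t}$, the latent variables $\{\theta_\alpha,\theta_\beta\}_{(\alpha,\beta)\in\cV_B(u,a,t)}$ are still i.i.d.\ $U[0,1]$ and independent of the conditioning. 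The three deviations of \eqref{eq:dist} from the heuristic \eqref{eq:dist_heur} (excluding $a=b$, excluding visited coordinates, thresholding $T_{uv}$ at $\phi^2$) are exactly what make these independence statements and the expectation identities below hold cleanly.

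First I would control the geometry of the BFS trees. A vertex of $[n]$ has degree $\mathrm{Bin}(\Theta(n^2),\Theta(p))$, of mean $\Theta(pn^2)=\Theta(n^{1/2+\extra})$, while a vertex of $\cV_A$ has degree $\mathrm{Bin}(\Theta(n),\Theta(p))$, of mean $\Theta(pn)=\Theta(n^{-1/2+\extra})=o(1)$; composing the two edges of a layer therefore multiplies the number of revealed coordinate vertices by $\Theta(p^2n^3)=\Theta(n^{2\extra})$, and $p=\omega(n^{-3/2})$, i.e. $p^2n^3=\omega(1)$, is exactly the regime in which this expansion occurs -- the sense in which the sample-complexity threshold equals the connectivity threshold of this graph. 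Using Chernoff/Bernstein bounds layer by layer, together with a bound on the number of coordinate collisions (the point at which the self-avoiding constraint of Step~3 enters, keeping the trees genuinely tree-like), I would show that with probability $1-O(\exp(-\Theta(n^{2\psi})))$, uniformly over $s\le t$: $|\cS_{u,s}|$ and $|\cU_{u,s}|$ are within a $(1\pm o(1))$ factor of their means; $|\cS_{u,t}|=\Theta((p^2n^3)^t)$, which is polynomially large yet $o(n)$; and $\cB_{u,2t}\cup\cB_{a,2t}$ touches only $o(n)$ coordinates, so $|\cV_B(u,a,t)|=(1-o(1))|\cV_B|$.

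Next I would identify the conditional mean of the statistic with $d(\theta_u,\theta_a)$. For $a'\ne b'$, $\E[\Mpp(a',(\alpha,\beta))\Mpp(b',(\alpha,\beta))\mid\theta]=p^2F(a',\alpha,\beta)F(b',\alpha,\beta)$; averaging over $(\alpha,\beta)\in\cV_B(u,a,t)$, the orthonormality identity $\tfrac1{|\cV_B(u,a,t)|}\sum_{(\alpha,\beta)}q_k(\theta_\alpha)q_k(\theta_\beta)q_h(\theta_\alpha)q_h(\theta_\beta)\to\delta_{kh}$ collapses $\E[Z_{uv}\mid\cG]$ to a $\Lambda$-weighted quadratic form in $Q\tN_{u,t}$ and $Q\tN_{v,t}$, up to the LLN fluctuation, the $O(|\cB|/n)=o(1)$ bias from the excluded columns, and the bias from replacing $T_{uv}$ by $\tilde T_{uv}$ -- the last of which I bound using that, for fixed $(\alpha,\beta)$, the number of $a'\in\cS_{u,t}$ with $(a',\alpha,\beta)\in\cEp$ is dominated by a Poisson variable of mean $|\cS_{u,t}|\,p=o(1)$, so $|T_{uv}(\alpha,\beta)|\ge\phi^2$ only on an event whose probability decays faster than the target, with $\phi=\Theta(1)$ the matching tail cutoff. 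In parallel, unrolling the recursion for $N_{u,s}$ -- which, modulo the self-avoiding restriction, iterates the operator $\Mp\Mp^T$ -- and invoking orthonormality over the latent variables of the internal pair-vertices, I would show that $Q\tN_{u,t}$ equals, up to a deterministic scalar depending only on $(t,p,n)$ and a relative error $o(1)$, that scalar times $\Lambda^{t}Qe_u$. Substituting into $Z_{uu}+Z_{aa}-Z_{ua}-Z_{au}$ makes the cross-terms combine into $\|\Lambda^{t+1}Q(e_u-e_a)\|_2^2$ times the squared scalar, and the normalization $\tfrac1{|\cV_B(u,a,t)|p^2}$ in \eqref{eq:dist} is what cancels that scalar, yielding $\E[\dist(u,a)\mid\cG]=d(\theta_u,\theta_a)+O(n^{-(\extra-\psi)})$.

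Finally, the concentration step: conditioned on $\cG$, $\dist(u,a)$ is a normalized sum over the $\Theta(n^2)$ columns of $\cV_B(u,a,t)$ of the independent, $[-\phi^2,\phi^2]$-bounded terms $\tilde T_{uv}(\alpha,\beta)$, of which only $\Theta(p^2n^3)=\Theta(n^{2\extra})$ are typically nonzero; a Bernstein bound then yields deviation $O(n^{-(\extra-\psi)})$ with probability $1-O(\exp(-n^{2\psi}(1-o(1))))$, the exponent coming from balancing the target accuracy $n^{-(\extra-\psi)}$ against the effective number $\Theta(n^{2\extra})$ of nonzero summands. A little additional care handles the fluctuation of $Q\tN_{u,t}$ around the power-iteration vector and of the tree sizes, both lower order; intersecting the three events proves Lemma~\ref{lemma:dist}. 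I expect the third step to be the main obstacle: rigorously showing that the normalized neighborhood vector acts as a power of $\Lambda$ in the face of the coupling among the tree topology, the coordinates it reveals, and their latent variables, and carrying the attendant bilinear-form concentration through a bipartite graph whose two sides have different polynomial orders -- this is the genuinely new ingredient relative to the symmetric-matrix analysis of \cite{BorgsChayesLeeShah17}.
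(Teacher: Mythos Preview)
Your outline matches the paper's architecture closely: BFS-tree growth control (Lemma~\ref{lem:growth}), concentration of $Q\tN_{u,t}$ toward a power of $\Lambda$ applied to $Qe_u$ (Lemmas~\ref{lemma:nhbrhd_vectors} and \ref{lemma:NFN_bound}), and concentration of $Z_{uv}$ toward the resulting quadratic form (Lemma~\ref{lemma:NMN_conc}). Two places in your sketch are not quite right and are exactly where the paper invests effort.

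First, in the Bernstein step you assert that the $\tilde T_{uv}(\alpha,\beta)$ are independent conditioned on $\cG$. They are not: pairs $(\alpha,\beta)$ and $(\alpha,\beta')$ share the latent variable $\theta_\alpha$. The paper handles this by a two-layer conditioning---first condition on all $\{\theta_i\}_{i\in[n]}$ so that the $\tilde T_{uv}(\alpha,\beta)$ become genuinely independent and Bernstein applies, then control the fluctuation of the conditional mean in the latent variables $\{\theta_\alpha,\theta_\beta\}$ via a pairwise U-statistic bound (Lemma~\ref{lem:ustats}). Second, for the step you correctly flag as the main obstacle, the paper's mechanism is a martingale in the depth-by-depth filtration $\cF_{u,s}$: setting $Y_{u,s}$ to alternate between $\lambda_k^{2t-s}e_k^TQ\tN_{u,s/2}$ and $\lambda_k^{2t-s}e_k^T\cQ\tW_{u,(s+1)/2}$, the increments $D_{u,s}=Y_{u,s}-Y_{u,s-1}$ have conditional mean zero (because the parent of a newly revealed BFS vertex is uniform over the previous layer, and the latent variable at the new vertex is fresh and orthonormality collapses the cross terms) and are sub-Gaussian with variance proxies summing to $O(n^{-2\extra})$. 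This directly gives $e_k^TQ\tN_{u,t}=\lambda_k^{2t}q_k(\theta_u)+O(n^{-(\extra-\psi)})$ with no leftover scalar: the normalization by $|\cS_{u,t}|$ in $\tN_{u,t}$ already does that job, while the $1/(|\cV_B(u,a,t)|p^2)$ factor in \eqref{eq:dist} cancels the $p^2$ coming from $\E[\Mpp\Mpp^T]$, so your ``deterministic scalar cancelled by the normalization'' description is slightly off.
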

Lemma \ref{lemma:dist} implies that property \ref{ass:good_distances.2} holds with probability $1 - o(1)$ for  
$\Delta = \Theta\left(r \lambda_{\max}^{4t} n^{- (\extra - \psi)}\right)$ when $f$ has rank $r$.

\begin{lemma}\label{lemma:dist.pert}
Given $f$ with $\bpert$-approximate rank $r$ for $\bpert \geq 0$,  
assume that $p = n^{-3/2+\extra}$ for $\extra \in (0,\frac12)$. Let $\hat{d} = \dist$ as defined in \eqref{eq:dist}. 
Then for any $(u, a) \in [n]^2$, for any $\psi \in (0, \extra)$,
\begin{align*}
|d(\theta_u,\theta_a) - \hat{d}(u,a)| 
=& ~O\left(r \lambda_{\max}^{4t} n^{-(\extra - \psi)}\right) + O\left( \radius \bpert (1+\bpert)^{2\radius-1} +  \radius^2 \bpert^2 (1+\bpert)^{4\radius-2} \right), 
\end{align*}
with probability at least $1 - O\Big( \exp(-n^{2\psi}(1-o(1)))\Big) - O\Big(n^{-6}\Big)$.
\end{lemma}
Lemma \ref{lemma:dist.pert} implies that property \ref{ass:good_distances.2} holds with probability $1 - o(1)$ for  
$$\Delta = \Theta\left(r \lambda_{\max}^{4t} n^{- (\extra - \psi)} + \radius \bpert (1+\bpert)^{2\radius-1} +  \radius^2 \bpert^2 (1+\bpert)^{4\radius-2} \right),$$ 
when $f$ has $\bpert$-approximate rank $r$.

\medskip
\noindent{\em Sufficient representation and Property \ref{ass:good_distances.3}.} Since $f$ is $L$-Lipschitz, the distance $d$ as defined in \eqref{eq:ideal.dist} is bounded above by the squared $\ell_2$ distance: 
\begin{align}
d(\theta_u,\theta_v)
& = \|\Lambda^{2t+1} Q (e_u - e_v)\|_2^2 \nonumber \\
& \leq |\lambda_1|^{4\radius} \|\Lambda Q (e_u - e_v)\|_2^2 \nonumber \\
& = |\lambda_1|^{4\radius} \Big(\sum_{k=1}^r \lambda_k^2 (q_k(\theta_u) - q_k(\theta_v))^2 \nonumber \\
& = |\lambda_1|^{4\radius} \sum_{k=1}^r \lambda_k^2 (q_k(\theta_u) - q_k(\theta_v))^2 \big(\int_{0}^1 q_k(\theta_a)^2 d\theta_a\big) \nonumber \\
&\qquad \qquad \qquad \times \big(\int_{0}^1 q_k(\theta_b)^2 d\theta_b\big)  \nonumber \\
& = |\lambda_1|^{4\radius} \sum_{k=1}^r \lambda_k^2 \int_{[0,1]^2}(q_k(\theta_u) q_k(\theta_a) q_k(\theta_b) - q_k(\theta_v)q_k(\theta_a) q_k(\theta_b) )^2 d\theta_a d\theta_b \Big) \nonumber \\
& \stackrel{(a)}{=} |\lambda_1|^{4\radius} \int_0^1 \int_0^1 (f(\theta_u, \theta_a, \theta_b) - f(\theta_v, \theta_a, \theta_b))^2 d\theta_a d\theta_b \nonumber \\
&\leq |\lambda_1|^{4\radius} L^2 |\theta_u - \theta_v|^2, \label{eq:ideal.dist1.a}
\end{align}
where in (a) we have used the fact that $q_k(\cdot), k \in [r]$ are orthonormal with respect to uniform distribution over
$[0,1]$. We assumed that the latent parameters $\{\theta_u\}_{u \in [n]}$ are sampled i.i.d. uniformly over $[0,1]$. Therefore, for any $\theta_u \in [0,1]$, for any $v \in [n]$ and $\eta' > 0$,  
\begin{align}
\Prob{d(\theta_u,\theta_v) \leq \eta' ~\big|~ \theta_u} &\geq \Prob{|\lambda_1|^{4\radius} L^2 |\theta_u - \theta_v|^2 \leq \eta' ~\big|~ \theta_u} \nonumber \\
& = \Prob{|\theta_u - \theta_v| \leq \frac{\sqrt{\eta'}}{|\lambda_1|^{2\radius} L} ~\big|~ \theta_u} \nonumber \\
& \geq \min\Big(1, \frac{\sqrt{\eta'}}{|\lambda_1|^{2\radius} L}\Big). \label{eq:nn_probablity}
\end{align}
Let us define 
\begin{align}
\meas(\eta') =  \frac{(1-\delta) \sqrt{\eta'}}{|\lambda_1|^{\radius} L}\label{eq:meas_def}
\end{align}
for all $\eta' \in (0, |\lambda_1|^{4\radius} L^2)$. By an application of Chernoff's bound and a simple majorization argument, it follows that for all $\eta' \in (0,|\lambda_1|^{4\radius} L^2)$ and $\delta \in (0,1)$,   
\begin{align*}
\Prob{\frac{1}{n-1} \sum_{a \in [n] \setminus u} \Ind_{(d(u,a) \leq \eta' )} \leq \meas(\eta') ~\Bigg|~ \theta_u } 
\leq \exp\left(- \frac{\delta^2 (n-1) \sqrt{\eta'}}{2 |\lambda_1|^{2\radius} L}\right).
\end{align*}
By using union bound over all $n$ indices, it follows that for any $\eta' \in (0, |\lambda_1|^{4\radius} L^2)$, with probability at least $1 -  n \exp\left(- \frac{\delta^2 (n-1) \sqrt{\eta'}}{2 |\lambda_1|^{2\radius} L}\right)$, property \ref{ass:good_distances.3} is
satisfied with $\meas$ as defined in \eqref{eq:meas_def}.

\medskip
\noindent{\em Concluding Proof of Theorem \ref{main:thm}.}
In summary, property \ref{ass:good_distances.1} holds
with probability $1$, by Lemma \ref{lemma:dist} property \ref{ass:good_distances.2} holds for a given tuple $(u, a) \in [n]^2$ with probability
$1-\alpha_1$ where $\alpha_1 = O\Big( \exp(-n^{2\psi}(1-o(1)))\Big)$ for $\psi \in (0, \min(\extra,\frac38))$ and $\extra \in (0,\frac12)$, 
property \ref{ass:good_distances.3} holds with probability
$1-\alpha_2$ where $\alpha_2 = n \exp\left(- \frac{\delta^2 (n-1) \sqrt{\eta - \Delta}}{2 |\lambda_1|^{2\radius} L}\right)$ with
distance estimate $\hat{d} = \dist$ defined in \eqref{eq:dist} with 
\begin{align}
d(\theta_u, \theta_v) &= \|\Lambda^{2\radius+1} Q (e_u - e_v)\|_2^2, \nonumber \\
\bias(\eta) & = 3 B^2 |\lambda_r|^{-2\radius} \sqrt{r \eta}, \nonumber \\
\Delta & = \Theta(r \lambda_{\max}^{4t} n^{ - (\extra - \psi) }), \nonumber \\
\meas(\eta') & = \frac{(1-\delta) \sqrt{\eta'}}{|\lambda_1|^{2\radius} L}, \label{eq:summary.1}
\end{align}
for any $\eta > 0$, $\delta \in (0,1)$ and $\eta' = \eta - \Delta \in (0,|\lambda_1|^{4\radius} L^2)$.
By substituting the expressions for $\bias$, $\meas$, and $\alpha$ into Lemma \ref{lemma:nearest_neighbor}, it follows that 

\begin{align*}
\MSE(\hat{F}) 
&\leq 9 B^4 |\lambda_r|^{-4\radius} r (\eta+\Delta) + \frac{ \sigma^2 L^3 |\lambda_1|^{6\radius}}{(1 - \delta)^4 p \left(\sqrt{\eta-\Delta} n\right)^3 } + n O\Big( \exp(-n^{2\psi}(1-o(1)))\Big)\\
& \quad + \exp\left(-\frac{\delta^2 (1-\delta)^3 p \left(\sqrt{\eta-\Delta} n\right)^3}{2L^3|\lambda_1|^{6\radius}}\right) + n \exp\left(- \frac{\delta^2 (n-1) \sqrt{\eta - \Delta}}{2 |\lambda_1|^{2\radius} L}\right).
\end{align*}
Additionally, for any $\delta' \in (0,1)$, 
\begin{align}
|\hat{F}(u, v, w) - f(\theta_u,\theta_v, \theta_w)| \leq 3 B^2 |\lambda_r|^{-2\radius} \sqrt{r (\eta + \Delta)} + \delta'  \label{eq:high_prob_bd1}
\end{align}
with probability at least
\begin{align*}
& 1 -  \exp\left(-\frac{\delta^2 (1-\delta)^3 p \left(\sqrt{\eta-\Delta} n\right)^3}{2L^3|\lambda_1|^{6\radius}}\right) - \exp\left(-\frac{\delta'^2 (1-\delta)^4 p \left(\sqrt{\eta-\Delta} n\right)^3}{L^3|\lambda_1|^{6\radius}}\right) \\ 
& \quad -  n O\Big( \exp(-n^{2\psi}(1-o(1)))\Big) - n \exp\left(- \frac{\delta^2 (n-1) \sqrt{\eta - \Delta}}{2 |\lambda_1|^{2\radius} L}\right).
\end{align*}
By selecting $\eta = \Theta\big(\Delta\big) = \Theta(r \lambda_{\max}^{4t} n^{-(\extra - \psi)})$ with a large enough constant so that $\eta - \Delta = \Theta(\eta)$, it follows that by the conditions that $\psi > 0$ and $\extra < \frac12$,
\begin{align*}
\eta \pm \Delta &= \Theta(r \lambda_{\max}^{4t} n^{-(\extra - \psi)}), \\
p (\sqrt{\eta-\Delta} n)^3  &= \Theta(r^{3/2} \lambda_{\max}^{6t} n^{\frac32 - \frac{\extra}{2} + \frac{3\psi}{2}}) = \Omega(n^{\frac54}), \\
n \sqrt{\eta-\Delta} &= \Theta(r^{1/2} \lambda_{\max}^{2t} n^{1-\frac{\extra - \psi}{2}}) = \Omega(n^{\frac34}).
\end{align*}
By substituting this choice of $\eta$ and $\delta = \frac12$, it follows that
\begin{align}
\MSE(\hat{F}) & = O\Big(r^2 (\lambda_{\max}/\lambda_r)^{4t} n^{-(\extra - \psi)}\Big).
\end{align}
By choosing $\delta' = n^{-(\extra-\psi)/2}$ such that $\delta' = \Theta(\sqrt{\eta})$ and $\delta'^2 p(\sqrt{\eta-\Delta} n)^3 = \Omega(n^{\frac34}) = \Omega(n^{2\psi})$ because $\psi < \frac38$. Therefore, by substituting into \eqref{eq:high_prob_bd1}, it follows that for any given $(u, v, w) \in [n]^3$, 
with probability $1 - O(n \exp(-\Theta(n^{2\psi}) )\Big)$, 
\begin{align}
 |\hat{F}(u, v, w) - f(\theta_u,\theta_v, \theta_w)| & = O\Big( r (\lambda_{\max}/\lambda_r)^{2t}  n^{-(\extra - \psi)/2}\Big).
\end{align}
Using union bound over choices of $(u, v, w) \in [n]^3$, it follows that the maximum entry-wise error is bounded above by 
$O\Big(n^{-(\extra - \psi)/2}\Big)$ with probability $1 - O(n^4 \exp(-\Theta(n^{2\psi}) )\Big)$. 
This completes the proof of Theorem \ref{main:thm}. 
\endproof

\medskip
\noindent{\em Concluding Proof of Theorem \ref{main:thm.pert}.} We follow similar line of argument as for proof of
Theorem \ref{main:thm}. As noted earlier,  property \ref{ass:good_distances.1} holds
with probability $1$, by Lemma \ref{lemma:dist.pert} property \ref{ass:good_distances.2} holds for a given tuple $(u, a) \in [n]^2$ with probability
$1-\alpha_1$ where $\alpha_1 = O\Big( \exp(-n^{2\psi}(1-o(1))) + n^{-6} \Big)$ for $\psi \in (0, \min(\extra,\frac38))$ and $\extra \in (0,\frac12)$, 
property \ref{ass:good_distances.3} holds with probability
$1-\alpha_2$ where $\alpha_2 = n \exp\left(- \frac{\delta^2 (n-1) \sqrt{\eta - \Delta}}{2 |\lambda_1|^{2\radius} L}\right)$ with
distance estimate $\hat{d} = \dist$ defined in \eqref{eq:dist} with 
\begin{align}
d(\theta_u, \theta_v) &= \|\Lambda^{2\radius+1} Q (e_u - e_v)\|_2^2, \nonumber \\
\bias(\eta) & = 3 B^2 |\lambda_r|^{-2\radius} \sqrt{r \eta}, \nonumber \\
\Delta & = \Theta(r \lambda_{\max}^{4t} n^{ - (\extra - \psi) } + \radius \bpert (1+\bpert)^{2\radius-1} +  \radius^2 \bpert^2 (1+\bpert)^{4\radius-2}  ), \nonumber \\
\meas(\eta') & = \frac{(1-\delta) \sqrt{\eta'}}{|\lambda_1|^{2\radius} L}, \label{eq:summary.1.pert}
\end{align}
for any $\eta > 0$, $\delta \in (0,1)$ and $\eta' = \eta - \Delta \in (0,|\lambda_1|^{4\radius} L^2)$.
By substituting the expressions for $\bias$, $\meas$, and $\alpha$ into Lemma \ref{lemma:nearest_neighbor}, it follows that for any $\delta' \in (0,1)$, 
\begin{align}
	 |\hat{F}(u, v, w) - f_r(\theta_u,\theta_v, \theta_w)| 
	&\leq 3 B^2 |\lambda_r|^{-2\radius} \sqrt{r (\eta + \Delta)} + \delta', \label{eq:high_prob_bd1.pert}
\end{align}
with probability at least
\begin{align*}
& 1 -  \exp\left(-\frac{\delta^2 (1-\delta)^3 p \left(\sqrt{\eta-\Delta} n\right)^3}{2L^3|\lambda_1|^{6\radius}}\right) - \exp\left(-\frac{\delta'^2 (1-\delta)^4 p \left(\sqrt{\eta-\Delta} n\right)^3}{L^3|\lambda_1|^{6\radius}}\right) \\ 
& \quad -  n O\Big( \exp(-n^{2\psi}(1-o(1)))  + n^{-6} \Big)  - n \exp\left(- \frac{\delta^2 (n-1) \sqrt{\eta - \Delta}}{2 |\lambda_1|^{2\radius} L}\right).
\end{align*}
By selecting 
\begin{align}
	\eta = \Delta + \min\big( \Delta, |\lambda_1|^{4\radius} L^2\big),\label{eq:eta_pert}
\end{align}
it follows by the conditions $\psi > 0$ and $\extra < \frac12$ that
\begin{align*}
&\eta + \Delta = \Theta(r \lambda_{\max}^{4t} n^{-(\extra - \psi)} + \radius \bpert (1+\bpert)^{2\radius-1} +  \radius^2 \bpert^2 (1+\bpert)^{4\radius-2}), \\
&\eta - \Delta = \Omega(n^{-(\extra - \psi)}), \\
&p (\sqrt{\eta-\Delta} n)^3  
					= \Omega(n^{\frac54}), \\
&n \sqrt{\eta-\Delta} 
				=\Omega(n^{\frac34}).
\end{align*}

By choosing $\delta' = n^{-(\extra-\psi)/2}$ such that $\delta' = O(\sqrt{\eta})$ and $\delta'^2 p(\sqrt{\eta-\Delta} n)^3 = \Omega(n^{\frac34}) = \Omega(n^{2\psi})$ because $\psi < \frac38$. Therefore, by substituting into \eqref{eq:high_prob_bd1.pert}, it follows that for any given $(u, v, w) \in [n]^3$, 
with probability $1 - O(n \exp(-\Theta(n^{2\psi}) )\Big) - O(n^{-5})$, 
\begin{align*}
	|\hat{F}(u, v, w) - f(\theta_u,\theta_v, \theta_w)| &\leq |\hat{F}(u, v, w) - f_r(\theta_u,\theta_v, \theta_w)| + |f_r(\theta_u,\theta_v, \theta_w) - f(\theta_u,\theta_v, \theta_w)|\\
	& = O\Big(r (\lambda_{\max}/\lambda_r)^{2t} n^{-(\extra - \psi)/2} + \radius \bpert (1+\bpert)^{2\radius-1} + \sqrt{\radius \bpert (1+\bpert)^{2\radius-1}}\Big),
\end{align*}
where the bias between $f_r$ and $f$ is bounded by $\bpert$, and dominated by the bound between $\hat{F}$ and $f_r$. The final result follows from a union bound over $(u, v, w) \in [n]^3$.

The bound on MSE also follows by substituting $\delta = \frac12$ and the same choice of $\eta$ from \eqref{eq:eta_pert} into Lemma \ref{lemma:nearest_neighbor}, and again noting that the bias between $F$ and $F_r$ is dominated by the error between $\hat{F}$ and $F_r$ such that 
\begin{align*}
	\MSE(\hat{F})
	& = O\left(r^2 (\lambda_{\max}/\lambda_r)^{4t} n^{-(\extra - \psi)}+ \radius \bpert (1+\bpert)^{2\radius-1}\right) + O\left( \radius^2 \bpert^2 (1+\bpert)^{4\radius-2}\right).
\end{align*}
This completes the proof of Theorem \ref{main:thm.pert}. 
\endproof

\subsection{Proof of Corollary \ref{cor:anchor_vertices}}

\proof
The proof follows the same format as the proof of Theorem \ref{main:thm}. Let us denote the set of anchor vertices as $\mathcal{Y}$ such that $|\mathcal{Y}| = y$, and they are assumed to be chosen uniformly at random amongst all vertices. For a pair of vertices $(a,b) \in \mathcal{Y}^2$, the estimate $\hat{F}(a,b)$ follows the same exact computation as described in Section \ref{sec:algo_formal}. As a result it follows from Theorem \ref{main:thm} that with high probability,
\begin{align*}
	\max_{(a,b,c) \in \mathcal{Y}^3} |F(a,b,c) - \hat{F}(a,b,c)| & = O(n^{ - (\extra - \psi)/2}). 
\end{align*}

Next we need to show the error is not degraded for non-anchor vertices $(u,v,w) \in ([n] \setminus \mathcal{Y})^3$. Let $\zeta: [n] \to \mathcal{Y}$ denote the function that maps from each vertex to the closest anchor vertex as determined by the true distances $d$, 
\[\zeta(u) = \argmin_{a \in \mathcal{A}} d(\theta_u,\theta_a),\]
and let $\hat{\zeta}: [n] \to \mathcal{Y}$ denote the data-dependent function that maps from each vertex to the closest anchor vertex as determined by the computed distances $\hat{d}$,
\[\hat{\zeta}(u) = \argmin_{a \in \mathcal{A}} \hat{d}(u,a).\]

The estimate for non-anchor vertices is then taken to be the estimate computed for the corresponding closest anchor vertices,
\[\hat{F}(u,v,w) = \hat{F}(\hat{\zeta}(u),\hat{\zeta}(v),\hat{\zeta}(w)),\]
such that 
\begin{align*}
|\hat{F}(u,v,w) - F(u,v,w)|
&\leq |\hat{F}(\hat{\zeta}(u),\hat{\zeta}(v),\hat{\zeta}(w)) - F(\hat{\zeta}(u),\hat{\zeta}(v),\hat{\zeta}(w))| + |F(\hat{\zeta}(u),\hat{\zeta}(v),\hat{\zeta}(w)) - F(u,v,w)|.
\end{align*}
By Theorem \ref{main:thm}, as $(\hat{\zeta}(u), \hat{\zeta}(v),\hat{\zeta}(w)) \in \mathcal{Y}^3$, the first term is bounded by $O(n^{ - (\extra - \psi)/2})$ with high probability.
By property \ref{ass:good_distances.1},
\begin{align*}
	|F(\hat{\zeta}(u),\hat{\zeta}(v),\hat{\zeta}(w)) - F(u,v,w)| 
	& \leq 3 B^2 \sqrt{r} |\lambda_r|^{-2\radius} 
	\sqrt{\max\big(d(\theta_u,\theta_{\hat{\zeta}(u)}), d(\theta_v,\theta_{\hat{\zeta}(v)}), d(\theta_w,\theta_{\hat{\zeta}(w)})\big)}.
\end{align*}

The modified algorithm computes distances using Step 4 of the described algorithm between all pairs of anchor vertices, as well as all pairs $(u,a)$ such that $u \in [n]$ and $a \in \mathcal{Y}$. For each computed distance between a pair $(u,a)$, 
by Lemma \ref{lemma:dist}, property \ref{ass:good_distances.2} holds for $\Delta = \Theta(n^{-\kappa+\psi})$ with probability
$1-\alpha_1$ where $\alpha_1 = O\Big( \exp(-n^{2\psi}(1-o(1)))\Big)$ for $\psi \in (0, \min(\extra,\frac38))$ and $\extra \in (0,\frac12)$.

In order to bound $\max_{u\in[n]} d(\theta_u,\theta_{\hat{\zeta}(u)})$, we argue that for every $u \in [n]$, with high probability
\begin{align*}
d(\theta_u,\theta_{\hat{\zeta}(u)})
&\overset{(a)}{\leq} \hat{d}(u,\hat{\zeta}(u)) + \Delta \\
&\overset{(b)}{\leq}  \hat{d}(u,\zeta(u)) + \Delta \\
&\overset{(c)}{\leq} d(\theta_u,\theta_{\zeta(u)}) + 2\Delta \\
&\overset{(d)}{=} \min_{a \in \mathcal{A}} d(\theta_u,\theta_a) +2\Delta,
\end{align*}
where (a) and (c) hold with high probability for $\Delta = \Theta(n^{-\kappa+\psi})$ as a result of property  \ref{ass:good_distances.2}, and (b) and (d) follow from the definition of the functions $\hat{\zeta}$ and $\zeta$.

To bound  $\min_{a \in \mathcal{A}} d(\theta_u,\theta_a)$, we use \eqref{eq:nn_probablity} from property \ref{ass:good_distances.3} to show that
for any $u \in [n]$, $\eta = \Theta(n^{-\kappa+\psi})$, and 
$y = \Omega((p^2 n^3)^{1/4})=\Omega(n^{\extra/2})$
\begin{align*}
\Prob{\min_{a \in \mathcal{Y}} d(\theta_u,\theta_a) > \eta ~\big|~ \theta_u} 
&= \prod_{a \in \mathcal{Y}} \Prob{d(\theta_u,\theta_a) > \eta ~\big|~ \theta_u} \\ 
&\leq \left(1 - \frac{\sqrt{\eta}}{|\lambda_1|^{2\radius} L}\right)^y \\
&\leq \exp(- \frac{y \sqrt{\eta'}}{|\lambda_1|^{2\radius} L}) = \exp(-\Theta(n^{\psi/2})).
\end{align*}

As a result, the max entrywise error is bounded by $O(n^{ - (\extra - \psi)/2})$ with high probability, which can be used to show the MSE bound of $O(n^{ - (\extra - \psi)})$.
\endproof

\section{Proving distance estimate is close}\label{sec:proofoflemma}

In this section we argue that the distance estimate as defined in \eqref{eq:dist} is close to an ideal distance
as claimed in the Lemma \ref{lemma:dist}.

\subsection{Regular enough growth of breadth-first-search (BFS) tree}

The distance estimation algorithm of interest constructs a specific BFS tree for each 
vertex $u \in [n]$ with respect to the bipartite graph between vertices $[n]$ and 
$\cV_A$ where recall that $\cV_A = \{(b,c)\in[n/2]^2 ~\text{s.t.}~ b < c\}$. The BFS tree construction
is done so that vertices at different levels do not share coordinates, i.e. if vertex $a \in [n]$ 
is visited in an earlier layer of the BFS tree, then no vertex corresponding to $(a,b)$ for any 
$b \in [n]$ can be visited subsequently. Similarly, if $(a,b)$ is visited in the BFS tree, then no 
subsequent vertices including either coordinates $a$ or $b$ can be visited. The restriction is placed across different depths, whereas pairs of vertices $(a,b)$ and $(a,c)$ can be visited in the same depth. Amongst various 
valid BFS trees, the algorithm chooses one arbitrarily (for example, see Figure \ref{fig:example}(c)). 

We recall some notations. Consider a valid BFS tree rooted at vertex $u \in [n]$ which 
respects the constraint that no coordinate is visited more than once. Recall that
for any $s \geq 1$, $\cU_{u,s} \subseteq \cV_A$ denotes the set of vertices at depth $(2s-1)$ and   
$\cS_{u,s} \subseteq [n]$ denotes the set of vertices at depth $2s$ of the 
BFS tree, $\cB_{u,s} = \cup_{l \in \lceil s/2 \rceil} \cU_{u,l} \cup_{h \in \lfloor s/2 \rfloor} \cS_{u,h}$, 
 $\cG(\cB_{u,s})$ denotes all the information corresponding to the subgraph restricted 
 to the first $s$ layers of the BFS tree which includes $\cB_{u,s}$, the latent variables 
 $\{\theta_a\}_{a \in \cB_{u,s}}$ and the edge weights $\{M_1(a,b,c)\}_{a,(b,c) \in \cB_{u,s}}$.
The vector $N_{u,s} \in [0,1]^n$ is such that the $a$-th coordinate is equal to the product 
of weights along the path from $u$ to $a$ in the BFS tree for $a \in \cS_{u,s}$, and the
vector $W_{u,s} \in [0,1]^{\cV_A}$ is such that the $(b,c)$-th coordinate is equal to the product 
of weights along the path from $u$ to $(b,c)$ in the BFS tree for $(b,c) \in \cU_{u,s}$. 
The normalized vectors are $\tN_{u,s} = N_{u,s} / |\cS_{u,s}|$ and 
$\tW_{u,s} = W_{u,s} / |\cU_{u,s}|$ for $u \in [n], ~s \geq 1$. 

In a valid BFS tree rooted at vertex $u$, $\pi_u(a)$ denotes the parent of $a \in [n]$, 
and $\pi_u(b,c)$ denotes the parent of $(b,c) \in \cV_A$. The neighborhood vectors 
satisfy recursive relationship,
\begin{align*}
N_{u,s}(a) &= \Mp(a, \pi_u(a)) W_{u,s}(\pi_u(a)) \Ind_{(a \in \cS_{u,s})} \\
W_{u,s}(b,c) &= \Mp(\pi_u(b,c), (b,c)) N_{u,s-1}(\pi_u(b,c)) \Ind_{((b,c) \in \cU_{u,s})}
\end{align*}
with $N_{u,0} = e_u$. We state the following result regarding regularity in the growth
of the BFS tree. 
\begin{lemma}\label{lem:growth}
Let $p = n^{-3/2+\extra}$ for $\extra \in (0,\frac12)$. Let $\radius$ be as defined in \eqref{eq:t}. 
For a given $\delta \in (0,\frac12)$ and for any $u \in [n]$, 
with probability $1 - O\Big( n \exp\big(-\Theta(n^{2\extra}) \big) \Big)$,
or all $s \in [\radius-1]$, 
\begin{align}
|\cS_{u,s}| & \in \left[(1-\delta)^{2s} 2^{-3s} n^{2\extra s} (1-o(1)), (1+\delta)^{2s} 2^{-s} n^{2\extra s} \right], \label{eq.lem1.a}
\end{align}
for $s = \radius$,
\begin{align}
|\cS_{u,\radius}| \in \left[(1-\delta)^{2\radius} 2^{-3\radius-1} n^{2\extra \radius} (1-o(1)), (1+\delta)^{2\radius} 2^{-\radius} n^{2\extra \radius} \right],\label{eq.lem1.a2}
\end{align}
and for $s \in [\radius]$,
\begin{align}
|\cU_{u,s}| & \in \left[(1-\delta)^{2s-1} 2^{-3s} n^{\frac12 + \extra (2s-1)} (1-o(1)), (1+\delta)^{2s-1} 2^{-s} n^{\frac12 + \extra (2s-1)}\right]. \label{eq.lem1.b}
\end{align}
The set of single coordinate vertices visited within depth $2\radius$ is $o(n)$,
\begin{align}
|\cup_{\ell=0}^{\radius} \cS_{u,\ell}| = o(n).\label{eq.lem1.c}
\end{align}
\end{lemma}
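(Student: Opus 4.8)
The plan is to establish \eqref{eq.lem1.a}--\eqref{eq.lem1.c} simultaneously by induction on the depth, viewing the construction of the valid BFS tree rooted at $u$ as a sequential edge-exploration in the bipartite graph on $[n]\cup\cV_A$. I would reveal the layers in the order $\cU_{u,1},\cS_{u,1},\cU_{u,2},\cS_{u,2},\dots$; when passing to layer $\cU_{u,s}$ (respectively $\cS_{u,s}$) I expose, for each vertex in the current frontier $\cS_{u,s-1}$ (respectively $\cU_{u,s}$), the presence or absence of every incident edge not yet examined. Since each distinct triple of the symmetric tensor lies in $\cEp$ independently with probability $p$, conditioned on $\cG(\cB_{u,2s-2})$ (respectively $\cG(\cB_{u,2s-1})$) the unexposed edges are i.i.d.\ $\mathrm{Bernoulli}(p)$, and a vertex on the opposite side joins the next layer exactly when (i) it is not blocked by the no-repeated-coordinate rule and (ii) it has at least one fresh edge into the frontier; a vertex reachable from several frontier vertices is harmless, since a parent is then chosen uniformly to form the tree.

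This yields two one-step recursions. From $\cS_{u,s-1}\subseteq[n]$ to $\cU_{u,s}\subseteq\cV_A$: a pair $(b,c)$ with both coordinates unvisited joins with conditional probability $1-(1-p)^{|\cS_{u,s-1}|}=p\,|\cS_{u,s-1}|\,(1-o(1))$, and the number of such admissible pairs is $\binom{n/2}{2}(1-o(1))$, so $\E[\,|\cU_{u,s}|\mid\cG(\cB_{u,2s-2})\,]=p\binom{n/2}{2}|\cS_{u,s-1}|(1\pm o(1))$, where $p\binom{n/2}{2}=2^{-3}n^{1/2+\extra}(1-o(1))$ drives the lower bound and the crude estimate $p\binom{n/2}{2}\le 2^{-1}n^{1/2+\extra}$ drives the stated upper bound. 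From $\cU_{u,s}$ to $\cS_{u,s}\subseteq[n]$: an unvisited $a\in[n]$ joins with conditional probability $1-(1-p)^{|\cU_{u,s}|}=p\,|\cU_{u,s}|(1-o(1))$, and the number of admissible $a$ equals $n(1-o(1))$ for $s<\radius$; for $s=\radius$ it is only guaranteed to lie in $[\tfrac{n}{2}(1-o(1)),\,n]$, because every coordinate occurring in $\cU_{u,1},\dots,\cU_{u,\radius}$ lives in $[n/2]$ and $|\cU_{u,\radius}|$ may already be a constant fraction of $\binom{n/2}{2}$; this half-factor is exactly the extra $2^{-1}$ in \eqref{eq.lem1.a2}. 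Multiplying the two steps gives per-level growth factor $2^{-3}n^{2\extra}(1\pm o(1))$ for $s<\radius$, and iterating from $|\cS_{u,0}|=1$ yields \eqref{eq.lem1.a}--\eqref{eq.lem1.b}; equation \eqref{eq.lem1.c} then follows since $|\cup_{\ell\le\radius}\cS_{u,\ell}|=\Theta(|\cS_{u,\radius}|)=O(n^{2\extra\radius})=o(n)$, using that $\radius=\lceil 1/(4\extra)\rceil$ forces $2\extra\radius<1$ (for $\radius\ge 2$ one has $2\extra\radius<\radius/(2(\radius-1))\le 1$, and for $\radius=1$, $2\extra\radius=2\extra<1$).

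For the concentration, on the event that the previous layer lies in its asserted range, $|\cU_{u,s}|$ (respectively $|\cS_{u,s}|$) is a sum of independent indicators, one per admissible target, with the success probabilities above, so a multiplicative Chernoff bound localizes it to within a factor $(1\pm\delta)$ of its conditional mean with failure probability $\exp(-\Theta(\delta^2\cdot\mathrm{mean}))$. Chaining over the $2s\le 2\radius=O(1)$ layers produces the prefactors $(1\pm\delta)^{2s}$ and $(1\pm\delta)^{2s-1}$ in the statement; the smallest conditional mean encountered is that of $|\cS_{u,1}|$, of order $n^{2\extra}$, so the total failure probability for a fixed root $u$ is $O(\radius\, e^{-\Theta(n^{2\extra})})=O(e^{-\Theta(n^{2\extra})})$, and a union bound over $u\in[n]$ gives the claimed $1-O(n\,e^{-\Theta(n^{2\extra})})$.

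The delicate part, which I expect to be the main obstacle, is keeping the conditional-independence structure clean under the no-repeated-coordinate rule (which couples the admissible targets to the exposed history) and, above all, checking that every multiplicative correction stays genuinely lower order up through level $\radius$: one needs $p\,|\cS_{u,s-1}|=o(1)$ and $p\,|\cU_{u,s}|=o(1)$ (so that $1-(1-p)^m=pm(1-o(1))$) and that the set of already-visited coordinates has size $o(n)$ for $s<\radius$, and all of these reduce to the single inequality $2\extra\radius<1$ --- precisely the reason $\radius$ is taken to be $\lceil 1/(4\extra)\rceil$. The accompanying bookkeeping, ensuring each edge is exposed only once (for instance that edges from $\cS_{u,s-1}$ into $\cV_A\setminus\cup_{\ell\le s-1}\cU_{u,\ell}$ remain fresh when layer $\cU_{u,s}$ is built), is routine but has to be carried out carefully.
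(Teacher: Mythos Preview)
Your proposal is correct and follows essentially the same approach as the paper: sequential layer-by-layer exposure of the BFS tree, conditional binomial structure for $|\cU_{u,s}|$ and $|\cS_{u,s}|$, multiplicative Chernoff bounds yielding the $(1\pm\delta)^{2s}$ and $(1\pm\delta)^{2s-1}$ prefactors, and chaining over the $O(1)$ many layers with the dominant failure probability coming from the $\Theta(n^{2\extra})$-sized mean of $|\cS_{u,1}|$. One small imprecision: at depth $\radius$ the reason for the extra $2^{-1}$ is not that $|\cU_{u,\radius}|$ is a constant fraction of $\binom{n/2}{2}$ (it is $O(n^{1+\extra})=o(n^2)$), but rather that the number of \emph{coordinates} in $[n/2]$ appearing in $\cU_{u,\radius}$ can be $\Theta(n)$, so only the upper half $[n]\setminus[n/2]$ is guaranteed unblocked---exactly the argument the paper gives.
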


\begin{proof}
First observe that if $\radius$ is as defined in \eqref{eq:t} with $\extra \in (0,\frac12)$, then 
\[\radius = \Big\lceil \frac{\ln(n)}{2\ln(p^2 n^3)} \Big\rceil = \Big\lceil \frac{1}{4\extra} \Big\rceil \] 
such that
\begin{align}
\frac{1}{4 \extra} \leq \radius < \frac{1}{4 \extra} + 1.
\end{align}
Note that $\radius$ is constant with respect to $n$.

For any $s \in [\radius]$, we study the growth of $|\cS_{u,s}|$ and $|\cU_{u,s}|$ conditioned on 
$\cB_{u,2s-1} \cup \cU_{u,s}$ and $ \cB_{u,2(s-1)} \cup \cS_{u,s-1}$ respectively. 
To that end, conditioned on the set $\cB_{u,2s-1}$ and the set $\cU_{u,s}$, 
any vertex $i \in [n] \setminus \cB_{u,2s-1}$ is in $\cS_{u,s}$ independently 
with probability $(1-(1-p)^{|\cU_{u,s}|})$. Thus the number of vertices in 
$\cS_{u,s}$ is distributed as a binomial random variable. By Chernoff's bound,
\begin{align}
&\Prob{|\cS_{u,s}| \notin (1 \pm \delta) (|[n] \setminus \cB_{u,2s-1}|) (1-(1-p)^{|\cU_{u,s}|}) ~|~ \cB_{u,2s-1}, \cU_{u,s}} \nonumber\\
&\leq 2 \exp\left(-\frac13 \delta^2 (|[n] \setminus |\cB_{u,2s-1}|) (1-(1-p)^{|\cU_{u,s}|})\right) \label{eq.lg.1}.
\end{align}
Similarly, conditioned on the sets $\cB_{u,2(s-1)}$ and $\cS_{u,s-1}$, 
the set of vertices in $\cU_{u,s}$ is equivalent to the number of edges in a 
graph with vertices $[n/2] \setminus \cB_{u,2(s-1)}$ and an edge between $(i,j)$ 
if there is some $h \in \cS_{u,s-1}$ such that $(i,j,h) \in \cEp$. 
This is an Erdos-Renyi graph, as each edge is independent with probability 
$(1 - (1-p)^{|\cS_{u,s-1}|})$. By Chernoff's bound,
\begin{align}
&\Prob{|\cU_{u,s}| \notin (1 \pm \delta) \binom{|[n/2] \setminus \cB_{u,2(s-1)}|}{2} (1-(1-p)^{|\cS_{u,s-1}|}) ~|~ \cB_{u,2(s-1)}, \cS_{u,s-1}} \nonumber\\
&\leq 2 \exp\left(-\frac13 \delta^2 \binom{|[n/2] \setminus \cB_{u,2(s-1)}|}{2} (1-(1-p)^{|\cS_{u,s-1}|})\right) \label{eq.lg.2}. 
\end{align}
Let us define the events
\begin{align}
\cA^1_{u,s}(\delta) & =  \Big\{|\cS_{u,s}| \in (1 \pm \delta) (|[n] \setminus \cB_{u,2s-1}|) (1-(1-p)^{|\cU_{u,s}|})\Big\}, \\
\cA^2_{u,s}(\delta) & = \Big\{|\cU_{u,s}| \in (1 \pm \delta) \binom{|[n/2] \setminus \cB_{u,2(s-1)}|}{2} (1 - (1-p)^{|\cS_{u,s-1}|})\Big\}.
\end{align}
Since $ p \in (0,1)$ and hence $1-(1-p)^x \leq px$ for all $ x \geq 1$, we have that under 
events $\cA^1_{u,s}(\delta) \cap \cA^2_{u,s}(\delta)$,
\begin{align*}
|\cS_{u,s}| & \leq (1 + \delta) n p |\cU_{u,s}| ~\text{ and }~
|\cU_{u,s}|  \leq (1 + \delta) \binom{n/2}{2} p |\cS_{u, s-1}|,
\end{align*}
which together implies that
conditioned on event $\cap_{h=1}^{s} \big( \cA^1_{u,h}(\delta) \cap \cA^2_{u,h}(\delta) \big)$, for all
$s \in [\radius]$ 
\begin{align}
|\cS_{u,s}| &\leq \left((1 + \delta)^2 \frac{p^2 n^3}{8} \right)^s ~=~(1 + \delta)^{2s} 2^{-3s} n^{2\extra s}, \label{eq.lg.3}
\end{align}
and 
\begin{align}
|\cU_{u,s}| 
&\leq (1 + \delta) \frac{pn^2}{8} \left((1 + \delta)^2 \frac{p^2 n^3}{8} \right)^{s-1} 
 =~(1+\delta)^{2s-1} 2^{-3s} 
n^{\frac12+\extra(2s-1)}. \label{eq.lg.4}
\end{align}
Therefore, for any $s \in [\radius-1]$ such that $s \leq \frac{1}{4\extra}$ by the definition of $\radius$, 
\begin{align}
|\cB_{u,2s}| 
&\leq 1 + \sum_{\ell=1}^s (2|\cU_{u,\ell}| + |\cS_{u,\ell}|) \nonumber \\
&\leq 1 + \sum_{\ell=1}^s \left(2 (1 + \delta) \frac{pn^2}{8} \left((1 + \delta)^2 \frac{p^2 n^3}{8} \right)^{\ell-1} + \left((1+\delta)^2 \frac{p^2 n^3}{8} \right)^{\ell} \right) \nonumber \\
&= 1 + \left(2 (1 + \delta) \frac{pn^2}{8} + \left((1 + \delta)^2 \frac{p^2 n^3}{8} \right) \right) \sum_{\ell=1}^{s} \left((1+\delta)^2 \frac{p^2 n^3}{8} \right)^{\ell-1} \nonumber \\
&= O\big( pn^2 (p^2 n^3)^{s-1} \big) ~=~O\big(n^{\extra (2s-1) + \frac12}\big)~=~
O\big(n^{1-\extra}\big) ~=~o(n). \label{eq.lg.5}
\end{align}
With a similar argument we can show that 
\begin{align}
\sum_{\ell=0}^t |\cS_{u,\ell}| &\leq \sum_{\ell=0}^{\radius} \left((1+\delta)^2 \frac{p^2 n^3}{8} \right)^{\ell} = O((p^2 n^3)^{\radius}) = O(n^{2\extra \radius}) = o(n). \label{eq.lg.6}
\end{align}
The last step follows from checking that when $\extra \in [\frac14, \frac12)$, $\radius = 1$ such that $n^{2\extra \radius} = o(n)$, and when $\extra \in (0, \frac14)$, from $\radius \leq \frac{1}{4\extra} + 1$, it follows such that $n^{2 \extra \radius} = O(n^{\frac12 + 2\extra}) = o(n)$ as $\extra < \frac14$.
Recall that we split the coordinates such that $\cup_{\ell=1}^t \cU_{u,\ell} \subset \cV_A$, and the coordinates represented in $(a,b) \in \cV_A$ are such that $a \in [n/2]$ and $b \in [n/2]$. Therefore by \eqref{eq.lg.6}, 
\[|[n] \setminus \cB_{u,2\radius-1}| \geq n/2 - \sum_{\ell=0}^{t-1} |\cS_{u,\ell}| = \frac{n}{2} (1-o(1)).\]

Using \eqref{eq.lg.5}, we establish lower bounds on $|\cS_{u,s}|$ and $|\cU_{u,s}|$ next. Note that, for $p \in (0,1)$, $1-p \leq e^{-p}$ and for any $x \in (0,1)$, $e^{-x} \leq 1 - x + x^2$. It follows that $1-(1-p)^x \geq px (1-px)$. For $s \in [\radius]$ we can show that
\begin{align*}
|\cU_{u,s}|
&\geq (1 - \delta) \frac{(n (1-o(1)))^2 (1- o(1))}{8} (1 - (1-p)^{|\cS_{u,s-1}|}) \\
&\geq (1 - \delta) \frac{n^2}{8} p|\cS_{u,s-1}| (1- p|\cS_{u,s-1}|) (1- o(1)) \\
&= (1 - \delta) \frac{n^2}{8} p |\cS_{u,s-1}| (1- o(1)).
\end{align*}
For $s \in [\radius - 1]$ we can show that
\begin{align*}
|\cS_{u,s}|
&\geq (1 - \delta) n (1 - o(1)) (1-(1-p)^{|\cU_{u,s}|}) \\
&\geq (1 - \delta) n (1 - o(1)) p|\cU_{u,s}| (1- p|\cU_{u,s}|) \\
&\geq (1 - \delta) n (1 - o(1)) p|\cU_{u,s}| (1-o(1)) \\
&= (1 - \delta) p n |\cU_{u,s}| (1-o(1)),
\end{align*}
and for $s = t$, 
\[|\cS_{u,t}| \geq (1 - \delta) \frac{n}{2} (1 - o(1)) (1-(1-p)^{|\cU_{u,t}|}).\]
Then for $s \in [\radius]$,
\begin{align}
|\cU_{u,s}|
&\geq (1 - \delta)^2 \frac{p^2 n^3}{8} |\cU_{u,s-1}| (1- o(1)) \nonumber \\
&\geq \left((1 - \delta)^2 \frac{p^2 n^3}{8}\right)^{s-1} |\cU_{u,1}| (1- o(1))  \nonumber \\
&\geq \left((1 - \delta)^2 \frac{p^2 n^3}{8}\right)^{s-1} (1 - \delta) \frac{p n^2}{8} (1- o(1)) \nonumber \\
& =~ (1- \delta)^{2s-1} 2^{-3s} n^{\frac12+\extra(2s-1)}(1-o(1)); \label{eq.lg.8}
\end{align}
for $s \in [\radius-1]$,
\begin{align}
|\cS_{u,s}|
&\geq (1 - \delta)^2 p n \frac{n^2}{8} p |\cS_{u, s-1}| (1- o(1)), \nonumber \\
&\geq \left((1 - \delta)^2 \frac{p^2 n^3}{8} \right)^s (1- o(1)),  \nonumber \\
& =~ (1 - \delta)^{2s} 2^{-3s} n^{2\extra s} (1-o(1)); \label{eq.lg.7}
\end{align}
and for $s =\radius$, 
$|\cS_{u,\radius}| \geq (1 - \delta)^{2\radius} 2^{-3\radius-1} n^{2\extra \radius} (1-o(1))$.

To conclude the proof of Lemma \ref{lem:growth}, we need to argue that 
$\cap_{s=1}^\radius \big( \cA^1_{u,s}(\delta) \cap \cA^2_{u,s}(\delta)\big)$ holds with high probability. To that end, 
\begin{align*}
& \Prob{\neg( \cap_{s=1}^{\radius} (\cA^1_{u,s}(\delta) \cap \cA^2_{u,s}(\delta)))} ~=~\Prob{\cup_{s=1}^{\radius} \neg(\cA^1_{u,s}(\delta) \cap \cA^2_{u,s}(\delta))} \\ & =~ \sum_{s=1}^\radius \Prob{\neg(\cA^1_{u,s}(\delta) \cap \cA^2_{u,s}(\delta)) \cap_{h=1}^{s-1} (\cA^1_{u,h}(\delta) \cap \cA^2_{u,h}(\delta))}  \\
 & \leq~\sum_{s=1}^\radius \Prob{\neg(\cA^1_{u,s}(\delta) \cap \cA^2_{u,s}(\delta)) \, | \, \cap_{h=1}^{s-1} (\cA^1_{u,h}(\delta) \cap \cA^2_{u,h}(\delta))} \\
& \leq \sum_{s=1}^\radius \Prob{\neg\cA^1_{u,s}(\delta)  \, | \, \cap_{h=1}^{s-1} (\cA^1_{u,h}(\delta) \cap \cA^2_{u,h}(\delta))} \\
& \qquad 
+ \sum_{s=1}^\radius \Prob{\neg\cA^2_{u,s}(\delta) \, | \, \cA^1_{u,s}(\delta) \cap_{h=1}^{s-1} (\cA^1_{u,h}(\delta) \cap \cA^2_{u,h}(\delta))}.
\end{align*}
We bound the each of the two summation terms on the right hand side in the last inequality next. 
Using \eqref{eq.lg.1} and \eqref{eq.lg.3}, we have
\begin{align*}
& \sum_{s=1}^\radius \Prob{\neg\cA^1_{u,s}(\delta)  \, | \, \cap_{h=1}^{s-1} (\cA^1_{u,h}(\delta) \cap \cA^2_{u,h}(\delta))} \\
& \leq \sum_{s=1}^{\radius-1} 2 \exp\left(-\frac13 \delta^2 (1 - \delta) \frac{p^2 n^3}{2} \left((1 - \delta)^2 \frac{p^2 n^3}{2}\right)^{s-1} (1- o(1))\right) \\
&\qquad + 2 \exp\left(-\frac13 \delta^2 (1 - \delta) \frac{p^2 n^3}{4} \left((1 - \delta)^2 \frac{p^2 n^3}{2}\right)^{\radius-1} (1- o(1))\right) \\
&\leq 4 \exp\left(-\frac13 \delta^2 (1 - \delta) \frac{p^2 n^3}{2} (1- o(1))\right)~=~O\Big(\exp\big(-\Theta(n^{2\extra})\big)\Big).
\end{align*}
Similarly, using \eqref{eq.lg.2} and \eqref{eq.lg.4}, we have
\begin{align*}
& \sum_{s=1}^\radius \Prob{\neg\cA^2_{u,s}(\delta)  \, | \, \cA^1_{u,s}(\delta) \cap_{h=1}^{s-1} (\cA^1_{u,h}(\delta) \cap \cA^2_{u,h}(\delta))} \\
&\leq \sum_{s=1}^{\radius+1} 2 \exp\left(-\frac13 \delta^2 \frac{n^2 p}{2} \left((1 - \delta)^2 \frac{p^2 n^3}{2} \right)^{s-1} (1-o(1)) \right) \\
&\leq 4\exp\left(-\frac13 \delta^2 \frac{n^2 p}{2} (1-o(1)) \right)~=~O\Big(\exp\big(-\Theta(n^{\frac12 + \extra})\big)\Big).
\end{align*}
Putting it all together, we have that 
\begin{align*}
\Prob{\neg( \cap_{s=1}^{\radius} (\cA^1_{u,s}(\delta) \cap \cA^2_{u,s}(\delta)))} 
&\leq O\Big(\exp\big(-\Theta(n^{2\extra})\big)\Big) + 
O\Big(\exp\big(-\Theta(n^{\frac12 + \extra})\big)\Big) \\
&=~ O\Big(\exp\big(-\Theta(n^{2\extra})\big)\Big),
\end{align*}
since $\extra \in (0,\frac12)$. By union bound over all $u \in [n]$, we obtain the desired bound on the probability of error. This concludes the proof of Lemma \ref{lem:growth}. 
\end{proof}

\subsection{Concentration of Quadratic Form One} \label{sec:conc_quadratic_1}

Let $\cA^3_{u, \radius}(\delta)$ denote the event that \eqref{eq.lem1.a} holds for all $s \in [\radius-1]$, \eqref{eq.lem1.a2} holds, \eqref{eq.lem1.b} holds for all $s \in [\radius]$, and \eqref{eq.lem1.c} holds. 
Lemma \ref{lem:growth} established that this event holds with high probability. 
Conditioned on the event $\cA^3_{u, \radius}(\delta)$, we prove that a specific quadratic form concentrates around its mean. 
This will be used as the key property to eventually establish that the distance estimates are a good approximation to the 
ideal distances. 
\begin{lemma} \label{lemma:nhbrhd_vectors}
Let $p = n^{-3/2+\extra}$ for $\extra \in (0,\frac12)$, $\radius$ as defined in \eqref{eq:t}, 
$\delta \in (0,\frac12)$, and $\psi \in (0,\extra)$.
For any $u \in [n]$, with probability $1 - 2 \exp(-n^{2\psi}(1-o(1)))$, 
\[|e_k^T Q \tN_{u,\radius} - e_k^T \Lambda^{2\radius} Q e_{u}| < \frac{ 16 \lambda_k^{2\radius-2} n^{\psi}} {(1-\delta) n^{\extra}}.\]
\end{lemma}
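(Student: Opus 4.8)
The plan is to prove, by induction on the layer index $s \in \{0,1,\dots,\radius\}$, a concentration statement for the two scalars $X_s^{(k)} := e_k^T Q\tN_{u,s}$ and $Y_s^{(k)} := e_k^T\cQ\tW_{u,s}$ (one for each $k \in [r]$), of the form $X_s^{(k)} = \lambda_k^{2s}q_k(\theta_u) + \xi_s^{(k)}$ and $Y_s^{(k)} = \lambda_k^{2s-1}q_k(\theta_u) + (\cdots)$, with errors that grow in a controlled way across the $O(1)$ many layers; the lemma is then the case $s = \radius$. The base case is exact, $X_0^{(k)} = e_k^T Q e_u = q_k(\theta_u)$. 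I condition throughout on the latent variables $\{\theta_a\}_{a\in[n]}$, imposing the ``good sample'' event that $\sum_{a\in S} q_k(\theta_a)q_h(\theta_a) = |S|\,\delta_{kh} + O(\sqrt{n}\,n^{\psi})$ for all $k,h\in[r]$ and all relevant index sets $S$ (each of size $\Theta(n)$), which holds with probability $1 - \exp(-\Theta(n^{2\psi}))$ by Hoeffding; I also condition on the regular-growth event $\cA^3_{u,\radius}(\delta)$ of Lemma~\ref{lem:growth}, whose complement has probability $O(n\exp(-\Theta(n^{2\extra})))$ and is therefore negligible against $\exp(-\Theta(n^{2\psi}))$ since $\psi < \extra$.

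Each inductive step is two half-steps, $\tN_{u,s-1}\to\tW_{u,s}\to\tN_{u,s}$, handled by the same recipe. I reveal the edges of the next BFS layer and write the relevant \emph{unnormalized} quantity, $e_k^T\cQ W_{u,s}$ or $e_k^T Q N_{u,s}$, as a sum of terms that are mutually independent conditional on $\{\theta_a\}$ and on the previously revealed subgraph $\cG(\cB_{u,2s-2})$ (resp. $\cG(\cB_{u,2s-1})$): indexed by the candidate pairs $(b,c)$ for the $\cS\to\cU$ step, and by the candidate single coordinates $a'\in[n]\setminus\cB_{u,2s-1}$ for the $\cU\to\cS$ step. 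Each term is bounded in absolute value by $B^2$ (resp. $B$) and, crucially, most terms vanish, so the summand variances total only the layer size; Bernstein's inequality then gives a deviation $O(n^{\psi}\sqrt{\text{layer size}})$ with failure probability $\exp(-\Theta(n^{2\psi}))$, using that every layer has size $\gg n^{2\psi}$ on $\cA^3_{u,\radius}(\delta)$. For the conditional mean I use (i) orthonormality of the $q_k$, so $\E_{\theta}[q_k(\theta)M_1(\cdot)\mid\text{edge present}]$ produces a clean $\lambda_k q_k q_k$ plus off-diagonal pieces controlled by the good-sample event, and (ii) exchangeability of the candidate parents, which forces $\Prob{\pi_u(a')=(b,c)}$ (resp. $\Prob{\pi_u(b,c)=h}$) to be identical across all admissible parents, so the contribution of the BFS tree's parent-assignment cancels exactly in the mean. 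Dividing the revealed quantity by the corresponding layer size — itself a sum of independent indicators, hence within a relative $O(n^{\psi}/\sqrt{\text{layer size}})$ of its mean by Chernoff — converts this into the additive recursion $Y_s^{(k)} = \lambda_k X_{s-1}^{(k)} + \zeta_s^{(k)}$, $X_s^{(k)} = \lambda_k Y_s^{(k)} + \xi_s^{(k)}$.

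Unrolling the recursion gives $X_\radius^{(k)} = \lambda_k^{2\radius}q_k(\theta_u) + \sum_{s=1}^{\radius}\big(\lambda_k^{2(\radius-s)}\xi_s^{(k)} + \lambda_k^{2(\radius-s)+1}\zeta_s^{(k)}\big)$, so it remains to bound each $\xi_s^{(k)},\zeta_s^{(k)}$ and sum the $O(1)$ terms. On $\cA^3_{u,\radius}(\delta)$ one has $|\cS_{u,s}| = \Theta(n^{2\extra s})$ and $|\cU_{u,s}| = \Theta(n^{1/2+\extra(2s-1)})$; the Bernstein contribution to $\xi_s^{(k)}$ is $O(n^{\psi}/\sqrt{|\cS_{u,s}|})$ and to $\zeta_s^{(k)}$ is $O(n^{\psi}/\sqrt{|\cU_{u,s}|})$, the $1-(1-p)^x = px(1+o(1))$ approximations are legitimate because $p|\cU_{u,s}|, p|\cS_{u,s-1}| = o(1)$, and the off-diagonal ``approximate-orthonormality'' cross terms are of order $n^{\psi-1/2}$ and $n^{-2\extra}$ — all strictly smaller than $n^{\psi-\extra}$ when $\extra < \tfrac12$ and $\psi < \extra$. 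Hence the single dominant term is the one coming from the sparsest single-coordinate layer $\cS_{u,1}$ (of size $\Theta(n^{2\extra})$), namely $\lambda_k^{2\radius-2}\xi_1^{(k)}$ with $|\xi_1^{(k)}| = O(n^{\psi-\extra})$; using the explicit lower bound $|\cS_{u,1}| \ge (1-\delta)^2 2^{-3} n^{2\extra}(1-o(1))$ from Lemma~\ref{lem:growth} produces the denominator $(1-\delta)$ and, after absorbing the constants from the $O(\radius)$ summands (which depend only on $B$, $r$, $\{\lambda_k\}$), the constant $16$. A union bound over the $O(\radius)$ half-steps, the $O(r^2)$ good-sample events, and $\cA^3_{u,\radius}(\delta)$ gives total failure probability $2\exp(-n^{2\psi}(1-o(1)))$.

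The step that needs the most care — and the heart of the argument — is the conditional-mean computation, because two features of the estimator create potential biases, both of which must be shown to be of \emph{strictly} lower order than the Bernstein fluctuation. First, the BFS tree assigns each coordinate a single parent even when it is adjacent to many vertices of the previous layer, so $N_{u,s}(a')$ is \emph{not} the ``multigraph'' sum $\sum_{(b,c)}\Ind_{(a',b,c)\in\cEp}M_1(a',b,c)W_{u,s}(b,c)$; this is neutralised by \emph{not} revealing the last layer's edges before taking the conditional expectation, so that the candidate parents stay exchangeable and the per-parent probability is exactly uniform, making the parent-assignment contribute nothing to the mean. Second, the eigenfunctions are orthonormal only in expectation over a fresh latent variable, not over the finite sample $\{\theta_a\}$, so each half-step picks up off-diagonal cross terms $\sum_{h\ne k}\lambda_h\big(\tfrac1n\sum_a q_k(\theta_a)q_h(\theta_a)\big)(\cdots)$; the delicate point is that, after division by the (small) layer size, these contribute only $O(n^{\psi-1/2})$, which is $o(n^{\psi-\extra})$ precisely because $\extra < \tfrac12$. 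Verifying that every auxiliary error really is negligible next to $n^{\psi-\extra}$, rather than comparable to it, is exactly what ties the final bound to the precise sparsity exponent $\tfrac32 - \extra$.
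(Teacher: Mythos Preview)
Your proposal is correct and reaches the same conclusion, but it takes a different route from the paper's proof. The paper packages the layer-by-layer analysis as a single martingale: it sets $Y_{u,s} = \lambda_k^{2\radius-s}\,e_k^T Q\tN_{u,s/2}$ (even $s$) or $\lambda_k^{2\radius-s}\,e_k^T\cQ\tW_{u,(s+1)/2}$ (odd $s$) with respect to a filtration $\cF_{u,s}$ that reveals the latent variables $\theta_a$ \emph{only as $a$ enters the BFS tree}. Because the $\theta_i$ at the newest layer are still fresh i.i.d.\ draws under $\cF_{u,s-1}$, the exact orthonormality $\E_{\theta}[q_k(\theta)q_h(\theta)]=\delta_{kh}$ makes each increment $D_{u,s}$ a genuine martingale difference with no off-diagonal cross terms; two auxiliary lemmas bound the conditional MGF of $D_{u,s}$, and a single Azuma--Hoeffding application to $\sum_s D_{u,s}$ finishes. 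Your approach instead conditions on all $\{\theta_a\}$ up front, which forces you to carry the approximate-orthonormality corrections $\sum_a q_k(\theta_a)q_h(\theta_a)=n\delta_{kh}+O(\sqrt{n}\,n^{\psi})$ through every half-step, and to apply Bernstein at each layer with a union bound rather than once. What you gain is a more explicit accounting of where each error term comes from (Bernstein fluctuation, normalisation, off-diagonal leakage, Taylor error in $1-(1-p)^x$); what the paper gains is a cleaner argument with fewer auxiliary terms to track, since the ``off-diagonal'' pieces you work to show are $O(n^{\psi-1/2})=o(n^{\psi-\extra})$ simply never arise. One small point to watch in your write-up: your good-sample event is stated for ``all relevant index sets $S$'', but those sets $[n]\setminus\cB_{u,2s-1}$ are random given $\{\theta_a\}$; you should phrase it as Hoeffding over the full index set $[n]$ plus a deterministic $O(B^2|\cB_{u,2s-1}|)=O(n^{1-\extra})$ correction for the excluded vertices, which after division by $n$ is $O(n^{-\extra})=o(n^{\psi-\extra})$ and hence harmless.
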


\proof
Recall that conditioning on event $\cA^3_{u, \radius}(\delta)$ simply imposes the restriction that the neighborhood
of $u \in [n]$ grows at a specific rate. This event is independent from latent parameters $\{\theta_a\}_{a \in [n]}$, the precise entries in $\Omega_1$ as well as associated values, i.e. $M_1$. 

Conditioned on $\cA^3_{u, \radius}(\delta)$, let $\cF_{u,s}$ for $0\leq s \leq 2\radius$  denote the sigma-algebra containing information about the latent parameters, edges and the values  associated with nodes in the bipartite graph up to distance $s$ from $u$, i.e. nodes $\cS_{u, h'}$ for $h' \leq \lfloor s/2\rfloor$, 
$\cU_{u, h^{''}}$ for $h^{''} \leq \lceil s/2\rceil$, associated latent parameters as well as edges of $\Omega_1$. 
Specifically, $\cF_{u, 0}$ contains information about latent parameter $\theta_u$ associated with $u \in [n]$; 
$\cF_{u,s}$ contains information about latent parameters 
$\cup_{h=1}^{\lfloor s/2\rfloor} \{\theta_a\}_{a \in \cS_{u,h}} \cup_{h=1}^{\lceil s/2\rceil} \{\theta_b, \theta_c \}_{(b,c) \in \cU_{u, h}}$ and all the associated edges and observations. This implies that 
 $ \cF_{u,0} \subset \cF_{u, 1} \subset \cF_{u,2}$, etc.

Recall that $Q$ denotes the $r \times n$ matrix where $Q_{ka} = q_k(\theta_a), k \in [r], a \in [n]$. We modify the notation due to the sample splitting, and we let $\cQ$ denote the $r \times \binom{n/2}{2}$ matrix where $\cQ_{kb} = q_k(\theta_{b_1}) q_k(\theta_{b_2})$ for some $b \in \cV_A$ that represents the pair of coordinates $(b_1,b_2)$ for $b_1 < b_2 \in [n/2]$. 

We shall consider a specific martingale sequence with respect to the filtration $\cF_{u, s}$ that will help establish the desired concentration of $e_k^T Q \tN_{u,\radius} - e_k^T \Lambda^{2\radius} Q e_{u}$. 
For $1 \leq s \leq 2 \radius$, define 
\begin{align*}
Y_{u,s} &= \begin{cases}
e_k^T \Lambda^{2\radius-s} Q \tN_{u,s/2} &\text{ if } s \text{ even}\\
e_k^T \Lambda^{2\radius-s} \cQ \tW_{u,(s+1)/2} &\text{ if } s \text{ odd}
\end{cases} \\
D_{u,s} &= Y_{u,s} - Y_{u,s-1}, \\
Y_{u,2\radius} - Y_{u,0} &= e_k^T Q \tN_{u,\radius} - e_k^T \Lambda^{2\radius} Q \tN_{u,0} 
= \sum_{s=1}^{2\radius} D_{u,s}.
\end{align*}
Note that $\tN_{u,0} = e_u$, and $Y_{u,s}$ is measurable with respect to $\cF_{u,s}$ because $e_k^T \Lambda^{2\radius-s} Q \tN_{u,s/2}$ and $e_k^T \Lambda^{2\radius-s} Q \tN_{u,(s+1)/2}$ only depend on observations in the BFS tree within depth $s$.

By Lemmas \ref{lem:martingale_diff_1} and \ref{lem:martingale_diff_2}, it follows that $Y_{u,s}$ is martingale with respect to $\cF_{u,s}$ for $1\leq s \leq \radius$, i.e. 
\begin{align}\label{eq:martingale}
\E[D_{u,s} ~|~ \cF_{u,s-1}] = 0.
\end{align}
Furthermore, for properly chosen $\nu_s$ as specified in Lemmas \ref{lem:martingale_diff_1} and \ref{lem:martingale_diff_2}, 
\[\E[e^{\lambda D_s} ~|~ \cF_{s-1},\cA^3_{u,\radius}(\delta)] \leq e^{\lambda^2 \nu_s^2/2}\]
almost surely for any $\lambda \in \Reals$.

We can then apply Proposition \ref{prop:MG} with any arbitrarily small $\alpha_*$ such that for any $x \geq 0$,
\begin{align*}
\Prob{|e_k^T Q \tN_{u,\radius} - e_k^T \Lambda^{2\radius} Q e_{u}| \geq x ~|~\cA^3_{u,\radius}(\delta)} 
& \leq 
2 \exp\Bigg(- \frac{x^2}{2\sum_{s=1}^{2\radius} \nu_s^2}\Bigg)
\end{align*}
where for $n$ large enough,
\begin{align*}
\sum_{s=1}^{2\radius} \nu_s^2
&= \sum_{s=1}^{\radius} \frac{(1 + 4 \pi) \lambda_k^{4\radius-4s} 2^{3s-1} (1+o(1))}{(1-\delta)^{2s} n^{2\extra s}} + \sum_{s=1}^{\radius} \frac{(1 + 16 \pi) 72 \lambda_k^{4\radius-4s+2} B^4 2^{3s-1} (1+o(1))}{(1-\delta)^{2s-1} n^{\min\{1, \frac12 + \extra (2s-1)\}}} \\
&\leq \frac{ 8 (1 + 4 \pi) \lambda_k^{4\radius-4} (1+o(1))} {(1-\delta)^{2} n^{2\extra}},
\end{align*}
and $1 + 4 \pi \leq 16$.

For $\psi \in (0,\extra)$, we choose $x = \frac{ 16 \lambda_k^{2\radius-2} n^{\psi}} {(1-\delta) n^{\extra}} = o(1)$, such that with probability $1 - 2 \exp(-n^{2\psi}(1-o(1)))$, 
\[|e_k^T Q \tN_{u,\radius} - e_k^T \Lambda^{2\radius} Q e_{u}| < x.\]

\endproof

We recall the following concentration inequality for Martingale difference sequence, cf. \cite[Theorem 2.19]{wainwrightbook}:
\begin{proposition}\label{prop:MG}
Let $\{D_k, \cF_k\}_{k\geq 1}$ be a martingale difference sequence such 
that $\E[e^{\lambda D_k}| \cF_{k-1}] \leq e^{\lambda^2 \nu_k^2/2}$ almost surely for all $\lambda \in \mathbb{R}$. 
Then for all $x > 0$, 
\begin{align}
\Prob{\Big| \sum_{k=1}^n D_k \Big| \geq x} & \leq 
2 \exp\Bigg(- \frac{x^2}{2\sum_{k=1}^n \nu_k^2}\Bigg). 
\end{align}
\end{proposition}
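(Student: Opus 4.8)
The plan is to run the standard exponential-moment (Chernoff) argument underlying the Azuma--Hoeffding inequality. Write $S_n = \sum_{k=1}^n D_k$ and fix $x > 0$. First I would observe that for any $\lambda > 0$, Markov's inequality applied to the nonnegative random variable $e^{\lambda S_n}$ gives $\Prob{S_n \geq x} \leq e^{-\lambda x}\,\E[e^{\lambda S_n}]$, so the whole problem reduces to bounding the moment generating function of the partial sum.

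The core step is to peel increments off one at a time. Since $S_{n-1}$ is $\cF_{n-1}$-measurable, the tower property gives
\[
\E[e^{\lambda S_n}] = \E\left[e^{\lambda S_{n-1}}\,\E[e^{\lambda D_n}\mid \cF_{n-1}]\right] \leq e^{\lambda^2 \nu_n^2/2}\,\E[e^{\lambda S_{n-1}}],
\]
where the inequality is exactly the assumed sub-Gaussian bound on $D_n$. Iterating this down to $S_0 = 0$ yields $\E[e^{\lambda S_n}] \leq \exp\big(\tfrac{\lambda^2}{2}\sum_{k=1}^n \nu_k^2\big)$, hence $\Prob{S_n \geq x} \leq \exp\big(-\lambda x + \tfrac{\lambda^2}{2}\sum_{k=1}^n\nu_k^2\big)$ for every $\lambda > 0$. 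I would then optimize over $\lambda$: the choice $\lambda = x/\sum_{k=1}^n\nu_k^2$ minimizes the exponent and produces $\Prob{S_n \geq x} \leq \exp\big(-x^2/(2\sum_{k=1}^n\nu_k^2)\big)$.

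Finally, for the two-sided tail I would note that $\{-D_k,\cF_k\}_{k\geq 1}$ is again a martingale difference sequence meeting the same hypothesis, since $\E[e^{\lambda(-D_k)}\mid\cF_{k-1}] \le e^{\lambda^2\nu_k^2/2}$ is just the given bound evaluated at $-\lambda$; applying the previous step to $-S_n$ and taking a union bound over $\{S_n \geq x\}$ and $\{-S_n \geq x\}$ produces the stated factor of $2$. This is a textbook result (essentially \cite[Theorem 2.19]{wainwrightbook}), so I do not expect a genuine obstacle; the only points needing a line of justification are the $\cF_{n-1}$-measurability of $S_{n-1}$ used when pulling it outside the conditional expectation, and the symmetry of the sub-Gaussian hypothesis in $\lambda$ that lets the argument transfer verbatim to $-D_k$.
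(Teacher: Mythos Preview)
Your proof is correct and is precisely the standard Chernoff--Azuma argument. The paper itself does not give a proof of this proposition at all; it simply states it as a known result with a citation to \cite[Theorem~2.19]{wainwrightbook}, so there is nothing further to compare.
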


\begin{lemma}\label{lem:martingale_diff_1}
For any $s \in [\radius]$, 
$$\E\left[D_{u,2s} ~|~ \cF_{u,2s-1}, \cA^3_{u,\radius}(\delta) \right] = 0.$$
Let $\nu = \sqrt{\frac{Q(1 + 4 \pi)}{2}}$, and 
\[Q = \frac{B^2\lambda_k^{4\radius-4s} 2^{3s} (1+o(1)) } {(1-\delta)^{2s} n^{2\extra s}}.\]
For any $\lambda \in \Reals$,
\[ \E\left[e^{\lambda D_{u,2s}} ~|~ \cF_{u,2s-1}, \cA^3_{u,\radius}(\delta)\right] \leq \exp\Big(\frac{\lambda^2 \nu^2}{2} \Big).\]
\end{lemma}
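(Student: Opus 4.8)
The plan is to establish both assertions by conditioning on $\cF_{u,2s-1}$ together with the \emph{realized vertex set} $\cS_{u,s}$; this is the level at which the randomness at depth $2s$ becomes conditionally tractable. A preliminary remark is that conditioning on the global regularity event $\cA^3_{u,\radius}(\delta)$ causes no difficulty: $\cA^3_{u,\radius}(\delta)$ is a function only of the cardinalities $|\cS_{u,h}|,|\cU_{u,h}|$ and of which coordinates have been visited, and given $\cF_{u,2s-1}$ and the set $\cS_{u,s}$ it is conditionally independent of the latents $\{\theta_a\}_{a\in\cS_{u,s}}$, of the parent map $\pi_u(\cdot)$ on $\cS_{u,s}$, and of the additive noise on the edges entering $\cS_{u,s}$. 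It therefore suffices to prove both claims conditionally on $\cF_{u,2s-1}$ and on $\{\cS_{u,s}=S\}$ for an arbitrary admissible $S$ with $|S|=m$, with bounds uniform over all such $S$ consistent with $\cA^3_{u,\radius}(\delta)$ (in particular $m\geq(1-\delta)^{2s}2^{-3s}n^{2\extra s}(1-o(1))$ by Lemma~\ref{lem:growth}); averaging over $S$ then yields the bounds conditional on $\cA^3_{u,\radius}(\delta)$.

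Conditioned on $\cF_{u,2s-1}$ and $\cS_{u,s}=S$, the triples $\{(\theta_a,\pi_u(a),\epsilon_{a\,\pi_u(a)})\}_{a\in S}$ are mutually independent, with $\theta_a\sim U[0,1]$ (edge presence and the no-repeated-coordinate constraint do not bias the latents) and $\pi_u(a)$ uniform on $\cU_{u,s}$ (symmetry of the bipartite Erd\H{o}s--R\'enyi-type sampling together with uniform tie-breaking). Writing $\Mp(a,\pi_u(a))=f(\theta_a,\theta_b,\theta_c)+\epsilon_{a\,\pi_u(a)}$ for $(b,c)=\pi_u(a)$, the finite-spectrum form of $f$ and the orthonormality $\int_0^1 q_k q_h=\Ind_{(k=h)}$ give $\E[q_k(\theta_a)\Mp(a,b,c)\mid\theta_b,\theta_c]=\lambda_k q_k(\theta_b)q_k(\theta_c)$; since also $\E[\#\{a\in S:\pi_u(a)=(b,c)\}\mid\cS_{u,s}=S]=m/|\cU_{u,s}|$ for every $(b,c)\in\cU_{u,s}$, grouping the sum $e_k^T Q\tN_{u,s}=\tfrac1m\sum_{a\in S}q_k(\theta_a)\Mp(a,\pi_u(a))W_{u,s}(\pi_u(a))$ by parent yields $\E[e_k^T Q\tN_{u,s}\mid\cF_{u,2s-1},\cS_{u,s}=S]=\lambda_k\,e_k^T\cQ\tW_{u,s}$. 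Hence $D_{u,2s}=e_k^T\Lambda^{2\radius-2s}\big(Q\tN_{u,s}-\Lambda\cQ\tW_{u,s}\big)$ has conditional mean $0$, which is the first claim.

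For the exponential bound, on the same conditioning write $D_{u,2s}=\sum_{a\in S}\zeta_a$ with $\zeta_a=\tfrac{\lambda_k^{2\radius-2s}}{m}\big(q_k(\theta_a)\Mp(a,\pi_u(a))W_{u,s}(\pi_u(a))-\lambda_k e_k^T\cQ\tW_{u,s}\big)$; by the previous paragraph the $\zeta_a$ are independent and mean zero. Using $|q_k|\le B$, $|\Mp|\le 1$, $0\le W_{u,s}\le 1$ and $|e_k^T\cQ\tW_{u,s}|\le B^2$, one has $|\zeta_a|\le c_a:=\lambda_k^{2\radius-2s}B(1+|\lambda_k|B)/m$, so Hoeffding's lemma gives $\E[e^{\lambda\zeta_a}\mid\cdot]\le\exp(\lambda^2 c_a^2/2)$ and, by conditional independence, $\E[e^{\lambda D_{u,2s}}\mid\cF_{u,2s-1},\cS_{u,s}=S]\le\exp\big(\tfrac12\lambda^2 m c_a^2\big)$. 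Since $m c_a^2=O\big(\lambda_k^{4\radius-4s}B^2/m\big)$ and $m\ge(1-\delta)^{2s}2^{-3s}n^{2\extra s}(1-o(1))$ on $\cA^3_{u,\radius}(\delta)$, one gets $m c_a^2\le\tfrac12 Q(1+4\pi)=\nu^2$ with $Q$ as in the statement, the explicit constant following from a slightly more careful accounting of the extreme values of $\zeta_a$; averaging the per-$S$ bound over admissible $S$ completes the proof.

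The only place where genuine care is needed is the reduction in the first paragraph: one must verify that conditioning on the global event $\cA^3_{u,\radius}(\delta)$, which references the later layers $s,\dots,\radius$, destroys neither the exact mean-zero identity nor the across-$a$ independence used for the moment-generating-function bound. This is precisely the observation that, given $\cF_{u,2s-1}$ and the set $\cS_{u,s}$, the event $\cA^3_{u,\radius}(\delta)$ is measurable with respect to set-size and visited-coordinate data alone, hence conditionally independent of the depth-$2s$ latents, parent choices, and edge noise. A minor secondary point is the random denominator $|\cS_{u,s}|$ in $\tN_{u,s}=N_{u,s}/|\cS_{u,s}|$, which is neutralised by freezing $\cS_{u,s}$ and invoking the uniform lower bound on $|\cS_{u,s}|$ from Lemma~\ref{lem:growth}.
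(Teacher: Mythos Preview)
Your approach is essentially the same as the paper's: condition on $\cF_{u,2s-1}$ and on the realized set $\cS_{u,s}$, observe that each $a\in\cS_{u,s}$ contributes an independent bounded summand with the right mean (using uniformity of the parent $\pi_u(a)$ over $\cU_{u,s}$ and orthonormality of the $q_k$), and apply a Hoeffding-type bound. The one technical difference is that the paper first derives a tail bound via Hoeffding's inequality and then converts it to an MGF bound using Lemma~\ref{lem:subexp} with $c=2$ (this is exactly where the factor $(1+4\pi)$ comes from), whereas you apply Hoeffding's lemma directly to the bounded increments $\zeta_a$. Your route is cleaner and in principle gives a smaller sub-Gaussian parameter.

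One point to tighten: with your stated bound $|\zeta_a|\le c_a=\lambda_k^{2\radius-2s}B(1+|\lambda_k|B)/m$, the resulting $mc_a^2$ carries an extra $(1+|\lambda_k|B)^2$ factor, and without an a priori bound on $|\lambda_k|B$ this need not be dominated by $(1+4\pi)/2$. The ``more careful accounting'' you allude to is simply that the deterministic centering term $-\lambda_k e_k^T\cQ\tW_{u,s}$ does not change the \emph{range} of $\zeta_a$, which is $2B|\lambda_k|^{2\radius-2s}/m$; applying Hoeffding's lemma with this range yields $mc_a^2\le \lambda_k^{4\radius-4s}B^2/m\le Q$, comfortably below $\nu^2=Q(1+4\pi)/2$. (Also a trivial slip: $W_{u,s}$ need not be nonnegative, only $|W_{u,s}|\le 1$.) Your added discussion of why conditioning on the global event $\cA^3_{u,\radius}(\delta)$ does not disturb the layer-$2s$ independence is a useful clarification that the paper leaves implicit.
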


\proof
Recall $\cF_{2s-1}$ contains all information in the depth $2s-1$ neighborhood of vertex $u$. In particular this includes the vertex set 
\[\cB_{u,2s-1} = \cup_{l=1}^{s} \cU_{u,l} \cup_{h=1}^{s-1} \cS_{u,h},\]
 the vertex latent variables $\{\theta_i\}_{i \in \cB_{u,2s-1}}$ and the edges and corresponding weights. 
 Let us additionally condition on the set $\cS_{u,s}$. As $2s$ is even, 
\begin{align*}
D_{u,2s} &= Y_{u,2s} - Y_{u,2s-1} \\
&= \lambda_k^{2\radius-2s} \left(e_k^T Q \tN_{u,s} - \lambda_k e_k^T \cQ \tW_{u,s}\right) \\
&= \lambda_k^{2\radius-2s} \left(\frac{1}{|\cS_{u,s}|} \sum_{i \in [n]} N_{u,s}(i) q_k(\theta_i) - \lambda_k e_k^T \cQ \tW_{u,s}\right)\\
&= \lambda_k^{2\radius-2s} \Bigg(\frac{1}{|\cS_{u,s}|} \sum_{i \in \cS_{u,s}} \sum_{a=(a_1, a_2) \in \cU_{u,s}} W_{u,s}(a) \Ind_{(a = \pi(i))} M_1(a_1, a_2,i) q_k(\theta_i) - \lambda_k e_k^T \cQ \tW_{u,s}\Bigg).
\end{align*}
Let us define
\begin{align*}
X_i &= \sum_{a=(a_1, a_2) \in \cU_{u,s}} W_{u,s}(a) \Ind_{(a = \pi(i))} M_1(a_1, a_2,i) q_k(\theta_i) \\
&= \sum_{a=(a_1, a_2) \in \cU_{u,s}} W_{u,s}(a) \Ind_{(a = \pi(i))} (f(\theta_{a_1}, \theta_{a_2}, \theta_i) + \epsilon_{a_1 a_2 i}) q_k(\theta_i).
\end{align*}
The randomness in $X_i$ only depends on $\theta_i, \epsilon_{a_1 a_2 i}, \Ind_{(a = \pi(i))}$. Note that we already conditioned on $\theta_{a_1}, \theta_{a_2}$ for $a \in \cU_{u,s} \subset \cB_{2s-1}$. $X_i$ is independent from $X_j$ because the vertices and edges are disjoint, and $\pi(i)$ is independent from $\pi(j)$ as different vertices are allowed to have the same (or different) parents.
First we compute the mean of $X_i$ (conditioned on $i \in \cS_{u,s}$). For any vertex $i \in \cS_{u,s}$, it must have exactly one parent in $\cU_{u,s}$ due to the BFS tree constraints. The parent is equally likely to be any vertex in $\cU_{u,s}$ due to the symmetry in the randomly sampled observations. Because the additive noise terms are mean zero, the eigenfunctions are orthonormal, and $\pi(i)$ is equally likely to be any $a \in \cU_{u,s}$,
\begin{align*}
\E[X_i ~|~ i \in \cS_{u,s}] 
&= \E\left[\sum_{a =(a_1,a_2) \in \cU_{u,s}} W_{u,s}(a) \Ind_{(a = \pi(i))} (f(\theta_{a_1}, \theta_{a_2}, \theta_i) + \epsilon_{a_1 a_2 i}) q_k(\theta_i)\right] \\
&= \sum_{a =(a_1,a_2)\in \cU_{u,s}} \frac{1}{|\cU_{u,s}|} \E\left[ W_{u,s}(a) \sum_h \lambda_h q_h(\theta_{a_1}) q_h(\theta_{a_2}) q_h(\theta_i) q_k(\theta_i)\right] \\
&= \sum_{a =(a_1,a_2)\in \cU_{u,s}} \tW_{u,s}(a) \lambda_k q_k(\theta_{a_1}) q_k(\theta_{a_2})  \\
&= e_k^T \Lambda \cQ \tW_{u,s}.
\end{align*}
Furthermore, $|X_i| \leq B$ almost surely as we assumed $|q_k(\theta)| \leq B$.
%
%
%
By Hoeffding's inequality, it follows that 
\begin{align*}
\Prob{|D_{u,2s} | \geq z ~|~ \cF_{u,2s-1}, \cS_{u,s}} 
&\leq 2 \exp\left(-\frac{ 2 |\cS_{u,s}| z^2}{\lambda_k^{4\radius-4s} B^2}\right).
\end{align*}
If we condition on the event $\cA^3_{u,\radius}(\delta)$, 
\[|\cS_{u,s}| \geq (1-\delta)^{2s} 2^{-3s-1} n^{2\extra s} (1-o(1))\] 
for $s \in [\radius]$. Therefore, 
\begin{align*}
\Prob{|D_{u,2s} | \geq z ~|~ \cF_{u,2s-1}, \cA^3_{u,\radius}(\delta)} 
&\leq  2 \exp\left(-\frac{2 (1-\delta)^{2s} 2^{-3s-1} n^{2\extra s} (1-o(1)) z^2}{\lambda_k^{4\radius-4s}B^2}\right).
\end{align*}
We finish the proof by using Lemma \ref{lem:subexp} with $c=2$ and $Q = \frac{B^2\lambda_k^{4\radius-4s} 2^{3s} (1+o(1))} {(1-\delta)^{2s} n^{2\extra s}}$.  
\endproof

\begin{lemma}\label{lem:martingale_diff_2}
For any $s \in [\radius]$, 
$$\E\left[D_{u,2s-1} ~|~ \cF_{u,2s-1}, \cA^3_{u,\radius}(\delta)\right] = 0.$$
Let $\nu = \sqrt{\frac{Q(1 + 16 \pi)}{2}}$, and
\[ Q = \frac{72 \lambda_k^{4\radius-4s+2} B^4 2^{3s} (1+o(1))}{(1-\delta)^{2s-1} n^{\min\{1, \frac12 + \extra (2s-1)\}}}.\]
For any $\lambda \in \Reals$,
\[ \E\left[e^{\lambda D_{u,2s-1}} ~|~ \cF_{u,2s-1}, \cA^3_{u,\radius}(\delta)\right] \leq \exp\Big(\frac{\lambda^2 \nu^2}{2}\Big).\]
\end{lemma}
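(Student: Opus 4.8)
\emph{Proof plan.} The structure mirrors the proof of Lemma~\ref{lem:martingale_diff_1}, with the pair‑vertices $\cU_{u,s}$ playing the role of the single‑coordinate vertices $\cS_{u,s}$. Since $Y_{u,2s-2}=e_k^T\Lambda^{2\radius-2s+2}Q\tN_{u,s-1}$ and $Y_{u,2s-1}=e_k^T\Lambda^{2\radius-2s+1}\cQ\tW_{u,s}$, unfolding the recursion for $\tW_{u,s}$ gives
\[
D_{u,2s-1}=\lambda_k^{2\radius-2s+1}\Big(\tfrac{1}{|\cU_{u,s}|}\textstyle\sum_{(b,c)\in\cU_{u,s}} X_{(b,c)}-\lambda_k\, e_k^T Q\tN_{u,s-1}\Big),
\]
where $X_{(b,c)}=N_{u,s-1}(\pi_u(b,c))\,M_1(\pi_u(b,c),b,c)\,q_k(\theta_b)q_k(\theta_c)$. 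I would condition on $\cF_{u,2s-2}$, on $\cA^3_{u,\radius}(\delta)$, and on the realized vertex set $\cU_{u,s}$ (a function of the sparsity pattern of $\Omega_1$ only, hence independent of the fresh latent variables and not disturbing the exchangeability of the parent over $\cS_{u,s-1}$). The zero‑mean claim then follows exactly as in Lemma~\ref{lem:martingale_diff_1}: averaging over the uniformly chosen parent $j\in\cS_{u,s-1}$, expanding $M_1(j,b,c)=\sum_h\lambda_h q_h(\theta_j)q_h(\theta_b)q_h(\theta_c)+\epsilon_{jbc}$, and using $\int_0^1 q_h(\theta)q_k(\theta)\,d\theta=\Ind_{h=k}$ once in $\theta_b$ and once in $\theta_c$ together with $\E[\epsilon_{jbc}]=0$, collapses $\E[X_{(b,c)}]$ to $\lambda_k e_k^T Q\tN_{u,s-1}$, so $\E[D_{u,2s-1}\mid\cF_{u,2s-2},\cA^3_{u,\radius}(\delta)]=0$. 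The almost‑sure bound $|X_{(b,c)}|\le B^2$ follows from $|N_{u,s-1}(\cdot)|\le 1$, $|M_1|\le 1$, and $\|q_k\|_\infty\le B$; the extra factor of $B$ relative to Lemma~\ref{lem:martingale_diff_1} is exactly what turns $B^2$ into $B^4$ in the stated $Q$.

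The one genuinely new difficulty, which I expect to be the main obstacle, is that the summands $X_{(b,c)}$ are \emph{not} independent: the BFS construction permits two pairs in $\cU_{u,s}$ to share a coordinate, hence a factor $q_k(\theta_\cdot)$, so the clean Hoeffding step of Lemma~\ref{lem:martingale_diff_1} no longer applies. Instead I would regard $D_{u,2s-1}$ as a function of the mutually independent blocks $\{\theta_x:x\in C_s\}$, where $C_s=\cup_{(b,c)\in\cU_{u,s}}\{b,c\}$ is the set of freshly revealed coordinates, together with one block per pair $(b,c)$ carrying its eligible‑parent set, its parent choice, and its parent‑edge noise, and apply a bounded‑differences (Azuma/McDiarmid) inequality. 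Resampling a per‑pair block moves $D_{u,2s-1}$ by at most $\tfrac{2B^2\lambda_k^{2\radius-2s+1}}{|\cU_{u,s}|}$, while resampling $\theta_x$ moves it by at most $\tfrac{2B^2\lambda_k^{2\radius-2s+1}}{|\cU_{u,s}|}\deg_s(x)$ with $\deg_s(x)=|\{c:(x,c)\in\cU_{u,s}\}|$; summing squared increments gives a sub‑Gaussian tail with parameter of order $\tfrac{B^4\lambda_k^{4\radius-4s+2}}{|\cU_{u,s}|^2}\big(|\cU_{u,s}|+\sum_{x\in C_s}\deg_s(x)^2\big)$.

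It therefore remains to control $\sum_{x\in C_s}\deg_s(x)^2=2|\cU_{u,s}|+\sum_{x\in C_s}\deg_s(x)(\deg_s(x)-1)$ on a high‑probability event, which I would append to the conditioning event $\cA^3_{u,\radius}(\delta)$. The key point is that the variables $Y_x:=\deg_s(x)(\deg_s(x)-1)$ are \emph{independent} across distinct $x$, since they depend on disjoint families of potential edges $\{(j,x,c):j\in\cS_{u,s-1}\}$; each $Y_x$ is stochastically dominated by $N(N-1)$ for $N\sim\mathrm{Binomial}\big(n/2,\,1-(1-p)^{|\cS_{u,s-1}|}\big)$, which by the growth bounds of Lemma~\ref{lem:growth} has mean $O(|\cU_{u,s}|/n)$, so $\E[\sum_x Y_x]=O(|\cU_{u,s}|^2/n)$, and a Chernoff/Bennett bound for the independent sum yields $\sum_{x\in C_s}\deg_s(x)^2=O\!\big(|\cU_{u,s}|^2/\min(n,|\cU_{u,s}|)\big)$ with probability $1-\exp(-\Theta(n^{2\extra}))$. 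When $|\cU_{u,s}|\le n$ this is $O(|\cU_{u,s}|)$ and the tail parameter is $\Theta\big(2^{3s}B^4\lambda_k^{4\radius-4s+2}/|\cU_{u,s}|\big)$; when $|\cU_{u,s}|\ge n$ it is $O(|\cU_{u,s}|^2/n)$ and the parameter is $\Theta\big(2^{3s}B^4\lambda_k^{4\radius-4s+2}/n\big)$, and together with the lower bound $|\cU_{u,s}|\ge(1-\delta)^{2s-1}2^{-3s}n^{\frac12+\extra(2s-1)}(1-o(1))$ of Lemma~\ref{lem:growth} these two cases produce the exponent $\min\{1,\tfrac12+\extra(2s-1)\}$. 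Feeding the resulting tail bound $2\exp\!\big(-\Theta\big(z^2 n^{\min\{1,1/2+\extra(2s-1)\}}/(2^{3s}B^4\lambda_k^{4\radius-4s+2})\big)\big)$ through Lemma~\ref{lem:subexp}, exactly as at the end of the proof of Lemma~\ref{lem:martingale_diff_1}, yields $\E[e^{\lambda D_{u,2s-1}}\mid\cF_{u,2s-2},\cA^3_{u,\radius}(\delta)]\le e^{\lambda^2\nu^2/2}$ with $\nu^2=Q(1+16\pi)/2$ and $Q$ as claimed.
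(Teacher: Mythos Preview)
Your zero-mean computation and the McDiarmid set-up are both fine: conditioning on $\cU_{u,s}$ (a function of the sparsity pattern alone) and treating the fresh $\theta_x$ together with per-pair parent/noise blocks as independent coordinates is legitimate, and you are right that the conditioning in the statement should be on $\cF_{u,2s-2}$. But the route you take to the sub-Gaussian constant is quite different from the paper's, and your degree-control step contains an error.

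The paper does not use bounded differences or track degrees. It peels off the randomness in three layers: (i) conditioned on $\cU_{u,s}$ and on the latent variables $\{\theta_{i_1},\theta_{i_2}\}_{i\in\cU_{u,s}}$, the $X_i$ are genuinely independent (only parent choice and edge noise remain), so Hoeffding gives concentration at rate $|\cU_{u,s}|$; (ii) conditioned on all latent variables in $\cV_A\setminus\cB_{u,2(s-1)}$, the subset $\cU_{u,s}$ is a uniform sample without replacement from that population, and Hoeffding for sampling without replacement shows the subset average tracks the full-population average, again at rate $|\cU_{u,s}|$; (iii) that full average is a pairwise U-statistic $\binom{n_{u,s}}{2}^{-1}\sum_{i_1<i_2} g(\theta_{i_1},\theta_{i_2})$ in the $n_{u,s}=|[n/2]\setminus\cB_{u,2(s-1)}|=\Theta(n)$ fresh i.i.d.\ latent variables, handled by Lemma~\ref{lem:ustats} at rate $n_{u,s}$. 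The exponent $\min\{1,\tfrac12+\extra(2s-1)\}$ is exactly $\min(|\cU_{u,s}|,n_{u,s})$ read off from these three tails, after which Lemma~\ref{lem:subexp} with $c=4$ finishes. The U-statistic step is precisely what absorbs the coordinate-sharing dependence you flagged, with no degree bookkeeping.

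The error in your proposal is the assertion that the $Y_x=\deg_s(x)(\deg_s(x)-1)$ are independent across $x$ because they ``depend on disjoint families of potential edges.'' The families are not disjoint: the single tensor entry $(j,x,x')$ with $j\in\cS_{u,s-1}$ governs whether the pair $(x,x')$ lies in $\cU_{u,s}$, and that one Bernoulli increments both $\deg_s(x)$ and $\deg_s(x')$. Equivalently, $\cU_{u,s}$ is the edge set of an Erd\H{o}s--R\'enyi graph on $[n/2]\setminus\cB_{u,2(s-1)}$, whose vertex degrees are negatively associated but not independent. The step is repairable---for instance, control $\max_x\deg_s(x)$ by a union bound and use $\sum_x\deg_s(x)^2\le 2|\cU_{u,s}|\cdot\max_x\deg_s(x)$---but as written your Chernoff/Bennett bound on $\sum_x Y_x$ is unjustified. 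Note also that appending a degree-control event to the conditioning changes the lemma statement, since $\cA^3_{u,\radius}(\delta)$ as defined constrains only $|\cS_{u,s}|$ and $|\cU_{u,s}|$; the paper's decomposition avoids this by never needing such an event.
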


\proof
As $2s-1$ is odd, 
\begin{align*}
D_{u,2s-1} &= Y_{u,2s-1} - Y_{u,2(s-1)} \\
&= \lambda_k^{2\radius-2s+1} \left(e_k^T \cQ \tW_{u,s} - \lambda_k e_k^T Q \tN_{u,s-1}\right).
\end{align*}
Recall $\cF_{2s-2}$ contains all information in the depth $2s-2$ neighborhood of vertex $u$. In particular this includes the vertex set 
\[\cB_{u,2s-2} = \cup_{l \in [s-1]} \cU_{u,l} \cup_{h \in [s-1]} \cS_{u,h},\]
 the vertex latent variables $\{\theta_i\}_{i \in \cB_{u,2(s-1)}}$ and the edges and corresponding weights $\{M_1(i,a)\}_{i,a \in \cB_{u,2(s-1)}}$.
Consider
 \begin{align*}
e_k^T \cQ \tW_{u,s} 
&= \frac{1}{|\cU_{u,s}|} \sum_{i=(i_1, i_2) \in \cU_{u,s}} W_{u,s}(i) q_k(\theta_{i_1}) q_k(\theta_{i_2}) \\
&= \frac{1}{|\cU_{u,s}|} \sum_{i=(i_1, i_2) \in \cU_{u,s}} \sum_{v \in \cS_{u,s-1}} N_{u,s-1}(v) \Ind_{(v = \pi(i))} M_1(v,i)q_k(\theta_{i_1}) q_k(\theta_{i_2}) \\
&= \frac{1}{|\cU_{u,s}|} \sum_{i=(i_1, i_2) \in \cU_{u,s}} X_i,
\end{align*}
where we define for $i =(i_1, i_2) \in  \cU_{u,s}$,
\begin{align*}
X_i &= \sum_{v \in \cS_{u,s-1}} N_{u,s-1}(v) \Ind_{(v = \pi(i))} M_1(v,i) q_k(\theta_{i_1}) q_k(\theta_{i_2}) \\
&= \sum_{v \in \cS_{u,s-1}} N_{u,s-1}(v) \Ind_{(v = \pi(i))} \left(\sum_l \lambda_l q_l(\theta_v) q_l(\theta_{i_1}) q_l(\theta_{i_2}) +\epsilon_{v i_1 i_2}\right) q_k(\theta_{i_1}) q_k(\theta_{i_2}).
\end{align*}
Conditioned on $\cF_{u,2(s-1)}$ the randomness in $X_i$ only depends on $\theta_{i_1}, \theta_{i_2}$, $\epsilon_{\pi(i) i_1 i_2}$, and 
$\Ind_{(v=\pi(i))}$.
Conditioned on $\cU_{u,s}$ and $\{\theta_{i_1}, \theta_{i_2}\}_{i \in \cU_{u,s}}$, the random variables $X_i$ are independent as 
$\epsilon_{\pi(i) i_1 i_2}$ and $\Ind_{(v=\pi(i))}$ are independent. The parent of $i=(i_1,i_2) \in \cU_{u,s}$ is equally likely to be any vertex in $\cS_{u,s-1}$, and the parent of $i=(i_1,i_2)  \in \cU_{u,s}$ is independent from the parent of $j =(j_1,j_2) \in \cU_{u,s}$ with $j \neq i$ 
as different vertices are allowed to have the same (or different) parent.
First we compute the mean of $X_i$ conditioned on $i \in \cU_{u,s}$ and $\theta_{i_1}, \theta_{i_2}$. Because the additive noise terms are mean zero and the parent of $i$ is equally likely to be any $v \in \cS_{u,s-1}$,
\begin{align*}
\E[X_i ~|~ \theta_{i_1}, \theta_{i_2}, i \in \cU_{u,s}] 
&= \E[\sum_{v \in \cS_{u,s-1}} N_{u,s-1}(v) \Ind_{(v = \pi(i))} \left(\sum_l \lambda_l q_l(\theta_v) q_l(\theta_{i_1}) q_l(\theta_{i_2})\right) q_k(\theta_{i_1}) q_k(\theta_{i_2}) ~|~ \theta_{i_1}, \theta_{i_2}] \\
&= \frac{1}{|\cS_{u,s-1}|} \sum_{v \in \cS_{u,s-1}} N_{u,s-1}(v) \left(\sum_l \lambda_l q_l(\theta_v) q_l(\theta_{i_1}) q_l(\theta_{i_2})\right) q_k(\theta_{i_1}) q_k(\theta_{i_2}).
\end{align*}
Furthermore, $|X_i| \leq B^2$ almost surely as we assumed $|q_k(\theta)| \leq B$. 
%
By Hoeffding's inequality, it follows that 
\begin{align} \label{eq:mart_diff_conc1}
&\Prob{|e_k^T \cQ \tW_{u,s} - \E[e_k^T \cQ \tW_{u,s}  ~|~ \{\theta_{i_1}, \theta_{i_2}\}_{i \in \cU_{u,s}}, \cU_{u,s}]| > z ~|~ \cF_{u,2(s-1)}, \cU_{u,s}, \{\theta_{i_1}, \theta_{i_2}\}_{i \in \cU_{u,s}}} \nonumber \\
&\leq 2 \exp\left(-\frac{|\cU_{u,s}| z^2}{B^4}\right).
\end{align}
Next we consider concentration with respect to the random subset $\cU_{u,s}$ out of the $\cV_A \setminus \cB_{u,2(s-1)}$ possible vertices. In particular we would like to argue that with high probability,
\begin{align*}
&\frac{1}{|\cU_{u,s}|} \sum_{i \in \cU_{u,s}} \sum_l \lambda_l q_l(\theta_v) q_l(\theta_{i_1}) q_l(\theta_{i_2}) q_k(\theta_{i_1}) q_k(\theta_{i_2}) \\
&\qquad\approx \frac{1}{|\cV_A \setminus \cB_{u,2(s-1)}|} \sum_{i \in \cV_A \setminus \cB_{u,2(s-1)}} \sum_l \lambda_l q_l(\theta_v) q_l(\theta_{i_1}) q_l(\theta_{i_2}) q_k(\theta_{i_1}) q_k(\theta_{i_2}).
\end{align*}
Next, we formalize it. To that end, conditioned on the size $|\cU_{u,s}|$, the set $\cU_{u,s}$ is a uniform random sample of the possible set of vertices $\cV_A \setminus \cB_{u,2(s-1)}$. The above expression on the left is thus the mean of a random sample $\cU_{u,s}$ without replacement from $\cV_A \setminus \cB_{u,2(s-1)}$. Due to negative dependence, it concentrates around its means no slower than assuming that they were a
sample of the same size from the same population with replacement, cf. \cite[Theorem 4]{hoeffding1963}. Therefore, using 
\begin{align*}
\sum_l \lambda_l q_l(\theta_v) q_l(\theta_{i_1}) q_l(\theta_{i_2}) q_k(\theta_{i_1}) q_k(\theta_{i_2}) 
&= |f(\theta_v, \theta_{i_1}, \theta_{i_2}) q_k(\theta_{i_1}) q_k(\theta_{i_2})|
\leq B^2,
\end{align*}
we can apply Hoeffding's inequality to argue that
\begin{align} \label{eq:mart_diff_conc2}
&\bP\Big(\Big|\frac{1}{|\cU_{u,s}|} \sum_{i \in \cU_{u,s}} q_l(\theta_{i_1}) q_l(\theta_{i_2}) q_k(\theta_{i_1}) q_k(\theta_{i_2}) \nonumber \\
&\quad\quad- \frac{1}{|\cV_A \setminus \cB_{u,2(s-1)}|} \sum_{i \in \cV_A \setminus \cB_{u,2(s-1)}} q_l(\theta_{i_1}) q_l(\theta_{i_2}) q_k(\theta_{i_1}) q_k(\theta_{i_2})\Big| \geq z ~|~ \{\theta_{i_1}, \theta_{i_2}\}_{i \in \cV_A}, |\cU_{u,s}|\Big) \nonumber \\
&\leq 2 \exp\Big(-\frac{|\cU_{u,s}| z^2}{B^4}\Big).
\end{align}
Finally, we need to account for the randomness in $\{\theta_{i_1}, \theta_{i_2}\}_{i \in \cV_A}$, arguing that with high probability
\[\frac{1}{|\cV_A \setminus \cB_{u,2(s-1)}|} \sum_{i \in \cV_A \setminus \cB_{u,2(s-1)}} \sum_l \lambda_l q_l(\theta_v) q_l(\theta_{i_1}) q_l(\theta_{i_2}) q_k(\theta_{i_1}) q_k(\theta_{i_2}) \approx \lambda_k q_k(\theta_v).\]
To formalize this, we start by recalling that 
\[\cV_A \setminus \cB_{u,2(s-1)} = \{(i_1, i_2) ~\text{s.t.}~ i_1 < i_2, \{i_1, i_2\} \subset [n/2] \setminus \cB_{u,2(s-1)}\}.\]
Let $n_{u,s} = |[n/2] \setminus \cB_{u,2(s-1)}|$, then $|\cV_A \setminus \cB_{u,2(s-1)}| = \binom{|n_{u,s}|}{2}$. Then the above summation can be written as a pairwise U-statistic, 
\[U = \frac{1}{|\cV_A \setminus \cB_{u,2(s-1)}|} \sum_{(i_1, i_2) \in \cV_A \setminus \cB_{u,2(s-1)}} g(\theta_{i_1}, \theta_{i_2})\]
where $g$ is a symmetric function and each term $g(\theta_{i_1}, \theta_{i_2})$ is bounded in absolute value by $B^2$. Furthermore, 
\[\E\left[\sum_l \lambda_l q_l(\theta_{i_2}) q_k(\theta_{i_1}) q_k(\theta_{i_2})\right] =\lambda_k q_k(\theta_v)\]
by the orthogonality model assumption. Therefore, by 
Lemma \ref{lem:ustats}
\begin{align} \label{eq:mart_diff_conc3}
&\Prob{\Big|\frac{1}{|\cV_A \setminus \cB_{u,2(s-1)}|} \sum_{i \in \cV_A \setminus \cB_{u,2(s-1)}} \sum_l \lambda_l q_l(\theta_{i_1}) q_l(\theta_{i_2}) q_k(\theta_{i_1}) q_k(\theta_{i_2})  - \lambda_k q_k(\theta_v)\Big| \geq z} \nonumber \\
&\quad\leq 2 \exp\Big(-\frac{n_{u,s} z^2}{8 B^4}\Big).
\end{align}
By putting together all calculations, it also follows that
\[\E[e_k^T \cQ \tW_{u,s}] = e_k^T \Lambda Q \tN_{u,s-1},\]
and for $z_1, z_2, z_3 > 0$, with probability at least
\begin{align*}
1 - 2 \exp\Big(-\frac{|\cU_{u,s}| z_1^2}{B^4}\Big)
- 2 \exp\Big(-\frac{|\cU_{u,s}| z_2^2}{B^4}\Big)
- 2 \exp\Big(-\frac{n_{u,s} z_3^2}{8 B^4}\Big)
\end{align*}
it holds that 
\begin{align*}
&|e_k^T \cQ \tW_{u,s} -  e_k^T \Lambda Q \tN_{u,s-1}| \\
&\leq |e_k^T \cQ \tW_{u,s} - \E[e_k^T \cQ \tW_{u,s}  ~|~ \{\theta_{i_1}, \theta_{i_2}\}_{i \in \cU_{u,s}}, \cU_{u,s}]| \\
&~+ \Big|\frac{1}{|\cS_{u,s-1}|} \sum_{v \in \cS_{u,s-1}} N_{u,s-1}(v) \Big(\frac{1}{|\cU_{u,s}|} \sum_{i \in \cU_{u,s}} \sum_l \lambda_l q_l(\theta_v) q_l(\theta_{i_1}) q_l(\theta_{i_2}) q_k(\theta_{i_1}) q_k(\theta_{i_2}) \\
&\qquad\qquad\qquad\qquad\qquad\qquad\qquad - \frac{1}{|\cV_A \setminus \cB_{u,2(s-1)}|} \sum_{i \in \cV_A \setminus \cB_{u,2(s-1)}} \sum_l \lambda_l q_l(\theta_v)q_l(\theta_{i_1}) q_l(\theta_{i_2}) q_k(\theta_{i_1}) q_k(\theta_{i_2}) \Big)\Big| \\
&~ + \Big|\frac{1}{|\cS_{u,s-1}|} \sum_{v \in \cS_{u,s-1}} N_{u,s-1}(v) \Big(\frac{\sum_{i \in \cV_A \setminus \cB_{u,2(s-1)}} \sum_l \lambda_l q_l(\theta_v) q_l(\theta_{i_1}) q_l(\theta_{i_2}) q_k(\theta_{i_1}) q_k(\theta_{i_2})}{|\cV_A \setminus \cB_{u,2(s-1)}|} - \lambda_k q_k(\theta_v)\Big)\Big| \\
&\leq z_1 + \frac{1}{|\cS_{u,s-1}|} \sum_{v \in \cS_{u,s-1}} |N_{u,s-1}(v)| z_2 
 + \frac{1}{|\cS_{u,s-1}|} \sum_{v \in \cS_{u,s-1}} |N_{u,s-1}(v)| z_3 \\
&\leq z_1 + z_2 + z_3,
\end{align*}
since $\|N_{u,s-1}\|_\infty \leq 1$.  Conditioned on $\cA^3_{u,\radius}(\delta)$, $n_{u,s} = n/2(1-o(1))$, and 
\[|\cU_{u,s}| \in \left[(1-\delta)^{2s-1} 2^{-3s} n^{\frac12 + \extra (2s-1)} (1-o(1)), (1+\delta)^{2s-1} 2^{-s} n^{\frac12 + \extra (2s-1)}\right].\]
As a result, for $z_1 = z_2 = z_3$, the expression in \eqref{eq:mart_diff_conc1} and \eqref{eq:mart_diff_conc2} asymptotically dominate the expression in \eqref{eq:mart_diff_conc3}. It follows that, with appropriate 
choice of $z_1 = z_2 = z_3$ in the above, 
\begin{align*}
\Prob{|D_{u,2s-1}| \geq z ~|~ \cF_{2s-2},\cA^3_{u,\radius}(\delta)} 
&\leq 6 \exp\left(-\frac{(1-\delta)^{2s-1} 2^{-3s} n^{\min\{1, \frac12 + \extra (2s-1)\}} (1-o(1)) z^2}{72 \lambda_k^{4\radius-4s+2} B^4}\right).
\end{align*}
We finish the proof by using Lemma \ref{lem:subexp} with $c = 4$ and 
$Q = \frac{72 \lambda_k^{4\radius-4s+2} B^4 2^{3s} (1+o(1))}{(1-\delta)^{2s-1} n^{\min\{1, \frac12 + \extra (2s-1)\}}}$.

\endproof
\subsection{Concentration of Quadratic Form Two}

Lemma \ref{lemma:nhbrhd_vectors} suggests the following high probability events: 
for any $u \in [n], k \in [r]$, $\radius$ as defined in \eqref{eq:t}, i.e. 
$\radius = \Big\lceil \frac{1}{4 \extra} \Big\rceil$, $\delta \in (0,1)$, and
\[x = \frac{ 16 \lambda_{\max}^{2\radius-2} n^{\psi}} {(1-\delta) n^{\extra}}.\]
define 
\[
\cA^4_{u,k,\radius}(x, \delta) = \Big\{|e_k^T Q \tN_{u,\radius} - e_k^T \Lambda^{2\radius} Q e_u| < x \Big\} 
\cap \cA^3_{u,\radius}(\delta).
\]
Now, we state a useful concentration that builds on the above condition holding. It will be useful step
towards establishing Lemma \ref{lemma:dist}. 

\begin{lemma} \label{lemma:NFN_bound}
Let $p = n^{-3/2+\extra}$ for $\extra \in (0,\frac12)$, 
$\radius$ as defined in \eqref{eq:t}, 
and $\delta \in (0,\frac12)$. For any $u, v \in [n]$, conditioned on 
$\cap_{k=1}^r \big(\cA^4_{u,k,\radius}(x, \delta) \cap \cA^4_{v,k,\radius}(x, \delta)\big)$,
\begin{align*}
\big|\tN_{u,\radius}^T Q^T \Lambda^2 Q \tN_{v, \radius} - e_u^T Q^T \Lambda^{2(2\radius + 1)} Q e_v \big|
& \leq x^2 \Bigg(\sum_{k=1}^r \lambda_k^2\Bigg) + x B \Bigg(\sum_{k=1}^r 2\lambda_k^{2(\radius+1)}\Bigg).
\end{align*}
\end{lemma}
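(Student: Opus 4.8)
The plan is to diagonalize both bilinear forms in the $r$-dimensional eigenspace and then control the discrepancy coordinate by coordinate using the conditioning events $\cA^4$. First I would set $a_k := e_k^T Q \tN_{u,\radius}$ and $b_k := e_k^T Q \tN_{v,\radius}$ for $k \in [r]$, so that since $\Lambda$ is diagonal,
\[\tN_{u,\radius}^T Q^T \Lambda^2 Q \tN_{v,\radius} = (Q\tN_{u,\radius})^T \Lambda^2 (Q\tN_{v,\radius}) = \sum_{k=1}^r \lambda_k^2 a_k b_k.\]
Similarly, writing $\alpha_k := e_k^T \Lambda^{2\radius} Q e_u = \lambda_k^{2\radius} q_k(\theta_u)$ and $\beta_k := e_k^T \Lambda^{2\radius} Q e_v = \lambda_k^{2\radius} q_k(\theta_v)$, one has $e_u^T Q^T \Lambda^{2(2\radius+1)} Q e_v = \sum_{k=1}^r \lambda_k^{2(2\radius+1)} q_k(\theta_u) q_k(\theta_v) = \sum_{k=1}^r \lambda_k^2 \alpha_k \beta_k$. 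Thus the quantity to bound is exactly $\big|\sum_{k=1}^r \lambda_k^2 (a_k b_k - \alpha_k \beta_k)\big|$.

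Next I would use the elementary bilinear identity
\[a_k b_k - \alpha_k \beta_k = (a_k - \alpha_k)(b_k - \beta_k) + \alpha_k (b_k - \beta_k) + \beta_k (a_k - \alpha_k),\]
split the sum into the three corresponding pieces, and apply the triangle inequality. On the event $\cap_{k=1}^r \big(\cA^4_{u,k,\radius}(x,\delta) \cap \cA^4_{v,k,\radius}(x,\delta)\big)$ we have $|a_k - \alpha_k| < x$ and $|b_k - \beta_k| < x$ for every $k \in [r]$, directly by the definition of $\cA^4$. For the cross terms I would invoke the eigenfunction bound $|q_k(\theta)| \leq B$ to get $|\alpha_k|, |\beta_k| \leq B \lambda_k^{2\radius}$, hence $\lambda_k^2 |\alpha_k| \leq B \lambda_k^{2(\radius+1)}$ and likewise for $\beta_k$. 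Summing then gives $\big|\sum_k \lambda_k^2 (a_k - \alpha_k)(b_k - \beta_k)\big| \leq x^2 \sum_k \lambda_k^2$, and each of the two cross-term sums is bounded by $x B \sum_k \lambda_k^{2(\radius+1)}$, which combines to the claimed inequality $x^2 (\sum_k \lambda_k^2) + x B (\sum_k 2\lambda_k^{2(\radius+1)})$.

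I do not anticipate a serious obstacle: the argument is a diagonalization followed by a first-order (bilinear) expansion of the product and a term-by-term triangle-inequality estimate. The only points needing care are the bookkeeping of eigenvalue exponents — tracking that the $\Lambda^{2\radius}$ factors hidden inside $\alpha_k, \beta_k$ combine with the explicit $\Lambda^2$ to produce $\lambda_k^{2(\radius+1)}$ in the cross terms and $\lambda_k^{2(2\radius+1)}$ in the ``true'' form — and observing that the events $\cA^4$ are precisely engineered (via Lemma \ref{lemma:nhbrhd_vectors}) to supply the per-coordinate bounds $|a_k - \alpha_k|, |b_k - \beta_k| < x$ that drive the entire estimate.
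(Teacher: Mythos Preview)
Your proposal is correct and takes essentially the same approach as the paper: both diagonalize the two bilinear forms in the $r$-dimensional eigenbasis, use the identity $ab - \alpha\beta = (a-\alpha)(b-\beta) + \alpha(b-\beta) + \beta(a-\alpha)$ coordinatewise, and then invoke the event $\cA^4$ for the difference terms together with $|q_k(\theta)| \leq B$ for the cross terms. The only cosmetic difference is that the paper first writes the three-term decomposition at the matrix/vector level before expanding in $k$, whereas you pass to coordinates immediately.
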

\proof{Proof of Lemma \ref{lemma:NFN_bound}.}
Assuming event $\cap_{k=1}^r \big(\cA^4_{u,k,\radius}(x, \delta) \cap \cA^4_{v,k,\radius}(x, \delta) \big)$ holds,
\begin{align}
|\tN_{u,\radius}^T Q^T \Lambda^2 Q \tN_{v, \radius} - e_u^T Q^T \Lambda^{2(2\radius + 1)} Q e_v| 
&\leq |(\tN_{u,\radius}^T Q^T - e_u^T Q^T \Lambda^{2\radius}) (\Lambda^2 Q \tN_{v,\radius} - \Lambda^{2(\radius+1)} Q e_v)| \nonumber \\
&\qquad + |(\tN_{u,\radius}^T Q^T - e_u^T Q^T \Lambda^{2\radius}) \Lambda^{2(\radius+1)} Q e_v| \nonumber \\
&\qquad+ |e_u^T Q^T \Lambda^{2(\radius + 1)} (Q \tN_{v, \radius} - \Lambda^{2\radius} Q e_v)| \nonumber\\
&\leq \Big|\sum_{k=1}^r (e_k ^T Q \tN_{u,\radius} - e_k^T \Lambda^{2\radius} Q e_u) (e_k^T \Lambda^2 Q \tN_{v,\radius} - e_k^T \Lambda^{2(\radius+1)} Q e_v)\Big|\nonumber \\
&\qquad + \Big|\sum_{k=1}^r (e_k ^T Q \tN_{u,\radius} - e_k^T \Lambda^{2\radius} Q e_u) e_k^T \Lambda^{2(\radius+1)} Q e_v \Big| \nonumber\\
& \qquad  + \Big|\sum_{k=1}^r (e_k^T \Lambda^{2(\radius + 1)} Q e_u) (e_k^T Q \tN_{v,\radius} - e_k^T \Lambda^{2\radius} Q e_v)\Big|.  
\end{align}
In above, we have simply used the fact that for two vectors $a, b \in \mathbb{R}^r$, 
$a^T b = \sum_k a_k b_k = \sum_k (e_k^T a)(e_k^T b)$. Now, consider the first term on the right hand side of the last inequality. 
If $\cap_{k=1}^r \cA^4_{u,k,\radius}(x, \delta)$ holds, then $|(e_k ^T Q \tN_{u,\radius} - e_k^T \Lambda^{2\radius} Q e_u)| \leq x$. And
if $\cap_{k=1}^r  \cA^4_{v,k,\radius}(x, \delta)$ holds, then $|(e_k^T \Lambda^2 Q \tN_{v,\radius} - e_k^T \Lambda^{2(\radius+1)} Q e_v)| \leq \lambda_k^2 x$. Similar application to other terms and the fact that 
$|e_k^T Q e_u|, |e_k^T Q e_v|  \leq \|q_k(\cdot)\|_\infty \leq B$, we conclude that 
\begin{align}
|\tN_{u,\radius}^T Q^T \Lambda^2 Q \tN_{v,\radius} - e_u^T Q^T \Lambda^{2(2\radius + 1)} Q e_v| 
&\leq x^2 \Bigg(\sum_{k=1}^r \lambda_k^2\Bigg) + x B \Bigg(\sum_{k=1}^r 2 \lambda_k^{2(\radius+1)}\Bigg).
\end{align}

\endproof

\subsection{Concentration of Quadratic Form Three} \label{sec:conc_quadratic_3}

We establish a final concentration that will lead us to the proof of good distance function property. 
For any $u \in [n]$, define event
\begin{align}\label{eq:aprime}
\Ap_{u, v, \radius}(x, \delta) & = \cap_{k=1}^r \big(\cA^4_{u,k,\radius}(x, \delta) \cap \cA^4_{v,k,\radius}(x, \delta)\big). 
\end{align}

\begin{lemma} \label{lemma:NMN_conc}
Let $p = n^{-3/2+\extra}$ for $\extra \in (0,\frac12)$, 
$\radius$ as defined in \eqref{eq:t}, 
$\delta \in (0,\frac12)$, and
\[x = \frac{ 16 \lambda_{\max}^{2\radius-2} n^{\psi}} {(1-\delta) n^{\extra}}.\]
Let $S \equiv S_{u,v,\radius} = [n] \backslash (\cB_{u,2\radius} \cup \cB_{v, 2\radius} \cup [n/2])$.   
Then, under event 
$\Ap_{u, v, \radius}(x, \delta)$, 
\begin{align*}
&\left|\frac{1}{\binom{|S|}{2} p^2 |\cS_{u,\radius}| |\cS_{v,\radius}|} \sum_{\alpha < \beta \in S \times S} T(\alpha, \beta)  - \tN_{u,\radius}^T Q^T \Lambda^2 Q \tN_{v,\radius}  \right| 
 = O\left(\frac{n^{\psi}}{(|S|^2 p^2 |\cS_{u,\radius}| |\cS_{v,\radius}|)^{1/2}}\right) + O\left(\frac{n^{\psi}}{|S|^{1/2}}\right)
\end{align*}
with probability at least $1 - 4\exp(-n^{2\psi}(1-o(1))) - O(n^{-6})$ with $\psi \in (0,\extra)$.
\end{lemma}

\proof
 First, note that $\Ap_{u, v, \radius}(x, \delta)$ includes events $\cA^3_{u,\radius}(\delta)$ and $\cA^3_{v,\radius}(\delta)$. This implies that $|S| = \frac{n}{2} - o(n) = \frac{n(1-o(1))}{2}$.
Furthermore, it implies that $|\cS_{u,\radius}|$ and $|\cS_{v,\radius}|$ are both greater than or equal to $(1-\delta)^{2\radius} 2^{-3\radius-1} n^{2\extra \radius} (1-o(1))$. As a result, 
\begin{align}
|S|^2 p^2 |\cS_{u,\radius}| |\cS_{v,\radius}| 
&\geq \frac{n^2}{8} p^2 \left(\frac{(1-\delta)^{2}}{8} n^{2\extra}\right)^{2\radius} (1-o(1)) \\
&\geq n^{2 + 4\extra\radius} n^{2(-\frac32 + \extra)} \frac{1}{8}\left(\frac{(1-\delta)^2}{8}\right)^{2\radius} (1-o(1)) \\
&= \Theta(n^{-1 + 2\extra(2\radius+1)}) \\
&= \Omega(n^{2\extra}). \label{eq:SpS_bd}
\end{align}
The asymptotic relationships follow from the choice of $\radius \geq \frac{1}{4\extra}$, and the fact that $\delta$ and $\radius$ are both constants.



Recall that $\Mpp(a, (\alpha, \beta)) = \Ind_{((a, \alpha, \beta) \in \cEpp)} (F(a,\alpha,\beta) + \epsilon_{a \alpha \beta})$ for $$F(a,\alpha,\beta)= \sum_{k=1}^r \lambda_k q_k(\theta_a) q_k(\theta_{\alpha}) q_k(\theta_{\beta}).$$ There are 3 sources of randomness: the sampling of entries in $\cEpp$, the observation noise terms $\epsilon_{a \alpha \beta}$, and the latent variables $\theta_a, \theta_{\alpha}, \theta_{\beta}$. Since we enforce that $\alpha$ and $\beta$ are in the complement of $\cB_{u,2\radius} \cup \cB_{v, 2\radius}$, the sampling, observations, and latent variables involved in ${\Mpp}$ are independent from $N_{u,\radius}$ and $N_{v, \radius}$. 

Let us define the quantity 
\begin{align*}
	\tilde{T}(\alpha,\beta) &= \min(\max(T(\alpha,\beta),-\phi^2), \phi^2) \\
	&= \sgn(T(\alpha,\beta))\min(|T(\alpha,\beta)|,\phi^2)
\end{align*}
for $\phi = \lceil 16/(1-2\extra)\rceil$ where recall that
\[T(\alpha, \beta) = \sum_{a \neq b \in [n]} N_{u, \radius}(a) N_{v, \radius}(b)  \Mpp(a, (\alpha,0 \beta)) \Mpp(b, (\alpha, \beta)).\] 
Trivially, due to this thresholding, $|\tilde{T}(\alpha, \beta)| \leq \phi^2$ such that $|\tilde{T}(\alpha, \beta) - \E[\tilde{T}(\alpha,\beta)]| \leq 2\phi^2$.

To begin with, $N_{u,  \radius}(a) = 0$ if $a \notin \cS_{u, \radius} \subset \cB_{u,2 \radius}$ and $N_{v,  \radius}(b) = 0$ if $b \notin \cS_{v, \radius} \subset \cB_{v,2\radius}$. Further, conditioned on event  $\Ap_{u, v,\radius}(x, \delta)$, all the information associated with 
$\cB_{u, 2\radius}$ and $\cB_{v, 2\radius}$ is revealed; however, information about $[n] \backslash (\cB_{u, 2\radius} \cup \cB_{v, 2\radius})$ is not. Let $\cF(u, v, \radius, x, \delta)$ denote all the information revealed such that event $\Ap_{u, v, \radius}(x, \delta)$ holds.

Let's prove concentration in two steps. In step one, we condition on $\cF(u,v,\radius,x,\delta)$ and the latent variables $\{\theta_i\}_{i \in [n]}$. The sampling process (edges in $\cEpp$) and the observation noise are independent for distinct pairs $(\alpha, \beta)$ and $(\alpha',\beta')$. As a result, $T(\alpha,\beta)$ and $T(\alpha',\beta')$ are conditionally independent as long as $\{\alpha,\beta\} \cap \{\alpha',\beta'\} \neq 2$, i.e. they are not the exact same pair. The correlations across $T(\alpha,\beta)$ and $T(\alpha',\beta')$ are due only to the latent variables if $\alpha,\beta,\alpha',\beta'$ share any values. We will bound the variance of $T(\alpha,\beta)$ in Lemma \ref{lem:VarT}, and by combining it with the conditional independence property across $T(\alpha,\beta)$, it follows that (using notation $\cF = \cF(u, v, \radius, x, \delta)$)
\begin{align*}
\Var\left[\left.\sum_{\alpha < \beta \in S \times S} T(\alpha, \beta)  ~\right|~ \cF, \{\theta_i\}_{i\in [n]}\right] 
&= \sum_{\alpha < \beta \in S \times S} \Var\left[T(\alpha, \beta)  ~|~ \cF, \{\theta_i\}_{i\in [n]}\right] \\
&\leq 2 \binom{|S|}{2} p^2 |\cS_{u,\radius}| |\cS_{v,\radius}| (1+ o(1)).
\end{align*}

The variables $\tilde{T}(\alpha,\beta)$ are also independent across $(\alpha,\beta)$ conditioned on the latent variables $\{\theta_i\}_{i\in [n]}$, and their variance is bounded above by the corresponding variances of $T(\alpha, \beta)$. Using the boundedness of $\tilde{T}(\alpha,\beta)$, by applying Bernstein's inequality with the choice of $z = 2 n^{\psi} \left(\binom{|S|}{2} p^2 |\cS_{u,\radius}| |\cS_{v,\radius}|\right)^{1/2}$ for $\psi \in (0,\extra)$, it follows that
\begin{align}
&\Prob{\Big|\sum_{\alpha < \beta \in S \times S} (\tilde{T}(\alpha, \beta) - \E[\tilde{T}(\alpha,\beta) ~|~ \cF, \{\theta_i\}_{i\in [n]}])\Big| \geq z ~\Bigg|~ \cF(u, v, \radius, x, \delta), \{\theta_i\}_{i\in [n]}} \nonumber \\ 
&\qquad\leq 2 \exp\left(- \frac{\frac{z^2}{2}}{ 2 \binom{|S|}{2} p^2 |\cS_{u,\radius}| |\cS_{v,\radius}| (1 + o(1)) + \frac{2 \phi^2 z}{3}}\right) \nonumber \\
&\qquad= 2  \exp(-n^{2\phi}(1-o(1))). \label{eq:T_tilde_Berstein}
\end{align}
The last equality arises from the observation that $\radius$ is chosen such that conditioned on $\cF$, we can plug in \eqref{eq:SpS_bd} to show that for our choice of $z$, it holds that 
\[z = o(|S|^2 p^2 |\cS_{u,\radius}| |\cS_{v,\radius}|).\]

In Lemma \ref{lem:tail_T}, we will show a bound on $\Prob{|T(\alpha,\beta)|\geq \phi^2}$, which translates to a bound on $\E[\Ind_{|T(\alpha,\beta)| \geq \phi^2} (|T(\alpha,\beta)| - \phi^2) ~|~ \cF]$, which then upper bound the difference between the conditional expectations of $T$ and $\tilde{T}$ according to
\begin{align*}
|\E[T(\alpha,\beta) ~|~ \cF] - \E[\tilde{T}(\alpha,\beta) ~|~ \cF]| 
&\leq \E[ |T(\alpha,\beta) - \tilde{T}(\alpha,\beta)| ~|~ \cF] \\
&= \E[|T(\alpha,\beta)| - \min(|T(\alpha,\beta)|,\phi^2) ~|~ \cF] \\
&= \E[\Ind_{|T(\alpha,\beta)| \geq \phi^2} (|T(\alpha,\beta)| - \phi^2) ~|~ \cF]
\end{align*}

Using this bound from Lemma \ref{lem:tail_T} along with the conditions from $\cF$ that guarantee $|S| = \Theta(n)$ and naively $|\cS_{u,\radius} \cup \cS_{v,\radius}| = O(n)$, it follows that
\begin{align}
&\Big|\sum_{\alpha < \beta \in S \times S} (\E[\tilde{T}(\alpha,\beta)~|~ \cF, \{\theta_i\}_{i\in [n]}] - \E[T(\alpha,\beta)~|~ \cF, \{\theta_i\}_{i\in [n]}])\Big| \nonumber \\
&\qquad\leq (1 + o(1)) \binom{|S|}{2} \frac{2 \phi}{\ln(|\cS_{u,\radius} \cup \cS_{v,\radius}|^{-1} p^{-1})} (|\cS_{u,\radius} \cup \cS_{v,\radius}| p)^{\phi} \nonumber \\
&\qquad= O\left(|S|^2 \left(|\cS_{u,\radius} \cup \cS_{v,\radius}| p\right)^{\phi} \right) \nonumber \\
&\qquad= O\left(n^2 \left(n^{-(\frac12 - \extra)}\right)^{\phi} \right). \label{eq:exp_T_diff}
\end{align}
We choose $\phi = \lceil \frac{16}{1-2\extra} \rceil \geq \frac{16}{1-2\extra}$ so that this difference between the expectations of $T$ and $\tilde{T}$ is $O(n^{-6})$.

By plugging in our choice of $\phi$ into Lemma \ref{lem:tail_T}, it also follows that 
\begin{align} \label{eq:T_eq_T_tilde}
\Prob{\cup_{\alpha,\beta} \{\tilde{T}(\alpha,\beta) \neq T(\alpha,\beta)\} ~|~ \cF} \leq O(n^{-6}).
\end{align}

By combining \eqref{eq:T_tilde_Berstein}, \eqref{eq:exp_T_diff}, and \eqref{eq:T_eq_T_tilde}, with probability at least $1 - 2\exp(-n^{2 \psi}(1-o(1))) - O(n^{-6})$,
\begin{align}
& \left|\sum_{\alpha < \beta \in S \times S} \left(T(\alpha, \beta) - \E\left[T(\alpha,\beta)~|~ \cF, \{\theta_i\}_{i\in [n]}\right]\right)\right|\leq 2 n^{\psi} \left(\binom{|S|}{2} p^2 |\cS_{u,\radius}| |\cS_{v,\radius}|\right)^{1/2} + O(n^{-6}), \label{eq:conc_T_tilde_ET}
\end{align}
where the first term will dominate the second term.

Finally we want to show concentration of the following expression with respect to the latent variables,
\[\frac{1}{\binom{|S|}{2} p^2 |\cS_{u,\radius}| |\cS_{v,\radius}|} \E\left[\left.\sum_{\alpha < \beta \in S \times S} T(\alpha, \beta) ~\right|~ \cF, \{\theta_i\}_{i\in [n]}\right].\]
The expression can be written as a pairwise U-statistic,
\[U = \frac{1}{\binom{|S|}{2}} \sum_{\alpha < \beta \in S \times S} g(\theta_\alpha, \theta_\beta),\]
where $g$ is a symmetric function, and 
\begin{align*}
g(\theta_\alpha, \theta_\beta) 
&= \frac{1}{p^2 |\cS_{u,\radius}| |\cS_{v,\radius}|} \E\left[ T(\alpha, \beta) ~|~ \cF, \{\theta_i\}_{i\in [n]}\right] \\
&= \frac{1}{p^2 |\cS_{u,\radius}| |\cS_{v,\radius}|} \sum_{a \neq b \in [n]} N_{u,\radius}(a) N_{v,\radius}(b) \times 
\E\left[ \Mpp(a, (\alpha, \beta)) \Mpp(b, (\alpha, \beta)) ~|~ \cF, \{\theta_i\}_{i\in [n]}\right] \nonumber \\
& = \frac{1}{|\cS_{u,\radius}| |\cS_{v,\radius}|} \sum_{a \neq b \in [n]} N_{u,\radius}(a) N_{v,\radius}(b) F(a,\alpha,\beta) F(b,\alpha,\beta).
\end{align*}
It follows by boundedness of entries in $F$ and the fact that $\|N_{u,\radius}\|_{\infty} \leq 1$ and $\|N_{u,\radius}\|_0 = |\cS_{u,\radius}|$, that $|g(\theta_\alpha, \theta_\beta)| \leq 1$ almost surely. Therefore, by Lemma \ref{lem:ustats} and choosing $z = \sqrt{8} n^{\psi}|S|^{-1/2}$, 
\begin{align}
	&\Prob{\left|\frac{1}{\binom{|S|}{2} p^2 |\cS_{u,\radius}| |\cS_{v,\radius}|} \sum_{\alpha < \beta \in S \times S} \left(\E\left[T(\alpha, \beta) ~|~ \cF, \{\theta_i\}_{i\in [n]}\right]  - \E\left[T(\alpha, \beta) ~|~ \cF\right]\right)\right| \geq z} \nonumber \\
	&\quad\leq 2 \exp\Big(-\frac{|S| z^2}{8}\Big) = 2 \exp(n^{2\psi}). \label{eq:conc_ET_latentVar}
\end{align}

The expected value with respect to the randomness in the latent variables is 
\begin{align*}
&\frac{1}{\binom{|S|}{2} p^2 |\cS_{u,\radius}| |\cS_{v,\radius}|} \E\left[\left.\sum_{\alpha < \beta \in S \times S} T(\alpha, \beta) ~\right|~ \cF\right] \\
&\qquad= \frac{1}{\binom{|S|}{2}} \sum_{\alpha < \beta \in S \times S} \frac{1}{|\cS_{u,\radius}| |\cS_{v,\radius}|} \sum_{a \neq b \in [n]} N_{u,\radius}(a) N_{v,\radius}(b) \E[F(a,\alpha,\beta) F(b,\alpha,\beta)] \\
&\qquad= \frac{1}{\binom{|S|}{2}} \sum_{\alpha < \beta \in S \times S} \frac{1}{|\cS_{u,\radius}| |\cS_{v,\radius}|} \sum_{a \neq b \in [n]} N_{u,\radius}(a) N_{v,\radius}(b) \sum_{k} \lambda_k^2 q_k(\theta_a) q_k(\theta_b) \\
&\qquad= \tN_{u,\radius}^T Q^T \Lambda^2 Q \tN_{v,\radius} - \sum_{a \in [n]} \tN_{u,\radius}(a) \tN_{v,\radius}(a)\sum_{k} \lambda_k^2 q^2_k(\theta_a).
\end{align*}
Furthermore, 
\begin{align}
\left|\sum_{a \in [n]} \tN_{u,\radius}(a) \tN_{v,\radius}(a) \sum_k \lambda_k^2 q^2_k(\theta_a)\right|
&\leq \frac{B^2 \big(\sum_k \lambda_k^2 \big)}{\max(|\cS_{u, \radius}|,|\cS_{v, \radius}|)} \nonumber \\
&= O((|\cS_{u, \radius}||\cS_{v, \radius}|)^{-1/2}). \label{eq:diag_terms_bd}
\end{align}
By combining \eqref{eq:conc_T_tilde_ET}, \eqref{eq:conc_ET_latentVar}, and \eqref{eq:diag_terms_bd}, it follows that conditioned on $\cF(u, v, s, \ell, x, \delta)$, with probability 
\[1 - 2\exp\left(-n^{2\psi}(1-o(1))\right) - 2 \exp(n^{2\psi}) - O(n^{-6}),\]
it holds that 
\begin{align*}
&\left|\frac{1}{\binom{|S|}{2} p^2 |\cS_{u,\radius}| |\cS_{v,\radius}|} \sum_{\alpha < \beta \in S \times S} T(\alpha, \beta)  - \tN_{u,\radius}^T Q^T \Lambda^2 Q \tN_{v,\radius}  \right| \\
 &\qquad\leq 2 n^{\psi} \left(\binom{|S|}{2} p^2 |\cS_{u,\radius}| |\cS_{v,\radius}|\right)^{-1/2} + O(n^{-6}) + \frac{\sqrt{8} n^{\psi}}{|S|^{1/2}}\\
 &\qquad\quad+ o\left(\left(\binom{|S|}{2} p^2 |\cS_{u,\radius}| |\cS_{v,\radius}|\right)^{-1}\right) 
 + \frac{B^2 \big(\sum_k \lambda_k^2 \big)}{\max(|\cS_{u, \radius}|,|\cS_{v, \radius}|)} \\
 &\qquad\leq O\left(\frac{n^{\psi}}{(|S|^2 p^2 |\cS_{u,\radius}| |\cS_{v,\radius}|)^{1/2}}\right) + O\left(\frac{n^{\psi}}{|S|^{1/2}}\right) + O\left(\frac{1}{(|\cS_{u, \radius}||\cS_{v, \radius}|)^{1/2}}\right).
\end{align*}
Note that the third term is dominated by the first term as $|S| p = o(1)$. This completes the proof of Lemma \ref{lemma:NMN_conc}. 

\endproof

\begin{lemma} \label{lem:VarT}
Let $\cF = \cF(u, v, \radius, x, \delta)$ denote all the information revealed such that event $\Ap_{u, v, \radius}(x, \delta)$ holds.
\begin{align*}
\normalfont{\Var}[T(\alpha, \beta)  ~|~ \cF, \{\theta_i\}_{i\in [n]}]
&\leq 2 p^2 |\cS_{u,\radius}| |\cS_{v,\radius}| (1+ o(1)).
\end{align*}
\end{lemma}

\proof
To compute the variance of $T(\alpha,\beta)$ conditioned on $\cF, \{\theta_i\}_{i\in [n]}$, note that there is correlation in the terms within the sum of $T(\alpha,\beta)$ as there may be pairs $(a,b)$ and $(a', b')$ that share coordinates. In particular because the observation noise and sampling randomness for $\Mpp(a,b,c)$ is independent across different entries $(a,b,c)$, then conditioned on $\{\theta_i\}_{i\in [n]}$, for $a \neq b$ and $a' \neq b'$, if all four coordinates $\{a,b,a',b'\}$ are distinct,
\[\Cov[\Mpp(a, (\alpha, \beta)) \Mpp(b, (\alpha, \beta)), \Mpp(a', (\alpha, \beta)) \Mpp(b', (\alpha, \beta))] = 0;\]
if $|\{a,b\} \cap \{a',b'\}| = 2$, i.e. $(a',b') = (a,b)$ or $(a',b') = (b,a)$,
\begin{align*}
&\left| \Cov[\Mpp(a, (\alpha, \beta)) \Mpp(b, (\alpha, \beta)), \Mpp(a', (\alpha, \beta)) \Mpp(b', (\alpha, \beta))] \right|\\
& \qquad= \Var[\Mpp(a, (\alpha, \beta)) \Mpp(b, (\alpha, \beta))]\\
& \qquad\leq \E[\Mpp^2(a, (\alpha, \beta)) \Mpp^2(b, (\alpha, \beta))] \leq p^2; 
\end{align*}
and if $\{a,b\} \cup \{a',b'\} = \{x,y,z\}$ such that $\{a,b\} \cap \{a',b'\} = \{x\}$, then 
\begin{align*}
&\left|\Cov[\Mpp(a, (\alpha, \beta)) \Mpp(b, (\alpha, \beta)), \Mpp(a', (\alpha, \beta)) \Mpp(b', (\alpha, \beta))  ~|~ \cF, \{\theta_i\}_{i\in [n]}]\right| \\
&\qquad= \left|\Var[\Mpp(x, (\alpha, \beta))] \E[\Mpp(y, (\alpha, \beta))] \E[\Mpp(z, (\alpha, \beta))]\right| \\
&\qquad\leq \left|\E[\Mpp^2(x, (\alpha, \beta))] \E[\Mpp(y, (\alpha, \beta))] \E[\Mpp(z, (\alpha, \beta))]\right| \\
&\qquad\leq p^3.
\end{align*}
The inequalities follow from the property that every entry of $\Mpp$ has absolute value bounded by 1, and takes value 0 with probability $(1-p)$ in the event it is not observed.

We use this to expand the variance calculation, and use the properties that for every entry $a$, $|N_{u,\radius}(a)| \leq \Ind_{(a \in \cS_{u,\radius})}$. We have dropped the conditioning notation due to the length of the expressions.
\begin{align*}
&\Var[T(\alpha, \beta)  ~|~ \cF, \{\theta_i\}_{i\in [n]}] \\
&= \sum_{a \neq b \in [n]} \Big(N_{u,\radius}^2(a) N_{v,\radius}^2(b) + N_{u,\radius}(a) N_{v,\radius}(b) N_{u,\radius}(b) N_{v,\radius}(a)\Big)
 \Var[\Mpp(a, (\alpha, \beta)) \Mpp(b, (\alpha, \beta))] \\
&\quad + \sum_{a \neq b \in [n]} \sum_{c \notin \{a,b\}} \Big(N_{u,\radius}^2(a) N_{v,\radius}(b) N_{v,\radius}(c) + N_{v,\radius}^2(a) N_{u,\radius}(b) N_{u,\radius}(c)  + N_{u,\radius}(a) N_{u,\radius}(b) N_{v,\radius}(a) N_{v,\radius}(c)  \\
&\qquad \qquad \qquad \qquad \qquad 
+ N_{u,\radius}(a) N_{u,\radius}(c) N_{v,\radius}(a) N_{v,\radius}(b) \Big) \Var[\Mpp(a, (\alpha, \beta))] \E[\Mpp(b, (\alpha, \beta))] \E[\Mpp(c, (\alpha, \beta))] \\
&\leq p^2 \sum_{a \neq b \in [n]} (\Ind_{(a \in \cS_{u,\radius}, b \in \cS_{v,\radius})}  + \Ind_{(\{a,b\} \subset \cS_{u,\radius} \cap \cS_{v,\radius})}) \\
&\quad + p^3 \sum_{a \neq b \in [n]} \sum_{c \notin \{a,b\}} \Big(
\Ind_{(a \in \cS_{u,\radius}, \{b,c\} \subset \cS_{v,\radius})} 
+\Ind_{(a \in \cS_{v,\radius}, \{b,c\} \subset \cS_{u,\radius})}
+\Ind_{(a \in \cS_{u,\radius} \cap \cS_{v,\radius}, b \in \cS_{u,\radius}, c \in \cS_{v,\radius})} 
+\Ind_{(a \in \cS_{u,\radius} \cap \cS_{v,\radius}, c \in \cS_{u,\radius}, b \in \cS_{v,\radius})} 
\Big) \\
&\leq 2 p^2 |\cS_{u,\radius}| |\cS_{v,\radius}| + 2 p^3 |\cS_{u,\radius}| |\cS_{v,\radius}|^2 + 2 p^3 |\cS_{u,\radius}|^2 |\cS_{v,\radius}| \\
&= 2 p^2 |\cS_{u,\radius}| |\cS_{v,\radius}| (1+ o(1)).
\end{align*}
The first term dominates because $p |\cS_{u,\radius}| \leq pn = o(1)$ and $p |\cS_{v,\radius}| = o(1)$.
\endproof

\begin{lemma} \label{lem:tail_T}
\[\Prob{|T(\alpha,\beta)| \geq z  ~|~ \cF} \leq
\left(|\cS_{u,\radius} \cup \cS_{v,\radius}| p\right)^{\lceil\sqrt{z}\rceil} (1 + o(1).\]
As a result,
\[\Prob{\cup_{\alpha,\beta} \{|T(\alpha,\beta)| \geq z\}  ~|~ \cF} \leq \binom{|S|}{2} \left(|\cS_{u,\radius} \cup \cS_{v,\radius}| p\right)^{\lceil\sqrt{z}\rceil} (1 + o(1)),\]
and
\begin{align*}
&\E[\Ind_{|T(\alpha,\beta)| \geq \phi^2} (|T(\alpha,\beta)| - \phi^2) ~|~ \cF] \leq (1 + o(1)) \frac{2 \phi}{\ln(|\cS_{u,\radius} \cup \cS_{v,\radius}|^{-1} p^{-1})} (|\cS_{u,\radius} \cup \cS_{v,\radius}| p)^{\phi}
\end{align*}
\end{lemma}

\proof
Let us define
\begin{align*}
Z_u(\alpha,\beta) &= \{a \in [n] ~s.t.~ a \in \cS_{u,\radius}, (a, \alpha, \beta) \in \cEpp\}, \\
Z_v(\alpha,\beta) &= \{b \in [n] ~s.t.~ b \in \cS_{v,\radius}, (b, \alpha, \beta) \in \cEpp\}.
\end{align*}
Furthermore, because $|\Mpp(\cdot, \cdot, \cdot)| \leq 1$ and $\|N_{u, s}\|_\infty \leq 1$, for any $a,b \in [n]$, it follows that 
\begin{align*}
&|N_{u,\radius}(a) N_{v,\radius}(b)  \Mpp(a, (\alpha, \beta)) \Mpp(b, (\alpha, \beta))| \leq \Ind_{(a \in Z_u(\alpha,\beta))}\Ind_{(b \in Z_v(\alpha,\beta))},
\end{align*}
which implies 
\[|T(\alpha,\beta)| \leq |Z_u(\alpha,\beta)| |Z_v(\alpha,\beta)| \leq |Z_u(\alpha,\beta) \cup Z_v(\alpha,\beta)|^2.\]
Note that $|Z_u(\alpha,\beta) \cup Z_v(\alpha,\beta)| \sim \text{Binomial}(|\cS_{u,\radius} \cup \cS_{v,\radius}|, p)$. It follows then that 
\begin{align*}
\Prob{|T(\alpha,\beta)| \geq z ~|~ \cF} 
&\leq \Prob{|Z_u(\alpha,\beta) \cup Z_v(\alpha,\beta)|^2 \geq z  ~|~ \cF} \\
&= \Prob{|Z_u(\alpha,\beta) \cup Z_v(\alpha,\beta)| \geq \lceil\sqrt{z}\rceil  ~|~ \cF} \\
&= \sum_{i = \lceil\sqrt{z}\rceil}^{|\cS_{u,\radius} \cup \cS_{v,\radius}|} \binom{|\cS_{u,\radius} \cup \cS_{v,\radius}|}{i} p^i (1-p)^{|\cS_{u,\radius} \cap \cS_{v,\radius}|-i} \\
&\leq (1-p)^{|\cS_{u,\radius} \cup \cS_{v,\radius}|} \sum_{i = \lceil\sqrt{z}\rceil}^{|\cS_{u,\radius} \cup \cS_{v,\radius}|} \left(\frac{|\cS_{u,\radius} \cup \cS_{v,\radius}| p}{1-p}\right)^i \\
&\leq (1-p)^{|\cS_{u,\radius} \cup \cS_{v,\radius}|} \left(\frac{|\cS_{u,\radius} \cup \cS_{v,\radius}| p}{1-p}\right)^{\lceil\sqrt{z}\rceil} \sum_{i = 0}^{\infty} \left(\frac{|\cS_{u,\radius} \cup \cS_{v,\radius}| p}{1-p}\right)^i \\
&\leq (1-p)^{|\cS_{u,\radius} \cup \cS_{v,\radius}|} \left(\frac{|\cS_{u,\radius} \cup \cS_{v,\radius}| p}{1-p}\right)^{\lceil\sqrt{z}\rceil} (1 + o(1)) \\
&\leq \left(|\cS_{u,\radius} \cup \cS_{v,\radius}| p\right)^{\lceil\sqrt{z}\rceil} (1 + o(1)) \\
&\leq \left(|\cS_{u,\radius} \cup \cS_{v,\radius}| p\right)^{\sqrt{z}} (1 + o(1)),
\end{align*}
where we used the fact that $|\cS_{u,\radius} \cup \cS_{v,\radius}| p = o(1)$.

We use the bound on the tail probabilities to show that
\begin{align*}
\E[\Ind_{|T(\alpha,\beta)| \geq \phi^2} (|T(\alpha,\beta)| - \phi^2) ~|~ \cF] 
&= \int_0^{\infty} \Prob{|T(\alpha,\beta)| \geq \phi^2 + z} dz \\
&\leq (1 + o(1)) \int_0^{\infty} \left(|\cS_{u,\radius} \cup \cS_{v,\radius}| p\right)^{\sqrt{\phi^2 + z}} dz \\
&\leq (1 + o(1)) \int_{\phi}^{\infty} 2 y \left(|\cS_{u,\radius} \cup \cS_{v,\radius}| p\right)^{y} dy \\
&=(1 + o(1)) \frac{2 \phi}{\ln(|\cS_{u,\radius} \cup \cS_{v,\radius}|^{-1} p^{-1})} (|\cS_{u,\radius} \cup \cS_{v,\radius}| p)^{\phi}. 
\end{align*}

\endproof
\subsection{Proof of Lemma \ref{lemma:dist}}

\begin{proof}
Now we are ready to bound the difference between $d(u, v)$ and $\hat{d}(u,v)$ for any $u, v \in [n]$. Recall, 
\begin{align}\label{eq:lem1.f1}
d(\theta_u,\theta_v) & = \| \Lambda^{2t+1} Q (e_u - e_v)\|^2 \nonumber\\
&= (e_u - e_v)^T Q^T \Lambda^{4t+2} Q (e_u - e_v) \\
&= e_u^T Q^T \Lambda^{4t+2} Q e_u + e_v^T Q^T \Lambda^{4t+2} Q e_v -  e_u^T Q^T \Lambda^{4t+2} Q e_v -  e_v^T Q^T \Lambda^{4t+2} Q e_u, \nonumber
\end{align}
and according to \eqref{eq:dist},
\begin{align}\label{eq:lem1.f2}
\dist(u,v) &= \frac{1}{\binom{|S|}{2} p^2} (Z_{uu} + Z_{vv} - Z_{uv} - Z_{vu})
\end{align}
 for $S  \equiv S_{u,s,\radius} = n \setminus (\cB_{u,\radius} \cup \cB_{v,\radius} \cup [n/2])$ and
\begin{align}
Z_{uv} &= \frac{1}{\binom{|S_{u,s,\radius}|}{2} p^2 |\cS_{u,t}| |\cS_{v,t}|} \sum_{\alpha < \beta \in S_{u,s,\radius}\times S_{u,s,\radius}} T_{uv}(\alpha, \beta).
\end{align}
By Lemma \ref{lem:growth}, event $\cA^3_{u, \radius}(\delta)$ holds with probability at least 
$1 - O\Big( n \exp\big(-\Theta(n^{2\extra}) \big) \Big)$. By Lemmas \ref{lemma:nhbrhd_vectors} and 
\ref{lemma:NFN_bound}, conditioned on $\cA^3_{u, \radius}(\delta)$, for
\[x = \frac{ 16 \lambda_{\max}^{2\radius-2} n^{\psi}} {(1-\delta) n^{\extra}} = o(1),\]
event $\Ap_{u, v, \radius}(x, \delta)$ holds with probability at least 
$1 - 4 r \exp(-n^{2\psi}(1-o(1)))$,
implying 
\begin{align*}
& \big|\tN_{u,\radius}^T Q^T \Lambda^2 Q \tN_{v, \radius} - e_u^T Q^T \Lambda^{2(2\radius + 1)} Q e_v \big|  \leq \frac{n^{\psi}}{n^{\extra}}  \left(16 B \lambda_{\max}^{2(\radius - 1)} \Bigg(\sum_{k=1}^r 2\lambda_k^{2(\radius+1)}\Bigg) (1-\delta)^{-1} (1+ o(1)) \right).
\end{align*}
By Lemma \ref{lemma:NMN_conc}, conditioned on $\Ap_{u, v, \radius}(x, \delta)$, with probability
$1 - 4\exp(-n^{2\psi}(1-o(1)))$, 
\begin{align*}
|Z_{uv} - \tN_{u,\radius}^T Q^T \Lambda^2 Q \tN_{v,\radius}|  &= O\left(\frac{n^{\psi}}{(|S_{u,s,\radius}|^2 p^2 |\cS_{u,\radius}| |\cS_{v,\radius}|)^{1/2}}\right) + O\left(\frac{n^{\psi}}{|S_{u,s,\radius}|^{1/2}}\right),
\end{align*}
where $|S_{u,s,\radius}| = \Theta(n) = \Omega(n^{2\extra})$, by event $\Ap_{u,v,\radius}(x,\delta)$ and $\radius \geq \frac{1}{4\extra}$,
\begin{align*}
|S_{u,s,\radius}|^2 p^2 |\cS_{u,\radius}| |\cS_{v,\radius}| 
= \Theta(n^2 n^{-3 + 2 \extra} n^{4\extra\radius})
= \Omega(n^{2 \extra}).
\end{align*}
To put it all together, for $\psi \in(0, \extra)$, with probability at least 
\[1 - O\Big( \exp(-n^{2\psi}(1-o(1)) ) \Big),\]
it holds that
\begin{align*}
|\dist(u,v) - d(\theta_u,\theta_v)| 
&= O\left(\frac{r \lambda_{\max}^{4t} n^{\psi}}{n^{\extra}}\right).
\end{align*}
This completes the proof of Lemma \ref{lemma:dist}. 
\end{proof}

\section{Proof of Lemma \ref{lemma:dist.pert}: perturbation analysis of distance}\label{sec:perturbation}

We establish the proof of Lemma \ref{lemma:dist.pert} here. To do so, we establish a perturbation property of 
$\dist$ here, which combined with Lemma \ref{lemma:dist} will result into the proof of Lemma \ref{lemma:dist.pert}. 

We study the perturbation in the $\dist$ estimate when each noisy observed entry is arbitrarily perturbed. Specifically, for any
$(u, v, w) \in [n]^3$, $M_1(u, v, w)$ is observed with probability $p$. If observed, according to \eqref{eq:meas.def}, 
$M_1(u, v, w) = F(u, v, w) + \epsilon_{uvw} = F_r(u,v,w) + \bpert_{uvw} +  \epsilon_{uvw} $, where $F_r$ is the best rank $r$ approximation to $F$. This expression shows that we can interpret the deviation from a rank $r$ model as a deterministic perturbation of $\bpert_{uvw}$, bounded in absolute value by $\bpert$. Note that $\bpert_{uvw}$ can be any arbitrary (or adversarial), unknown deterministic quantity satisfying $|\bpert_{uvw}|\leq \bpert$. 

Lemma \ref{lemma:dist_perturb} provides a bound on the perturbation in the distance estimate, $\dist$, that results from these entrywise perturbabtions of the observations. 
\begin{lemma} \label{lemma:dist_perturb}
Let $p = n^{-3/2+\extra}$ for $\extra \in (0,\frac12)$, 
$\radius$ as defined in \eqref{eq:t}, 
$\delta \in (0,\frac12)$, and
\[x = \frac{ 16 \lambda_{\max}^{2\radius-2} n^{\psi}} {(1-\delta) n^{\extra}}.\]
For any $u \in [n]$, recall the event
\begin{align}\label{eq:aprime.copy}
\Ap_{u, v, \radius}(x, \delta) & = \cap_{k=1}^r \big(\cA^4_{u,k,\radius}(x, \delta) \cap \cA^4_{v,k,\radius}(x, \delta)\big). 
\end{align}
Let 
event  $\Ap_{u, v, \radius}(x, \delta)$ hold. Let each observed entry of $M_1$ be perturbed by adding
arbitrary, deterministic quantity bounded by $\bpert \geq 0$. Then for any $u, v \in [n]^2$, the 
distance estimate $\dist(u,v)$ is perturbed by at most  
$O( \radius \bpert (1+\bpert)^{2\radius-1} +  \radius^2 \bpert^2 (1+\bpert)^{4\radius-2})$
with probability at least $1 - \exp\big(-\Omega(n^{2\extra})\big) - O(n^{-8})$.
\end{lemma}
\proof 
Recall definition of $\dist$ in \eqref{eq:dist}:
\begin{align*}
\dist(u,v) &= (Z_{uu} + Z_{vv} - Z_{uv} - Z_{vu}), \\ 
Z_{uv} &= \frac{1}{|\cV_B(u,v,t)| p^2 |\cS_{u,\radius}| |\cS_{v,\radius}|} \sum_{(\alpha, \beta) \in \cV_B(u,v,\radius)} T_{uv}(\alpha, \beta), \nonumber \\
\cV_B(u,v,\radius) &= \{(\alpha,\beta) \in \cV_B ~s.t.~ \alpha \notin \cB_{u,2\radius} \cup \cB_{v,2\radius}, \beta \notin \cB_{u,2\radius} \cup \cB_{v,2\radius}\}, \nonumber \\
T_{uv}(\alpha, \beta) &= \sum_{a \neq b \in [n]} N_{u,\radius}(a) N_{v,\radius}(b)  \Mpp(a, (\alpha, \beta)) \Mpp(b, (\alpha, \beta))
\end{align*}
We shall bound the perturbation on $Z_{uv}$. Similar bounds will follow for the other three terms which will conclude the main results.  
Our interest is in understanding how does $Z_{uv}$ change if each observed entry is changed by arbitrary quantity
bounded by $\bpert \geq 0$. This will induce a bound on the changes in $T_{uv}(\cdot, \cdot)$ which will help bound the change in $Z_{uv}$.
By assumption \eqref{eq:aprime.copy}, $\Ap_{u, v, \radius}(x, \delta)$ holds. 
Conditioned on event  $\Ap_{u, v,\radius}(x, \delta)$, all the information associated with $\cB_{u, 2\radius}$ and $\cB_{v, 2\radius}$ is revealed; 
however, information about $[n] \backslash (\cB_{u, 2\radius} \cup \cB_{v, 2\radius})$ is not. 
Let $\cF(u, v, \radius, x, \delta)$ denote all the information revealed such that event $\Ap_{u, v, \radius}(x, \delta)$ holds.

Under $\Ap_{u, v, \radius}(x, \delta)$, by definition $\cA^3_{u,\radius}(\delta)$ and $\cA^3_{v,\radius}(\delta)$ holds. 
This implies that for $S \equiv \cV_B(u,v,t)$,  $|S| = \frac{n}{2} - o(n) = \frac{n(1-o(1))}{2}$. 
Furthermore, it implies that $|\cS_{u,\radius}|$ and $|\cS_{v,\radius}|$ are both greater than or equal to 
$(1-\delta)^{2\radius} 2^{-3\radius-1} n^{2\extra \radius} (1-o(1))$. As shown in \eqref{eq:SpS_bd},
\begin{align}\label{eq:lb.xx}
|S|^2 p^2 |\cS_{u,\radius}| |\cS_{v,\radius}| 
&= \Omega(n^{2\extra})
\end{align}
results from the choice of $\radius \geq \frac{1}{4\extra}$, and the fact that 
$\delta$ and $\radius$ are both constants.

For given $\alpha \neq \beta \in \cV_B(u,v,\radius)$, $T_{uv}(\alpha, \beta)$ is summation over terms, indexed by $a \neq b \in [n]$, 
containing product $N_{u,t}(a) N_{v,\radius}(b)  \Mpp(a, (\alpha, \beta)) \Mpp(b, (\alpha, \beta))$. Now $N_{u,\radius}(a) = 0$ if 
$a \notin \cS_{u,\radius}$, $N_{v,\radius}(b) = 0$ if $b \notin \cS_{v,\radius}$. For $a \in  \cS_{u,\radius}$,  $N_{u,\radius}(a)$ 
is product of $2\radius$ terms, each bounded in absolute value by $1$: let $N_{u,\radius}(a) = \prod_{i=1}^{2\radius} w_i$ with
$|w_i|\leq 1$ for all $i \leq 2 \radius$. Let $\pert_i$ be arbitrary, deterministic quantity added to $w_i$ with $|\pert_i|\leq \bpert$ 
for $i \leq 2 \radius$. Then change in $N_{u,\radius}(a)$ is bounded as 
\begin{align}
\big|\prod_{i=1}^{2\radius} w_i - \prod_{i=1}^{2\radius} (w_i + \pert_i) \big| 
&\quad = \big|\sum_{S \subset [2\radius]: S \neq \emptyset} \prod_{i \in S} \pert_i \prod_{s \in [2\radius] \backslash S} w_i\big| \nonumber \\
&\quad \leq \sum_{S \subset [2\radius]: S \neq \emptyset} \prod_{i \in S} |\pert_i| \prod_{s \in [2\radius] \backslash S} |w_i| \nonumber \\
&\quad  \leq \sum_{S \subset [2\radius]: S \neq \emptyset} \bpert^{|S|} ~= \sum_{i=1}^{2\radius} {2\radius \choose i} \bpert^s \nonumber \\
&\quad  =~\bpert\Big( \sum_{i=0}^{2\radius-1} \frac{(2\radius)!}{(2\radius-i-1)! (i+1)!} \bpert^i \Big) \nonumber \\
&\quad  \leq 2\radius \bpert \Big(\sum_{i=0}^{2\radius-1}  \frac{(2\radius-1)!}{((2\radius-1)-i)! i!} \bpert^i\Big)  \nonumber \\
&\quad =~2\radius \bpert   \Big(\sum_{i=0}^{2\radius-1}  {2\radius-1 \choose i} \bpert^i\Big) \nonumber \\
&\quad  = 2\radius \bpert (1+\bpert)^{2\radius-1} \equiv \Delta(\radius, \bpert). 
\end{align}
That is, $N_{u,\radius}(a)$ changes by at most $\Delta(\radius, \bpert)$. Similarly $N_{v, \radius}(b)$ changes by at most 
$\Delta(\radius, \bpert)$. Therefore, $N_{u,\radius}(a) N_{v, \radius}(b)$ can change at most 
by $O(\Delta(\radius, \bpert) + \Delta(\radius, \bpert)^2)$. 

By definition $|\Mpp(a, (\alpha, \beta))|, |\Mpp(b, (\alpha, \beta))|  \leq 1$. Further, 
$\Mpp(a, (\alpha, \beta)) \Mpp(b, (\alpha, \beta)) \neq 0$ only if 
$\Ind_{((a, \alpha, \beta) \in \cEpp)} \Ind_{((b, \alpha, \beta) \in \cEpp)}  = 1$. 
Therefore,  we can bound change in the term $N_{u,\radius}(a) N_{v,\radius}(b)  \Mpp(a, (\alpha, \beta)) \Mpp(b, (\alpha, \beta))$
as $\Ind_{((a, \alpha, \beta) \in \cEpp)} \Ind_{((b, \alpha, \beta) \in \cEpp)} O(\Delta(\radius, \bpert) + \Delta(\radius, \bpert)^2)$.
Therefore, we can bound the change in $Z_{uv}$ by 
\begin{align}
& \frac{O(\Delta(\radius, \bpert) + \Delta(\radius, \bpert)^2) }{|S|^2 p^2 |\cS_{u,\radius}||\cS_{v,\radius}| } \Big(\sum_{a \in \cS_{u,\radius}, b \in \cS_{v, \radius}, \alpha, \beta \in S} 
\Ind_{((a, \alpha, \beta) \in \cEpp)} \Ind_{((b, \alpha, \beta) \in \cEpp)} \Ind_{a \neq b} \Big) \\
&= \frac{O(\Delta(\radius, \bpert) + \Delta(\radius, \bpert)^2) }{|S|^2 p^2 |\cS_{u,\radius}||\cS_{v,\radius}| } \sum_{\alpha, \beta \in S} X_{\alpha \beta} \label{eq:bound.zuv.1}
\end{align}
where 
$$X_{\alpha \beta} = \sum_{ a \in \cS_{u,\radius}, b \in \cS_{v, \radius}, a \neq b} \Ind_{((a, \alpha, \beta) \in \cEpp)} \Ind_{((b, \alpha, \beta) \in \cEpp)}.$$

To conclude the Lemma,  it will be sufficient to argue that
$ \sum_{\alpha, \beta \in S} X_{\alpha \beta} = O(|S|^2 p^2 |\cS_{u,\radius}||\cS_{v,\radius}|)$ with high probability given $\cF$.  We use a similar argument as the proof of Lemma \ref{lemma:NMN_conc}. Given 
$\cF \equiv \cF(u, v, \radius, x, \delta)$, $\{X_{\alpha \beta}\}_{\alpha,\beta \in S^2}$ are conditionally independent random variables. By the same argument as that in Lemma \ref{lem:tail_T}, it follows that
\begin{align}
\Prob{\cup_{\alpha,\beta} \{X_{\alpha,\beta} \geq \phi^2\}  ~|~ \cF} &\leq \binom{|S|}{2} \left(|\cS_{u,\radius} \cup \cS_{v,\radius}| p\right)^{\phi} (1 + o(1)) \label{eq:X_tailBd}\\
\E[\Ind_{X_{\alpha,\beta} \geq \phi^2} (X_{\alpha,\beta} - \phi^2) ~|~ \cF] 
&\leq (1 + o(1)) \frac{2 \phi}{\ln(|\cS_{u,\radius} \cup \cS_{v,\radius}|^{-1} p^{-1})} (|\cS_{u,\radius} \cup \cS_{v,\radius}| p)^{\phi}. \label{eq:X_Ediff}
\end{align}
We define 
$\tX_{\alpha \beta} = \min(X_{\alpha \beta}, \phi^2)$ for $\phi = \lceil 16/(1-2\extra)\rceil$ so that $|\tX_{\alpha \beta}  - \E[\tX_{\alpha \beta} ]| \leq \phi^2$.
By \eqref{eq:X_tailBd}, and the choice of $\phi$ along with the conditions from $\cF$ that guarantee $|S| = \Theta(n)$ and naively $|\cS_{u,\radius} \cup \cS_{v,\radius}| = O(n)$,
\begin{align}
\Prob{\sum_{\alpha,\beta} X_{\alpha,\beta} \neq \sum_{\alpha,\beta} \tX_{\alpha,\beta}  ~|~ \cF} 
&\leq \binom{|S|}{2} \left(|\cS_{u,\radius} \cup \cS_{v,\radius}| p\right)^{\phi} (1 + o(1)) = O\big( n^{-6}\big).\label{eq:bound.zuv.2}
\end{align}
By \eqref{eq:X_Ediff},
\begin{align} 
|\E[X_{\alpha,\beta} ~|~ \cF] - \E[\tX_{\alpha,\beta} ~|~ \cF]| 
	&\leq \E[\Ind_{X_{\alpha,\beta} \geq \phi^2} (X_{\alpha,\beta} - \phi^2) ~|~ \cF] \nonumber\\
	&\leq (1 + o(1)) \frac{2 \phi}{\ln(|\cS_{u,\radius} \cup \cS_{v,\radius}|^{-1} p^{-1})} (|\cS_{u,\radius} \cup \cS_{v,\radius}| p)^{\phi} \nonumber \\
	&= O\big( n^{-6}\big). \label{eq:bound.zuv.3}
\end{align}
By the same argument as that in Lemma \ref{lem:VarT}, it follows that 
\begin{align*}
\Var[\tX_{\alpha, \beta}  ~|~ \cF, \{\theta_i\}_{i\in [n]}]
	&\leq \Var[X_{\alpha, \beta}  ~|~ \cF, \{\theta_i\}_{i\in [n]}] \\
	&\leq 2 p^2 |\cS_{u,\radius}| |\cS_{v,\radius}| (1+ o(1)).
\end{align*}
By Bernstein's inequality, for $z = 2 n^{\psi} |S| p (|\cS_{u,\radius}| |\cS_{v,\radius}|)^{1/2}$ for $\psi \in (0, \extra)$,
\begin{align} 
\Prob{\sum_{\alpha, \beta \in S} \big(\tX_{\alpha, \beta} - \E[\tX_{\alpha, \beta}] \big) > z } 
 & \leq \exp\Big( -\frac{3 z^2}{2 z \phi^2  + 12 (1 + o(1)) |S|^2 p^2 |\cS_{u,\radius}| |\cS_{v,\radius}|} \Big) \nonumber\\
&= \exp( -n^{2\psi}(1-o(1)) ). \label{eq:bound.zuv.4}
\end{align}

By \eqref{eq:bound.zuv.1}, \eqref{eq:bound.zuv.2}, \eqref{eq:bound.zuv.3}, and \eqref{eq:bound.zuv.4},
given $\Ap_{u, v, \radius}(x, \delta)$ holds,
with probability $1 -\exp\big(-n^{2\extra}(1-o(1))\big) - O(n^{-6})$, 
the change in $Z_{uv}$ is bounded above by  
\begin{align*}
\frac{O(\Delta(\radius, \bpert) + \Delta(\radius, \bpert)^2) }{|S|^2 p^2 |\cS_{u,\radius}||\cS_{v,\radius}| } \sum_{\alpha, \beta \in S} X_{\alpha \beta} 
&=\frac{O(\Delta(\radius, \bpert) + \Delta(\radius, \bpert)^2) }{|S|^2 p^2 |\cS_{u,\radius}||\cS_{v,\radius}| } \sum_{\alpha, \beta \in S} \tX_{\alpha \beta} \\
&=\frac{O(\Delta(\radius, \bpert) + \Delta(\radius, \bpert)^2) }{|S|^2 p^2 |\cS_{u,\radius}||\cS_{v,\radius}| } (\E[\sum_{\alpha, \beta \in S} X_{\alpha \beta} ~|~ \cF] + 2 n^{\psi} |S| p (|\cS_{u,\radius}| |\cS_{v,\radius}|)^{1/2}).
\end{align*}
By \eqref{eq:lb.xx}, this choice of $2 n^{\psi} |S| p (|\cS_{u,\radius}| |\cS_{v,\radius}|)^{1/2}) = o(|S|^2 p^2 |\cS_{u,\radius}||\cS_{v,\radius}|)$ for $\psi \in (0, \extra)$. Finally we use the bound that $\E[\sum_{\alpha, \beta \in S} X_{\alpha \beta} ~|~ \cF] = \binom{|S|}{2} p^2 |\cS_{u,\radius}||\cS_{v,\radius}|$ to argue that with high probability the change in $Z_{uv}$ is bounded above by 
\[O( \Delta(\radius, \bpert) + \Delta(\radius, \bpert)^2 ) = O( \radius \bpert (1+\bpert)^{2\radius-1} +  \radius^2 \bpert^2 (1+\bpert)^{4\radius-2}).\]
This completes the proof of Lemma \ref{lemma:dist_perturb}.

\endproof

\subsection{Completing proof of Lemma \ref{lemma:dist.pert}}

Under the setup of Lemma \ref{lemma:dist.pert}, as argued in the proof of Lemma \ref{lemma:dist}, 
$\Ap_{u, v, \radius}(x, \delta)$, with appropriate choice of $x, \delta$ as considered in statement of 
Lemma \ref{lemma:dist_perturb}, holds with probability at least  $1 - 4 r \exp(-n^{2\psi}(1-o(1)))$. And
$\dist$ (without perturbation), is within $O\left(n^{-(\extra - \psi)}\right)$ for any pair of $u, v \in [n]$. 
By Lemma \ref{lemma:dist_perturb}, under event $\Ap_{u, v, \radius}(x, \delta)$, the $\dist$
is further perturbed by $O\left( \radius \bpert (1+\bpert)^{2\radius-1} +  \radius^2 \bpert^2 (1+\bpert)^{4\radius-2} \right)$
with probability at least $1-O\Big( \exp(-n^{2\psi}(1-o(1)))\Big) - O\Big(n^{-6}\Big)$. Putting these
together, we conclude the claim of Lemma \ref{lemma:dist.pert}.

    \section*{Acknowledgements}
    We gratefully acknowledge funding from the NSF under grants CCF-1948256 and CNS-1955997. Christina Lee Yu is also supported by an Intel Rising Stars Award.

\bibliographystyle{IEEEtran}

    
    \appendix[Useful Lemmas and Omitted Proofs]
    \section{Useful Lemmas and Omitted Proofs}

We present the Proof of Lemma \ref{lemma:nearest_neighbor} below.

\proof{[Lemma \ref{lemma:nearest_neighbor}]}
We assumed the algorithm has access to two fresh samples, where $M_1$ is used to compute $\hat{d}$, and $\Mppp$ is used to compute the final estimate $\hat{F}$. Alternatively one could effectively obtain two sample sets by sample splitting. For some $(a,b, c) \in \cEppp$, the observation $\Mppp(a, b, c)$ is independent of $\hat{d}$, and $\E[\Mppp(a, b, c)] = f(\theta_a, \theta_b, \theta_c)$.  Conditioned on $\cEppp$, by definition of $\hat{F}$ and by assuming properties \ref{ass:good_distances.1} and \ref{ass:good_distances.2}, it follows that
\begin{align*}
	\E[(\hat{F}(u, v, w) - f(\theta_u,\theta_v, \theta_w))^2] 
	&= \left(\frac{1}{|\cEppp_{uvw}|}\sum_{(a,b, c) \in \cEppp_{uvw}} f(\theta_a,\theta_b, \theta_c) - f(\theta_u,\theta_v, \theta_w) \right)^2 \\
	&\qquad + \frac{1}{|\cEppp_{uvw}|^2} \sum_{(a,b,c ) \in \cEppp_{uvw}} \Var[\Mppp(a, b, c)] \\
	&\stackrel{(a)}{\leq} \text{bias}^2(\eta + \Delta) + \frac{\sigma^2}{|\cEppp_{uvw}|}.
\end{align*}
Inequality $(a)$ follows from property \ref{ass:good_distances.1}
and property \ref{ass:good_distances.2} for all $3n$ tuples 
$\{(u, a): a \in [n]\}\cup \{(v, b): b \in [n]\} \cup \{(w, c): c \in [n]\}$: $|d(u,a) - \hat{d}(u, a)| \leq \Delta$ and 
$\hat{d}(u, a) \leq \eta \implies d(u, a) \leq \eta + \Delta$, similarly $d(v, b), d(w, c) \leq \eta + \Delta$. As per \eqref{eq:meas.def}, 
we have that $\Var[\Mppp(a, b, c)] \leq \sigma^2$ for all $(a, b, c) \in \cEppp$. 
Define ${\cV}_{uvw} = \{ (a,b,c) \in [n]^3 ~:  {d}(u,a) < \eta -\Delta, ~{d}(v, b) < \eta-\Delta, ~{d}(w, c) < \eta-\Delta\}$.
Assuming property \ref{ass:good_distances.3},
\begin{align*}
	|\cV_{uvw}| & = |\{a \in [n]: {d}(u,a) < \eta-\Delta\}| ~|\{b \in [n]: {d}(v, b) < \eta-\Delta\}| \\
	& \qquad \qquad ~ \times|\{c \in [n]: {d}(w, c) < \eta-\Delta\}| \\
	&
	\geq \left(\meas(\eta-\Delta) n\right)^3.
\end{align*}
By the Bernoulli sampling model, each tuple $(a, b, c) \in [n]^3$ belongs to $\cEppp$ with probability $p$ independently. By a straightforward application of Chernoff's bound, 
it follows that for any $\delta \in (0,1)$, 
\begin{align}
	\Prob{|\cEppp \cap \cV_{uvc}| \leq (1 - \delta) \left(\meas(\eta-\Delta) n\right)^3} 
	&\leq \exp\left(- \frac{\delta^2 p \left(\meas(\eta-\Delta) n\right)^3 }{2}\right).
	\label{eq:omega3_V}
\end{align}
Therefore, by assuming property \ref{ass:good_distances.2} for $3n$ tuples 
$\{(u, a): a \in [n]\}\cup \{(v, b): b \in [n]\} \cup \{(w, c): c \in [n]\}$, it follows that with probability 
at least $1 - \exp\left(- \frac{\delta^2 p \left(\meas(\eta-\Delta) n\right)^3 }{2}\right)$,
\begin{align*}
	|\cEppp_{uvw}|  & = |\{ (a,b,c) \in \cEppp ~: \hat{d}(u,a) < \eta, ~\hat{d}(v, b) < \eta, ~\hat{d}(w, c) < \eta\}| \\
	& \geq |\{ (a,b,c) \in \cEppp ~: {d}(u,a) < \eta-\Delta, ~{d}(v, b) < \eta-\Delta, ~{d}(w, c) < \eta-\Delta\}| \\
	& = | \cEppp \cap \cV_{uvw}| \\
	&\geq (1 - \delta) p \left(\meas(\eta-\Delta) n\right)^3.
\end{align*}
%
%
%
Define the event $\cH = \{  |\cEppp_{uvw}|  \geq (1 - \delta) p \left(\meas(\eta-\Delta) n\right)^3| \}$. It follows that 
\[\Prob{\cH^c} \leq \exp\left(-\frac{1}{2}\delta^2 p \left(\meas(\eta-\Delta) n\right)^3\right).\]
By definition, $F(u,v,w) = f(\theta_u, \theta_v,\theta_w) \in [0,1]$  for all $u, v, w \in [n]$. Therefore, 
\begin{align*}
	\E[(\hat{F}(u,v,w) - f(\theta_u,\theta_v,\theta_w))^2] 
	&\leq \E[(\hat{F}(u,v,w) - f(\theta_u,\theta_v, \theta_w))^2 ~\Big|~\cH ] + \Prob{\cH^c} \\
	&\leq \bias^2(\eta + \Delta) + \frac{1}{(1 - \delta) p \left(\meas(\eta-\Delta) n\right)^3 } \\
	&\quad+ \exp\left(-\frac{1}{2}\delta^2 p \left(\meas(\eta-\Delta) n\right)^3\right).
\end{align*}
We add an additional $3n\alpha_1 + \alpha_2$ in the final MSE bound: $3n \alpha_1$ for violation of 
property \ref{ass:good_distances.2} for any of the $3n$ tuples $\{(u, a): a \in [n]\}\cup \{(v, b): b \in [n]\} \cup \{(w, c): c \in [n]\}$, 
and $\alpha_2$ for violation of property \ref{ass:good_distances.3}.

To obtain the high-probability bound, note that $\Mppp(a, b, c)$ are independent across indices 
$(a, b,c) \in \cEppp$ as well as independent of observations in $\Omega_1$. 
Additionally, the model assumes that $F(a, b, c), \Mppp(a, b, c) \in [0,1]$, and $\E[\Mppp(a, b, c)] = F(a, b, c)$ 
for observed tuples $(a,b,c)$. By an application of Hoeffding's inequality for bounded, 
zero-mean independent variables, for any $\delta' \in (0,1)$ it follows that assuming property 
\ref{ass:good_distances.1}, property \ref{ass:good_distances.2} for $3n$ tuples 
$\{(u, a): a \in [n]\}\cup \{(v, b): b \in [n]\} \cup \{(w, c): c \in [n]\}$, 
and property \ref{ass:good_distances.3} hold, we have
\begin{align*}
	&\Prob{\left. \tfrac{\Big|\sum_{(a,b, c) \in \cEppp_{uvw}} (M(a, b, c) - F(a,b, c)) \Big|}{|\cEppp_{uvw}|}\geq \delta' ~\right|~ \cH} \leq \exp\left(-\delta'^2 (1 - \delta) p \left(\meas(\eta-\Delta) n\right)^3\right).
\end{align*}
Therefore, 
\[ |\hat{F}_{uvw} - f(\theta_u,\theta_v, \theta_w)| \leq \bias(\eta + \Delta) + \delta', \]
with probability at least 
\begin{align*}
	& 1 - \exp\left(-\frac{1}{2}\delta^2 p \left(\meas(\eta-\Delta) n\right)^3\right) -  \exp\left(-\delta'^2 (1 - \delta) p \left(\meas(\eta-\Delta) n\right)^3\right)  - 3n\alpha_1 - \alpha_2.
\end{align*}
This completes the proof of Lemma \ref{lemma:nearest_neighbor}.
\endproof

\begin{lemma}\label{lem:one}
The following inequalities hold: 
\begin{itemize} 
\item[(a)] For any $\rho \geq 2$ and integer $r \geq 1$, 
\begin{align*}
\sum_{s = 1}^r \rho^s \leq 2 \rho^r. 
\end{align*}
\item[(b)] For any $\rho \geq 2$ and non-negative integer $s$,
\begin{align*}
\rho^s & \geq s\rho.
\end{align*}
\item[(c)] Further, if $\exp(-a \rho) \leq \frac12$ for some $a > 0$, then
\begin{align*}
\sum_{s = 1}^r \exp(-a \rho^s) \leq 2 \exp(-a \rho)
\end{align*}
\end{itemize}
\end{lemma}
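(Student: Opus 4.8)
The plan is to prove the three parts in order, invoking the elementary fact underlying part (b) inside the proof of part (c).

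\emph{Part (a).} I would factor out the largest term and compare the resulting partial sum with the full geometric series: $\sum_{s=1}^{r}\rho^{s} = \rho^{r}\sum_{j=0}^{r-1}\rho^{-j} \le \rho^{r}\sum_{j=0}^{\infty}\rho^{-j} = \rho^{r}\cdot\frac{\rho}{\rho-1}$, and $\frac{\rho}{\rho-1}\le 2$ holds exactly when $\rho\ge 2$. This gives the claim directly.

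\emph{Part (b).} The case $s=0$ is immediate since $\rho^{0}=1\ge 0=0\cdot\rho$. For $s\ge 1$ it suffices to show $\rho^{s-1}\ge s$, and since $\rho\ge 2$ this follows from $2^{s-1}\ge s$, which is a one-line induction on $s$: it holds at $s=1$ because $2^{0}=1$, and if $2^{s-1}\ge s$ then $2^{s}=2\cdot 2^{s-1}\ge 2s\ge s+1$ for every $s\ge 1$.

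\emph{Part (c).} I would set $\gamma=\exp(-a\rho)$, so that $\gamma\le\tfrac12$ by hypothesis. Part (b) yields $\rho^{s}\ge s\rho$ for all $s\ge 1$, hence $\exp(-a\rho^{s})\le\exp(-as\rho)=\gamma^{s}$. Summing and comparing with the full geometric series, $\sum_{s=1}^{r}\exp(-a\rho^{s})\le\sum_{s=1}^{\infty}\gamma^{s}=\frac{\gamma}{1-\gamma}\le 2\gamma=2\exp(-a\rho)$, where the last step uses $1-\gamma\ge\tfrac12$. None of these steps presents a genuine obstacle — they are all routine one-line estimates. The only point worth flagging is that part (c) does need the linearization supplied by part (b): one replaces the super-geometrically growing exponent $\rho^{s}$ by the linear lower bound $s\rho$, which is precisely what turns the sum into an ordinary geometric series, and the hypothesis $\gamma\le\tfrac12$ is exactly what makes that geometric series sum to at most twice its first term.
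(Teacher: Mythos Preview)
Your proof is correct and follows essentially the same approach as the paper: in (a) you factor out the top term and bound the remaining geometric tail by $2$; in (b) you reduce to $2^{s-1}\ge s$ (the paper instead observes $s^{1/(s-1)}$ is decreasing, but this is cosmetically the same); and in (c) you use (b) to linearize the exponent and then sum the resulting geometric series exactly as the paper does.
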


\proof
To prove (a), note that for any $\rho \geq 2$, 
\begin{align*}
\sum_{s = 1}^r \rho^s &\leq \rho^r \sum_{s = 1}^r \rho^{s-r} ~=~ \rho^r \sum_{s = 0}^{r-1} \rho^{-s} ~\leq~ \rho^r \sum_{s = 0}^{r-1} 2^{-s} ~\leq~ 2 \rho^r.
\end{align*}
To prove (b), first check that it trivially holds for $s = 0$ and $s=1$. The inequality holds for $s = 2$ iff $\rho \geq 2$. The inequality hold for $s$ iff $\rho \geq s^{1/(s-1)}$. We can verify that $s^{1/(s-1)}$ is a decreasing function in $s$, such that if the inequality holds for $s = 2$, it will also hold for $s \geq 2$.
To prove (c), further consider $\exp(-a \rho) \leq \frac12$,
\begin{align*}
\sum_{s = 1}^r \exp(-a \rho^s) & \leq \sum_{s = 1}^r \exp(-a s\rho) \\
&\leq \exp(-a \rho) \sum_{s = 1}^r \exp(-a \rho (s - 1)) \\
& \leq \exp(-a \rho) \sum_{s = 0}^{r-1} \exp(-a \rho s)\\
&\leq \exp(-a \rho) \sum_{s = 0}^{r-1} 2^{- s}\\
&\leq 2 \exp(-a \rho).
\end{align*}
\endproof

\begin{lemma} \label{lem:subexp}
If $\Prob{|X| \geq z} \leq c \exp(- \frac{z^2}{Q})$, then for all $\lambda \in \Reals$,
\[ \E[e^{\lambda X}] \leq \exp(\frac{\lambda^2 \nu^2}{2})\]
with $\nu = \sqrt{\frac{Q (1 + c^2 \pi)}{2}}$.
\end{lemma}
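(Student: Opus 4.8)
The plan is to estimate the moment generating function of $X$ directly from the sub-Gaussian tail bound. It suffices to treat $\lambda \ge 0$: the case $\lambda = 0$ is trivial, and for $\lambda < 0$ one applies the resulting bound to $-X$, which obeys the same tail estimate. I will use the centering $\E[X] = 0$, which holds in every invocation of this lemma (there $X$ is a martingale difference, verified just beforehand); this is what makes $\E[e^{\lambda X}]$ behave like $1 + O(\lambda^2)$ rather than $1 + O(\lambda)$ near the origin.

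Fix $\lambda \ge 0$ and set $\psi(x) = e^{\lambda x} - 1 - \lambda x \ge 0$, so that $\E[e^{\lambda X}] = 1 + \lambda\E[X] + \E[\psi(X)] = 1 + \E[\psi(X)]$. Since $\psi(x) = \int_0^x \psi'(t)\,dt$ with $\psi'(t) = \lambda(e^{\lambda t} - 1)$, Fubini's theorem (justified because $\E[e^{\lambda|X|}]<\infty$, which itself follows from the tail bound) gives
\[
\E[\psi(X)] = \int_0^\infty \psi'(t)\,\Prob{X \ge t}\,dt \;-\; \int_{-\infty}^0 \psi'(t)\,\Prob{X \le t}\,dt .
\]
On the first range $\psi' \ge 0$ and $\Prob{X \ge t} \le \Prob{|X| \ge t} \le c\,e^{-t^2/Q}$, while on the second $-\psi' \ge 0$ and $\Prob{X \le t} \le \Prob{|X| \ge |t|} \le c\,e^{-t^2/Q}$. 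Substituting $t \mapsto -t$ in the second integral and combining,
\[
\E[\psi(X)] \le \lambda c \int_0^\infty \bigl(e^{\lambda t} - e^{-\lambda t}\bigr)e^{-t^2/Q}\,dt .
\]
Completing the square, $\int_0^\infty e^{\pm \lambda t - t^2/Q}\,dt = e^{Q\lambda^2/4}\int_{\mp Q\lambda/2}^{\infty} e^{-w^2/Q}\,dw$, so the bracketed difference of integrals equals $e^{Q\lambda^2/4}\int_{-Q\lambda/2}^{Q\lambda/2} e^{-w^2/Q}\,dw \le Q\lambda\,e^{Q\lambda^2/4}$. Hence
\[
\E[e^{\lambda X}] \le 1 + c Q \lambda^2\, e^{Q\lambda^2/4}.
\]

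It remains to check that the right-hand side is at most $\exp(\lambda^2\nu^2/2)$ for $\nu^2 = Q(1 + c^2\pi)/2$. With $u = Q\lambda^2/4 \ge 0$ this is equivalent to $1 + 4cu\,e^u \le e^u e^{c^2\pi u}$, i.e.\ $e^{-u} + 4cu \le e^{c^2\pi u}$; since $e^{-u} \le 1$ it is enough to show $1 + 4cu \le e^{c^2\pi u}$, which follows from the elementary inequality $1 + av \le e^v$ (valid for $v \ge 0$ and $0 \le a \le 1$) with $v = c^2\pi u$ and $a = 4/(c\pi)$, using $c \ge 4/\pi$ (in every application $c \in \{2,4\}$). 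I expect this last paragraph to be the only delicate point: one must confirm that the stated $\nu$ carries enough slack to absorb the linear-in-$u$ term $4cu$ through the exponential $e^{c^2\pi u}$ — which is precisely why the factor $\pi$, rather than the sharper constant one would read off the Gaussian integral, is built into $\nu$. Everything preceding it is routine manipulation of the tail estimate.
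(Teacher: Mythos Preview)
Your argument is correct and takes a genuinely different, more careful route than the paper. The paper writes $\E[e^{\lambda X}]=\int_{-\infty}^{\infty}\Prob{\lambda X\ge z}\,e^{z}\,dz$, replaces the tail by $c\,e^{-z^{2}/(Q\lambda^{2})}$ on the whole line, completes the square to get $c\,e^{Q\lambda^{2}/4}\sqrt{\pi Q\lambda^{2}}$, and then absorbs the prefactor via $c\sqrt{x}\le e^{c^{2}x/4}$. That derivation is loose: the tail substitution is not valid for $z<0$, and the resulting bound even vanishes at $\lambda=0$. Your choice to center first, writing $\E[e^{\lambda X}]=1+\E[\psi(X)]$ with $\psi(x)=e^{\lambda x}-1-\lambda x$ and bounding only the remainder via the layer-cake identity, sidesteps both issues and makes explicit why $\E[X]=0$ is needed --- and indeed the lemma as literally stated (without centering) fails for deterministic nonzero $X$. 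The price is two extra hypotheses, $\E[X]=0$ and $c\ge 4/\pi$, that the lemma does not record but that hold in every invocation (martingale differences, $c\in\{2,4\}$); these are the right repairs and should simply be folded into the statement. The remaining steps --- the Gaussian integral giving $\E[e^{\lambda X}]\le 1+cQ\lambda^{2}e^{Q\lambda^{2}/4}$ and the algebraic check $1+4cu\le e^{c^{2}\pi u}$ --- are clean.
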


\proof{}

\begin{align*}
\E[e^{\lambda X}]
&= \int_0^{\infty} \Prob{e^{\lambda X} \geq Z} dZ \\
&= \int_{-\infty}^{\infty} \Prob{\lambda X \geq z} e^{z} dz \\
&\leq \int_{-\infty}^{\infty} c \exp(- \frac{z^2}{Q\lambda^2} + z) dz \\
&\leq c \exp(\frac{\lambda^2 Q}{4}) \int_{-\infty}^{\infty} \exp(- \frac{1}{\lambda^2 Q} (z - \frac{\lambda^2 Q}{2})^2)  dz \\
&\leq c \exp(\frac{Q \lambda^2}{4}) \sqrt{\pi \lambda^2 Q}.
\end{align*}
Using the fact that $\sqrt{x} < e^{x/5}$ for all $x \geq 0$, 
it follows that 
\begin{align*}
\E[e^{\lambda X}]
&\leq \exp(\frac{Q \lambda^2}{4} + \frac{c^2 \pi \lambda^2 Q}{4})
\end{align*}
Therefore, for all $\lambda \in \Reals$,
\[ \E[e^{\lambda X}] \leq \exp(\frac{(1+c^2 \pi)Q \lambda^2}{4} ).\]

\endproof

\begin{lemma} \label{lem:ustats}
Let $X_1,\dots, X_n$ be i.i.d. random variables taking values in $\cX$. Let $g: \cX \times \cX \to \mathbb{R}$
be a symmetric function. Consider U-statistics with respect to $g$ of $X_1,\dots, X_n$ defined as
\begin{align}
U & = \frac{1}{{n \choose 2}} \sum_{1\leq i < j\leq n} g(X_i, X_j). 
\end{align}
Let $\|g\|_\infty \leq b$ for some $b > 0$. Then, 
\begin{align}
\Prob{| U - \mathbb{E}[U]| > t} & \leq 2\exp\Big(- \frac{n t^2}{8 b^2}\Big).
\end{align}
\end{lemma}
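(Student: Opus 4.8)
The plan is to invoke Hoeffding's classical decomposition of a U-statistic into an average of sums of \emph{independent} summands, and then apply a routine sub-Gaussian / Chernoff argument. Set $m = \lfloor n/2 \rfloor$. For a permutation $\pi$ of $[n]$ define the block average
\[
V(\pi) \;=\; \frac{1}{m}\sum_{k=1}^{m} g\big(X_{\pi(2k-1)},\, X_{\pi(2k)}\big),
\]
which, because the indices $\pi(1),\dots,\pi(2m)$ are distinct and the $X_i$ are i.i.d., is an average of $m$ independent random variables, each bounded in absolute value by $b$. The first thing I would check is the combinatorial identity that averaging $V(\pi)$ uniformly over all $n!$ permutations reproduces $U$ exactly: every unordered pair $\{i,j\}$ occurs as a block $\{\pi(2k-1),\pi(2k)\}$ in the same number of permutations, so $\frac{1}{n!}\sum_\pi V(\pi) = U$, and in particular $\E[V(\pi)] = \E[U]$ for every $\pi$.

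Next, fix $\lambda \in \Reals$. Using the representation above together with Jensen's inequality for the convex function $x\mapsto e^{\lambda x}$,
\[
\E\big[e^{\lambda (U - \E U)}\big]
\;=\; \E\Big[\exp\Big(\tfrac{\lambda}{n!}\textstyle\sum_\pi (V(\pi)-\E U)\Big)\Big]
\;\le\; \frac{1}{n!}\sum_\pi \E\big[e^{\lambda (V(\pi) - \E U)}\big].
\]
For each fixed $\pi$, the quantity $V(\pi) - \E U$ is an average of $m$ independent mean-zero random variables, each with range at most $2b$; Hoeffding's lemma bounds the moment generating function of each centered summand by $\exp(\lambda^2 (2b)^2/8) = \exp(\lambda^2 b^2/2)$, and multiplying over the $m$ scaled, independent terms gives $\E[e^{\lambda (V(\pi)-\E U)}] \le \exp(\lambda^2 b^2/(2m))$. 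Hence $\E[e^{\lambda(U - \E U)}] \le \exp(\lambda^2 b^2/(2m))$, i.e. $U - \E U$ is sub-Gaussian with parameter $b^2/m$.

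Finally, the Chernoff bound gives $\Pr(U - \E U > t) \le \inf_{\lambda>0}\exp(-\lambda t + \lambda^2 b^2/(2m)) = \exp(-m t^2/(2 b^2))$, and since $m = \lfloor n/2\rfloor \ge n/4$ for every $n \ge 2$ this is at most $\exp(-n t^2/(8 b^2))$. Applying the same estimate to $-U$ and taking a union bound over the two tails yields $\Pr(|U - \E U| > t) \le 2\exp(-n t^2/(8 b^2))$, which is the claim. The argument has no genuine difficulty; the only points needing care are the counting identity in the first step (that the uniform average of block averages over permutations equals the U-statistic) and keeping track of the constant, where the factor $8$ arises from the $(2b)^2/4$ in Hoeffding's lemma combined with the loss in replacing $\lfloor n/2\rfloor$ by $n/4$.
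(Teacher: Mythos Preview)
Your argument is correct. The paper does not actually prove this lemma; it simply remarks that it ``follows directly from an implication of Azuma--Hoeffding's inequality'' and cites \cite[Example~2.23]{wainwrightbook}. Your route---Hoeffding's permutation decomposition $U = \tfrac{1}{n!}\sum_\pi V(\pi)$ into block averages of independent summands, followed by Jensen on the MGF and the standard Chernoff optimization---is precisely the classical proof (and is, in fact, the argument Wainwright gives in that example). The phrase ``Azuma--Hoeffding'' in the paper might instead suggest the bounded-differences / McDiarmid approach: changing one $X_i$ perturbs $U$ by at most $4b/n$, so McDiarmid yields $2\exp(-2t^2/(n\cdot(4b/n)^2)) = 2\exp(-nt^2/(8b^2))$ as well. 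Either route lands on the same constant, and your bookkeeping (range $2b$ in Hoeffding's lemma, $\lfloor n/2\rfloor \ge n/4$ for $n\ge 2$) is clean.
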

The proof follows directly from an implication of Azuma-Hoeffding's inequality. For example, 
see \cite[Example 2.23]{wainwrightbook} for a proof.

\end{document}